\documentclass[11pt]{article}

\usepackage[a4paper,margin=1in]{geometry}
\usepackage[T1]{fontenc}
\usepackage[utf8]{inputenc}
\usepackage{lmodern}
\usepackage{microtype}
\usepackage{amsmath,amssymb,amsthm,mathtools,bm}
\usepackage{dsfont}
\usepackage{graphicx}
\usepackage{booktabs}
\usepackage{longtable}
\usepackage{array}
\usepackage{enumitem}
\usepackage{xcolor}
\usepackage[numbers,sort&compress]{natbib}
\usepackage[hidelinks]{hyperref}

\newif\ifusecolors
\usecolorstrue

\newcommand{\mgreen}[1]{\ifusecolors{\color{green!60!black}#1}\else{#1}\fi}
\newcommand{\mred}[1]{\ifusecolors{\color{red!70!black}#1}\else{#1}\fi}
\newcommand{\mblue}[1]{\ifusecolors{\color{blue!70!black}#1}\else{#1}\fi}
\newcommand{\mbgreen}[1]{\ifusecolors{\color{green!60!black}\bm{#1}}\else{\bm{#1}}\fi}
\newcommand{\mbred}[1]{\ifusecolors{\color{red!70!black}\bm{#1}}\else{\bm{#1}}\fi}
\newcommand{\mbblue}[1]{\ifusecolors{\color{blue!70!black}\bm{#1}}\else{\bm{#1}}\fi}

\DeclareMathOperator*{\argmin}{arg\,min}
\newcommand{\E}{\mathbb{E}}

\theoremstyle{plain}
\newtheorem{theorem}{Theorem}[section]
\newtheorem{lemma}[theorem]{Lemma}
\newtheorem{proposition}[theorem]{Proposition}
\newtheorem{corollary}[theorem]{Corollary}

\theoremstyle{definition}
\newtheorem{definition}[theorem]{Definition}

\title{Intensity Dot Product Graphs}
\author{
  Giulio Valentino Dalla Riva\thanks{Baffelan OU, Noumea, New Caledonia; \texttt{me@gvdallariva.net}}
  \and
  Matteo Dalla Riva\thanks{Dipartimento di Tecnica e Gestione dei Sistemi Industriali, Universit\`a di Padova, Vicenza, Italy}
}
\date{}

\begin{document}
\maketitle

\begin{abstract}
Latent-position random graph models usually treat the node set as fixed once the sample size is chosen, while graphon-based and random-measure constructions allow more randomness at the cost of weaker geometric interpretability. We introduce \emph{Intensity Dot Product Graphs} (IDPGs), which extend Random Dot Product Graphs by replacing a fixed collection of latent positions with a Poisson point process on a Euclidean latent space. This yields a model with random node populations, RDPG-style dot-product affinities, and a population-level intensity that links continuous latent structure to finite observed graphs. We define the heat map and the desire operator as continuous analogues of the probability matrix, prove a spectral consistency result connecting adjacency singular values to the operator spectrum, compare the construction with graphon and digraphon representations, and show how classical RDPGs arise in a concentrated limit. Because the model is parameterized by an evolving intensity, temporal extensions through partial differential equations arise naturally.
\end{abstract}

\section{Introduction}

Statistical network models face a fundamental modeling choice: what is random?
In the dominant paradigm~\cite{newman2018networks}, nodes are treated as fixed objects, such as people in a social network, species in a food web, or neurons in a connectome, while edges are the outcome of a probabilistic mechanism.
This asymmetry is built into the most widely studied families: Erd\H{o}s--R\'enyi models, stochastic block models~\cite{airoldi2008mixed}, latent position models, and Random Dot Product Graphs (RDPGs)~\cite{young2007random,athreya2018statistical}.
Yet in many applications the \emph{identity} and \emph{number} of interacting entities are themselves stochastic.
Ecological communities assemble through colonization and extinction; transient encounters on a transport network involve passengers who appear and disappear; neurons fire in overlapping but shifting ensembles.
In such settings, the nodes of the observed graph are better described as \emph{samples from a random process} than as a fixed roster.

Several existing frameworks address parts of this issue.
Graphon and digraphon models~\cite{lovasz2012large,borgs2008convergent} assign random latent labels to sampled nodes, and random-measure models~\cite{caron2017sparse,veitch2015class} generate sparse random graphs with random vertex populations.
Spatial point process models~\cite{daley2003introduction,last2017lectures} provide a mature theory for random point configurations.
However, these frameworks do not simultaneously provide an explicit finite-dimensional geometric latent space, RDPG-style dot-product affinities, and a continuous population-level object that links latent structure to finite observed graphs.
In graphon formulations, latent geometry is only defined up to measure-preserving rearrangement; in random-measure models, connectivity is not tied to Euclidean latent coordinates with the same direct interpretability.

In this paper, we introduce the family of \textbf{Intensity Dot Product Graphs} (IDPGs).
An IDPG extends the RDPG by replacing a fixed collection of latent positions with a Poisson point process on a latent position space $\mblue{\Omega} = \mgreen{B^d_+} \times \mred{B^d_+}$, governed by an intensity function $\mbblue{\rho}$.
Sampled individuals are located by their positions in the latent space, and the probability of a connection between two individuals is given by the dot product of their latent coordinates, preserving the interpretive structure of the RDPG.
The intensity $\mbblue{\rho}$ encodes the population-level distribution of interaction propensities, and the observed graph is a finite, noisy realization of this continuous object.

This construction yields several contributions.

\begin{enumerate}
\item \textbf{A generative framework bridging point processes and latent-position network models.}
We define IDPGs through two contrasting realization rules: \emph{perennial}, where long-lived entities can form all pairwise connections, and \emph{ephemeral}, where transient entities interact only in sampled pairs. We also introduce an intermediate regime based on entity lifetimes.
We derive closed-form expressions for expected edge counts and show that perennial graphs scale quadratically in the total intensity $\mblue{\Lambda}$, whereas ephemeral graphs scale linearly (Section~\ref{sec:expected-edges}).

\item \textbf{Rigorous comparison with graphon and digraphon theory.}
Every perennial IDPG admits a digraphon representation, but we prove that this representation necessarily destroys the local regularity of the latent space: any equivalent digraphon kernel fails to be of bounded variation, and global geometric coherence (Lipschitz continuity, Euclidean clustering) is lost.
This is not a technical inconvenience but a \emph{dimensional obstruction}: the one-dimensional label space of graphon theory cannot faithfully represent the higher-dimensional geometry of the IDPG latent space (Section~\ref{sec:graphon-obstruction}).

\item \textbf{The heat map: a measure-theoretic analogue of the probability matrix.}
We introduce the \emph{heat map} $\mathcal{H}$, a continuous operator that captures the full interaction structure of the model.
Its spectral decomposition reveals the dominant modes of interaction, and we prove a spectral consistency theorem: scaled singular values of the adjacency matrix converge to the singular values of the \emph{desire operator}, a normalized variant of the heat map, as the intensity grows (Section~\ref{sec:heatmap}).

\item \textbf{Temporal dynamics via PDEs on the intensity.}
Because the model is parameterized by a continuous intensity function, temporal evolution is naturally described by partial differential equations, including diffusion, advection, and pursuit-evasion dynamics, acting on $\mbblue{\rho}$.
We show that the ratio of perennial to ephemeral expected edges tracks the evolving intensity through time, regardless of the PDE regime, and verify this invariance computationally (Section~\ref{sec:time-indexing}).
\end{enumerate}

The framework is motivated by, and illustrated through, an ecological application: modeling food webs as IDPGs with mixture-of-products intensities representing distinct trophic species (Section~\ref{sec:foodwebs}).
In this context, the perennial/ephemeral distinction maps onto long-lived vs.\ transient ecological interactions, and the PDE dynamics describe shifts in community structure over time.

The paper is organized as follows.
Section~2 reviews Random Dot Product Graphs.
Section~3 defines Intensity Graphs, the perennial and ephemeral realization rules, and derives expected edge counts.
Section~4 establishes the relationship to graphons and digraphons, including the regularity obstruction theorems.
Section~5 introduces the heat map, its spectral decomposition, the desire operator, and the spectral consistency theorem.
Section~6 discusses the recovery of classical RDPGs as a limiting case.
Section~7 develops the ecological application.
Section~8 introduces PDE dynamics on the intensity.
Section~9 presents computational experiments verifying the theoretical predictions.
Sections~10 and~11 discuss inference and future directions.
Derivations and proofs are collected in the Appendix.

\section{Random Dot Product Graphs}

Let $G=(V,E)$ be a simple, directed graph, with nodes $i,j \in V = \{ 1, 2, \ldots \}$ and edges $(i \rightarrow j) \in E \subset V \times V$, where we consider only edges between distinct nodes ($i \neq j$).

\begin{figure}[htbp]
\centering
\begin{minipage}[t]{0.48\linewidth}
  \centering
  \includegraphics[height=0.67\linewidth]{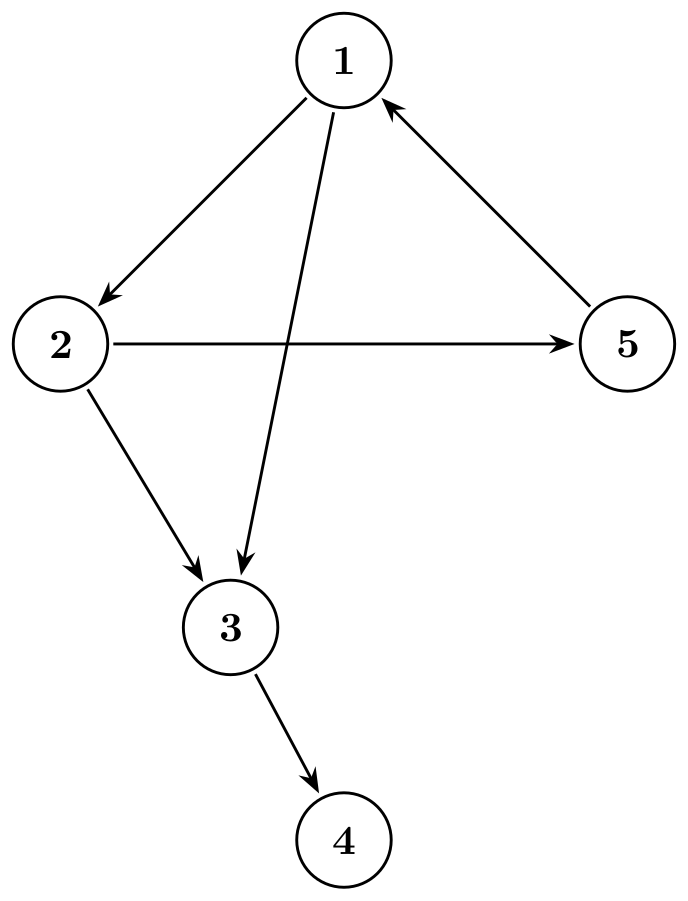}
  \caption{A simple, directed graph with 5 nodes and a bunch of edges.}
  \label{fig:graph}
\end{minipage}\hfill
\begin{minipage}[t]{0.48\linewidth}
  \centering
  \includegraphics[width=\linewidth]{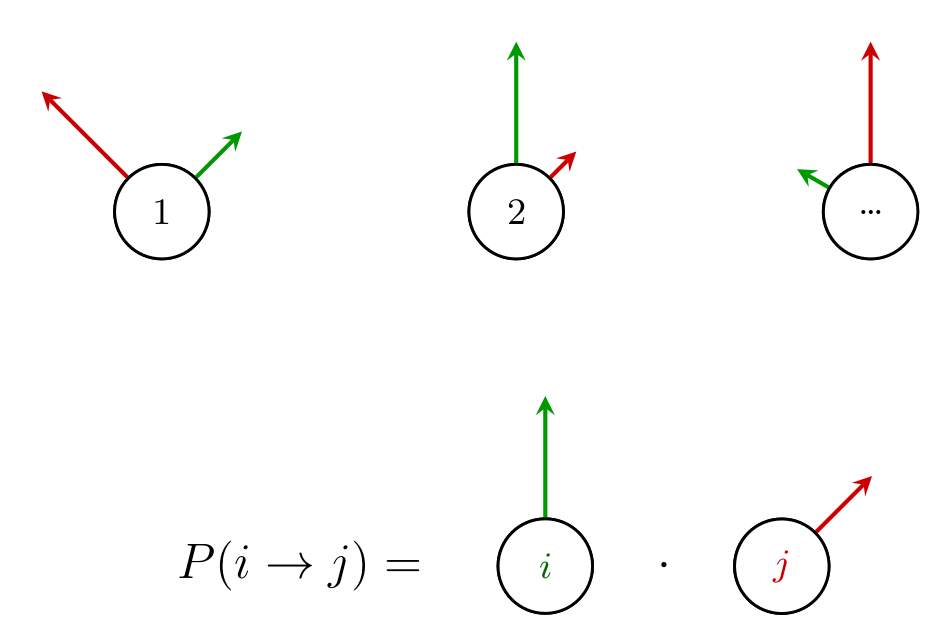}
  \caption{Each node of a graph is associated with a pair of vectors, one green and one red. The probability of observing an edge between two nodes is given by the dot product of a green and a red vector.}
  \label{fig:node_vectors}
\end{minipage}
\end{figure}

We'll consider graphs as the outcome of random processes.
This means that we associate to any possible graph a certain probability of being observed.
In particular, let $V$ be a given set $\{1, 2, \ldots, N\}$ of nodes, every ordered couple $(i,j)$ in $V \times V$ is in $E$ with a certain probability $p_{\text{ij}}$; we define the matrix of interaction probabilities $\mathbf{P}$ so that $\mathbf{P}_{\text{ij}} = p_{\text{ij}}$, and denote it $\mathbf{P}_G$ if we need to be explicit regarding what graph it is associated with.

Notice here that $\mathbf{P}$ completely determines the probability of observing a given graph $G=(V,E)$: the probability of $G$ will be given by the probability of observing exactly the links in $E$ and not observing the links not in $E$.

Random graph models are described by how they determine those interaction probabilities.

\subsection{RDPG as generating model}

In a Random Dot Product Graph (RDPG) model~\cite{young2007random}, each node is associated with two $d$-dimensional vectors, $\mgreen{\vec{g}_i}$ and $\mred{\vec{r}_i}$ (Figure~\ref{fig:node_vectors}). These vectors are chosen so that $\mgreen{\vec{g}_i} \cdot \mred{\vec{r}_j} \in [0,1]$ for every $(i,j) \in V \times V$. A pair of nodes $i,j$ is an edge in $E$ with probability $p_{i,j} = \mgreen{\vec{g}_i} \cdot \mred{\vec{r}_j}$.

We can then consider two matrices $\mbgreen{G}$ and $\mbred{R}$, where the rows $\mbgreen{G}_{i,\cdot}$ of $\mbgreen{G}$ are the vectors $g(i)$ and the columns $\mbred{R}_{\cdot,i}$ of $\mbred{R}$ are the vectors $r(i)$ for every $i$ in $V$. We have that the matrix multiplication
\begin{equation}
\mbgreen{G} \mbred{R} = \mathbf{P}
\end{equation}
and, hence, the two matrices $(\mbgreen{G}, \mbred{R})$ contain all the information of the random graph model (the number of the nodes is given by the number of rows of $\mbgreen{G}$, that is the number of columns of $\mbred{R}$).

It is convenient for our intuition to consider the vectors $\mgreen{\vec{g}_i}$ and $\mred{\vec{r}_i}$ as the node $i$ propensity to interact, either proposing or accepting a connection (Figure~\ref{fig:points}).
Furthermore, it is convenient to visualize each node $i$ as a pair of points in two $d$ dimensional metric spaces, that we will refer to (with some abuse of notation) as the green space $\mgreen{G}$ and the red space $\mred{R}$.
The coordinate of $i$ in these two spaces is given by $\mgreen{\vec{g}_i}$ and $\mred{\vec{r}_i}$. Hence, we can also see an RDPG model $(\mbgreen{G}, \mbred{R})$ as defined by a given set of $N$ points in the spaces $\mgreen{G}$ and $\mred{R}$, which offers a nice geometric representation of the random graph model.

\begin{figure}[htbp]
\centering
\includegraphics[width=0.7\linewidth]{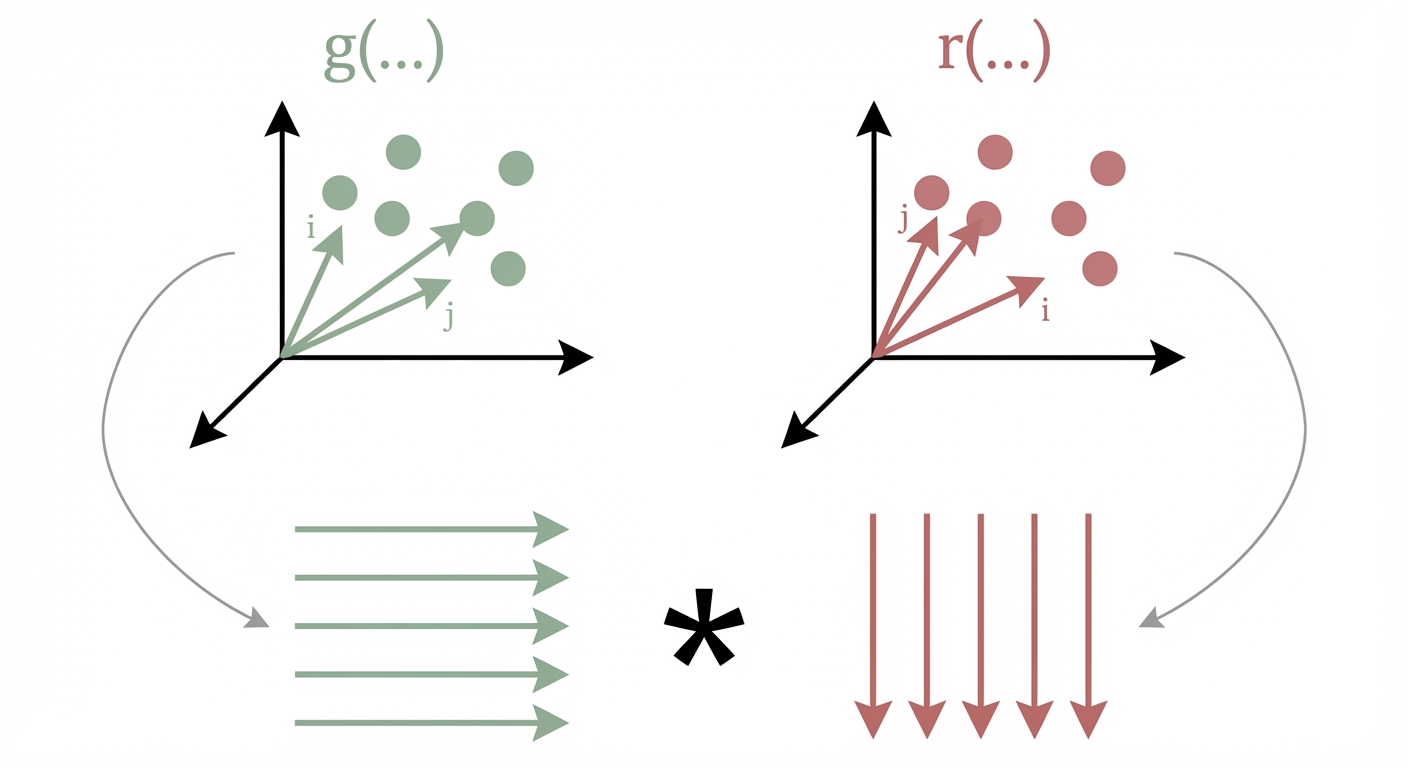}
\caption{The green and red vectors associated with the nodes define a set of points in two metric spaces. We define two matrices: one green, whose rows will be the coordinates of the green points, one red, whose columns will be the coordinates of the red points. Their matrix multiplication gives all the connection probabilities.}
\label{fig:points}
\end{figure}

To summarize, under a RDPG model, nodes are associated with points in a certain pair of spaces $\mgreen{G}$ and $\mred{R}$, and the probability of observing an edge between two points is given by the dot product $\mgreen{\vec{g}_i} \cdot \mred{\vec{r}_j}$.

\subsection{Inference of an RDPG model}

The inference of RDPG model parameters goes the other way round than the generation task.

Given an observed graph $G=(V,E)$ we are posed with the problem of identifying the two most likely matrices $(\mbgreen{G}, \mbred{R})$ that generated $G$.

This will be accomplished in two steps:
\begin{enumerate}
\item Infer the right dimension $d$ for the spaces $\mgreen{G}$ and $\mred{R}$ (notice: the dimension, not the number of points)
\item Infer the positions of the $N$ points in $\mgreen{G}$ and $\mred{R}$.
\end{enumerate}

First of all, we need to define the adjacent matrix of $G$. This is matrix $\mathbf{A}$ such that
\begin{equation}
\mathbf{A}_{i,j} = \begin{cases}
  1 & \text{if } i \rightarrow j \in E, \\
  0 & \text{otherwise.}
\end{cases}
\end{equation}

Classic results~\cite{athreya2018statistical} show that, under standard RDPG assumptions, both (i) and (ii) can be consistently estimated with elementary linear-algebraic tools based on the singular value decomposition of $\mathbf{A}$ (and indeed (i) is the most challenging!).

Let $\mathbf{A} = \mathbf{U} \boldsymbol{\Sigma} \mathbf{V}^T$ be the singular value decomposition of $\mathbf{A}$, that is $\mathbf{U}$ and $\mathbf{V}$ are orthogonal matrices and $\boldsymbol{\Sigma}$ is an $N \times N$ diagonal matrix with non-negative real coefficients on its diagonal in decreasing order. The elements $\sigma_i = \boldsymbol{\Sigma}_{i,i}$ are known as the singular values of $\mathbf{A}$.

An optimal dimension $\hat{d}$ can be inferred solely from the sequence of singular values $\sigma_i$. There are various techniques for doing it, and the technicality is left to the curious reader (see for example~\cite{zhu2006automatic,gavish2014optimal,chatterjee2015matrix}).

Let's define the two matrices $\mbgreen{\widetilde{G}} = \mathbf{U}|_{\hat{d}} \sqrt{\boldsymbol{\Sigma}|_{\hat{d}}}$ and $\mbred{\widetilde{R}} = \sqrt{\boldsymbol{\Sigma}|_{\hat{d}}} (\mathbf{V}|_{\hat{d}})^T$, where $M|_k$ is the truncation of a matrix $M$ to its first $k$ columns, and $\sqrt{\boldsymbol{\Sigma}}_{i,i} = \sqrt{\sigma_i}$ is the element-wise square root of $\boldsymbol{\Sigma}$.

Then, we have that
\begin{equation}
\begin{cases}
  \mbgreen{G} \approx \mbgreen{\widetilde{G}}, \\
  \mbred{R} \approx \mbred{\widetilde{R}}.
\end{cases}
\end{equation}
In particular, the matrix $\mathbf{\widetilde{A}} = \mbgreen{\widetilde{G}} \mbred{\widetilde{R}} \approx \mathbf{A}$
is optimal in the sense that it minimizes the distance to $\mathbf{A}$ in Frobenius norm, that is:
\begin{equation}
\mathbf{\widetilde{A}} = \argmin_{\mathbf{M} \text{ of rank } \hat{d}} \lVert \mathbf{A} - \mathbf{M} \rVert_F \;.
\end{equation}

To summarize, given an observed graph $G=(V,E)$, spectral methods based on singular value decomposition provide standard estimators of $(\mbgreen{G}, \mbred{R})$, up to the usual latent-space non-identifiabilities.

\section{Intensity Graphs}

Notice that in a RDPG model, while the edges are probabilistic, the nodes are not: their number and their identities, that is their propensities to propose and accept an edge, are completely determined by the model parameters.

Here we introduce the family of \emph{Intensity Graph} (IG) that start from a continuous setting in which nodes themselves are the outcome of a probability process.

\subsection{The latent space}

Before defining an IG, we address a technical constraint. In an RDPG, the vectors $\mgreen{g}(i)$ and $\mred{r}(j)$ must satisfy $\mgreen{g}(i) \cdot \mred{r}(j) \in [0,1]$ for all pairs of nodes, so that the dot product can be interpreted as an edge probability.

Since $\mgreen{\vec{g}_{\cdot}} \cdot \mred{\vec{r}_{\cdot}} = \|\mgreen{\vec{g}_{\cdot}}\| \cdot \|\mred{\vec{r}_{\cdot}}\| \cos \theta$, where $\theta$ is the angle between the vectors, two conditions suffice: (i) the norms are bounded by one, giving $\mgreen{\vec{g}_{\cdot}} \cdot \mred{\vec{r}_{\cdot}} \leq 1$; and (ii) the angle $\theta$ is at most $90^\circ$, ensuring $\mgreen{\vec{g}_{\cdot}} \cdot \mred{\vec{r}_{\cdot}} \geq 0$.

A canonical choice satisfying both conditions is the non-negative part of the closed unit ball:
\begin{equation}
B^d_+ = \{ x \in \mathbb{R}^d : x_k \geq 0 \text{ for all } k, \|x\| \leq 1 \}
\end{equation}
Any two vectors in the non-negative orthant $\mathbb{R}^d_+$ make an acute (or right) angle, satisfying (ii); the norm constraint satisfies (i).

Since all observable quantities depend only on inner products, the model is invariant under orthogonal transformations applied jointly to both the giving and receiving spaces. Restricting to $B^d_+$ is a convenient representation, not a fundamental constraint.

\subsection{Intensity Graphs as generating model}

In an Intensity Graph model, edges, nodes, and thus graphs emerge through stochastic processes in which individuals are sampled, and based on their affinity might establish connections.

\subsubsection{Individuals, positions, nodes}

Before defining an IDPG, we introduce some basic vocabulary and establish notation for the latent space. The intent is to link three different levels of discussion: the interpretation, the measure-theoretic/stochastic process, and the graph theory.

We express the model as revolving around the establishment of \emph{connections} between \emph{individuals}. These can be embodied in any form (from people listening and talking, to species consuming each other, to country and their commercial flows).

In terms of latent space, an individual is defined by its \emph{position} $i \in \mblue{\Omega}$, that is a point in the product space $\mblue{\Omega} = \mgreen{B^d_+} \times \mred{B^d_+}$.
Each individual's position has two coordinates:
\begin{itemize}
\item A \emph{green coordinate} $\mgreen{\vec{g}_i} \in B^d_+$: the propensity to \emph{give} connections (as a source)
\item A \emph{red coordinate} $\mred{\vec{r}_i} \in B^d_+$: the propensity to \emph{receive} connections (as a target)
\end{itemize}

We write $i = (\mgreen{\vec{g}_i}, \mred{\vec{r}_i})$ for the position of a specific individual, and $(\mgreen{\vec{g}}, \mred{\vec{r}})$ for a generic point in $\mblue{\Omega}$ when we don't refer to any specific individual.

In terms of graphs, an individual is represented as a node, and a connection as an edge.

\subsubsection{General definition}

An Intensity Graph separates three components: how interaction opportunities arise (realization rule), where individuals position concentrate (intensity), and how interaction are established as connection (affinity kernel). The realization rule and intensity together determine which pairs of individuals have the \emph{opportunity} to interact; the affinity kernel then determines which interaction become actual connections, and thus edges in the graph.

\begin{definition}[IDPG]\label{def:idpg}
An Intensity Graph is specified by:

\begin{enumerate}
\item A \emph{realization rule} $\mathbf{R}$ that determines how interactions, that is connection opportunities, arise. The rule specifies whether all sampled individuals can interact with each other (perennial, $\mathbf{R}_\infty$), only individuals sampled together as pairs can interact (ephemeral, $\mathbf{R}_0$), or something intermediate.

\item An \emph{intensity} $\mbblue{\rho}: \mblue{\Omega} \rightarrow \mathbb{R}_+$ describing the density of individuals' position across $\mblue{\Omega} = \mgreen{B^d_+} \times \mred{B^d_+}$. We will detail reasonable smoothness requirements below. In general, high $\mbblue{\rho(i)}=\mbblue{\rho}(\mgreen{\vec{g}_i}, \mred{\vec{r}_i})$ means individuals with positions near $i=(\mgreen{\vec{g}_i}, \mred{\vec{r}_i})$ are more likely to participate in interactions.

\item An \emph{affinity kernel} $K: \mblue{\Omega} \times \mblue{\Omega} \rightarrow [0,1]$ giving the probability that an interaction between two individuals becomes a realized edge. In particular, in an \textbf{Intensity Dot Product Graph} (IDPG), the kernel is given by the dot product: in an interaction between $s$ and $t$, being $s$ the individual proposing the connection (the source of the interaction), and $t$ the individual accepting it (the target of the interaction), we have that:
\begin{equation}
K(s,t) = \mgreen{\vec{g}_s} \cdot \mred{\vec{r}_t}
\end{equation}
The connection probability depends on the \emph{green} coordinate of the source $s$ and the \emph{red} coordinate of the target $t$.
\end{enumerate}

Given these components, an Intensity Graph generates a random graph in two stages:
\begin{itemize}
\item \emph{Stage 1 (Interactions)}: The realization rule $\mathbf{R}$, operating on the intensity $\mbblue{\rho}$, produces a random set of ordered pairs $(s, t)$ of positions that interact.
\item \emph{Stage 2 (Connections)}: Each interaction $(s, t)$ independently becomes a connection with probability $K(s,t)$.
\end{itemize}
\end{definition}

The realization rule $\mathbf{R}$ converts the total mass of the intensity into interactions; different rules produce different numbers and distributions of interactions from the same $\mbblue{\rho}$.

\subsubsection{The lifetime perspective}

The choice of realization rule can be understood through a unifying physical picture: \emph{individual lifetime}.

Imagine individuals are born over time, persist for some time $\tau$, and then disappear. Two individuals can interact only if their lifetimes overlap. The mean lifetime $\eta$ determines how many interactions arise:

\begin{itemize}
\item \textbf{Perennial individuals} ($\eta \rightarrow \infty$): All individuals coexist, so every pair can interact. With $N$ individuals, we get $N^2$ interactions.
\item \textbf{Intermediate lifetime} ($0 < \eta < \infty$): Some pairs overlap, some don't. The number of interactions interpolates between the extremes.
\item \textbf{Ephemeral individuals} ($\eta \rightarrow 0$): Individuals exist only instantaneously. Independent individuals never overlap; interactions occur only when individuals are ``born as pairs'' (that is, sampled directly as an interaction source and target). Each interaction consumes two individual-equivalents, yielding $2 N$ opportunities from total intensity that produced $N$ individuals.
\end{itemize}

We now define the two limiting realization rules, and the intermediate one, precisely. We denote $\mblue{\Lambda}$ the total mass of the intensity, that is
$\mblue{\Lambda} = \int_{\mblue{\Omega}} \mbblue{\rho}(s) \, ds = \int_{\mblue{\Omega}} \mbblue{\rho}(\mgreen{\vec{g}}, \mred{\vec{r}}) \, d\mgreen{\vec{g}} \, d\mred{\vec{r}}$.

\paragraph{Perennial rule ($\mathbf{R}_\infty$)}

Sample individuals from a Poisson Point Process (PPP) on $\mblue{\Omega}$ with intensity $\mbblue{\rho}$:
\begin{equation}
N \sim \text{Poisson}(\mblue{\Lambda}), \quad \text{positions } (\mgreen{\vec{g}_i}, \mred{\vec{r}_i}) \overset{\text{i.i.d.}}{\sim} \mbblue{\rho} / \mblue{\Lambda}
\end{equation}

The $N$ sampled individuals become the \emph{nodes} of the graph. Every ordered pair of individuals $(i, j)$ constitutes an interaction, with connection probability given by the affinity kernel $\mgreen{\vec{g}_i} \cdot \mred{\vec{r}_j}$.

All ordered pairs of nodes have a chance to interact, hence $N^2$ potential edges.

Interactions are \emph{conditionally independent given the node positions}, but \emph{marginally dependent}. Conditional on $(\mgreen{\vec{g}_i}, \mred{\vec{r}_i})_{i=1}^N$, each edge $i \rightarrow j$ is an independent Bernoulli trial. However, marginally (integrating over random positions), edges sharing a node are correlated: observing that node $i$ has high out-degree reveals information about $\mgreen{\vec{g}_i}$, which affects probabilities for other edges from $i$.

The perennial rule does not generate a PPP for the interactions. Indeed, their number is not Poisson in itself, but it is quadratic in a Poisson random variable (namely, the number of individuals), and interactions are marginally correlated through shared individuals.

In the perennial rule, the total intensity $\mblue{\Lambda}$ equals the expected number of sampled individuals: $\E[N] = \mblue{\Lambda}$
and the intensity $\mbblue{\rho}$ introduces a natural scale: multiplying $\mbblue{\rho}$ by a constant $c$ scales $\E[N]$ by $c$ and $\E[E]$ by $c^2$.

The perennial rule produces a classic random graph with persistent nodes:

\begin{itemize}
\item \textbf{Nodes can participate in multiple edges}: as source (via $\mgreen{\vec{g}}$) and as target (via $\mred{\vec{r}}$)
\item \textbf{Nontrivial topology}: paths, triangles, (large) connected components, varying degree distributions
\item \textbf{Isolated individuals}: A sampled individual may fail to form any connections, hence creating isolated nodes. Let $N_{\text{obs}}$ denote nodes with degree $\geq 1$. We have $N_{\text{obs}} \leq N$.
\end{itemize}

\begin{figure}[htbp]
\centering
\includegraphics[width=1.0\linewidth]{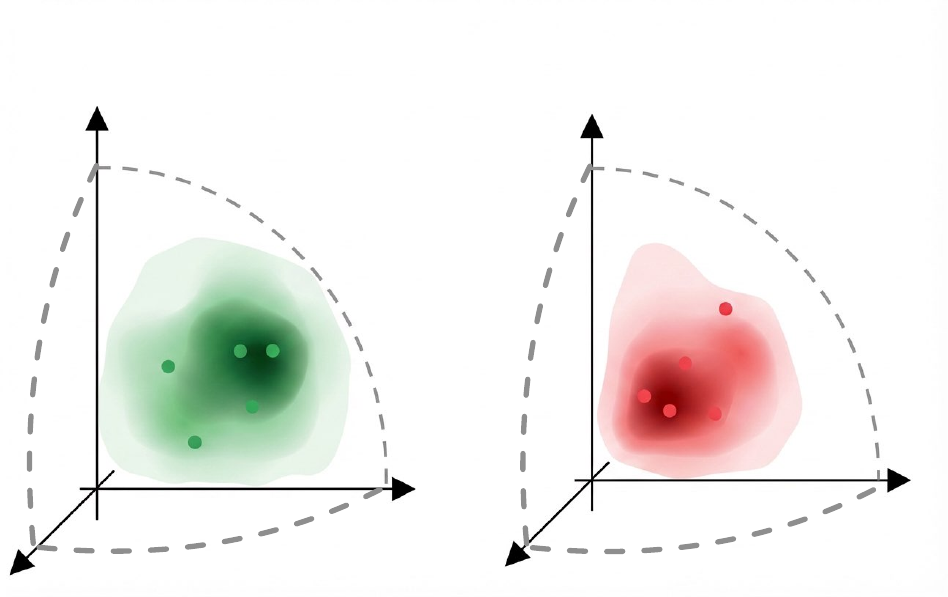}
\caption{From a set of points, we move toward intensity functions defining the expected density of individuals in the latent space. Regions of higher intensity will contain more individuals on average. The green coordinate describes propensity to propose connections; the red coordinate describes propensity to accept them.}
\label{fig:density}
\end{figure}

Depending on the fine modelling decision, the distinction between $N$ and $N_{\text{obs}}$ matters for inference: an observed graph reveals only nodes with positive degree. Nodes with weak propensities (small $\|\mgreen{\vec{g}}\|$ or $\|\mred{\vec{r}}\|$) have higher probability of isolation, so the observed population is biased toward nodes with stronger interaction propensities. This is analogous to zero-truncation in count data models.

\paragraph{Intermediate regime ($\mathbf{R}_\eta$): Finite lifetime}

A more complex case emerges when individuals live a finite, but not ephemeral, life. For example, individuals are sampled from a space-time PPP on $\mblue{\Omega} \times [0, W]$ (the latter being the observation window), with:
\begin{itemize}
\item Position intensity $\mbblue{\rho}$
\item Lifetime $\tau \sim \text{Exp}(\eta)$ (exponential with mean $\eta$, yet other choices are possible)
\end{itemize}

An individual born at time $T$ with lifetime $\tau$ is observed ``alive'' during $[T, T + \tau] \cap [0, W]$. Two individuals interact if and only if their lifetimes overlap.

Let $p_{\text{overlap}}(\eta, W)$ be the probability that two independently sampled individuals have overlapping lifetimes. For exponentially distributed lifetimes with mean $\eta$ and birth times uniform on $[0, W]$:
\begin{equation}
p_{\text{overlap}}(\eta, W) = \frac{2}{u^2} (u - 1 + e^{-u})
\end{equation}
where $u = W / \eta$.

Conditional on $N$ nodes being sampled, if we count ordered interaction opportunities including self-opportunities, the expected number of interacting opportunities is exactly $N^2 \cdot p_{\text{overlap}}(\eta, W)$ (excluding self-opportunities would replace $N^2$ by $N(N-1)$). Taking expectations over $N \sim \text{Poisson}(\mblue{\Lambda})$:
\begin{equation}
\E[\text{interactions}] = \E[N^2] \cdot p_{\text{overlap}}(\eta, W) = (\mblue{\Lambda}^2 + \mblue{\Lambda}) \cdot p_{\text{overlap}}(\eta, W)
\end{equation}

The limiting behavior confirms our interpretation:

As $\eta / W \rightarrow \infty$ (long-lived, $u \rightarrow 0$): Taylor expansion gives $p_{\text{overlap}} \rightarrow 1$, recovering the perennial scenario where all individuals coexist during the observation window.

As $\eta / W \rightarrow 0$ (ephemeral, $u \rightarrow \infty$): $p_{\text{overlap}} \approx 2 \eta / W \rightarrow 0$, and interaction opportunities vanish. The ephemeral rule emerges as the natural limiting model for instantaneous interactions. This interpolation is verified in Section~\ref{sec:simulations}.

\paragraph{Ephemeral rule ($\mathbf{R}_0$)}

In the ephemeral limit, individuals exist only instantaneously and can interact only if ``born together'' as a pair. We model this by sampling \emph{interaction pairs} rather than allowing all pairs to interact.

Sample $M \sim \text{Poisson}(\mblue{\Lambda} / 2)$ interaction pairs. For each pair, draw two independent positions:
\begin{equation}
(\mgreen{\vec{g}_i}, \mred{\vec{r}_i}), (\mgreen{\vec{g}_j}, \mred{\vec{r}_j}) \overset{\text{i.i.d.}}{\sim} \mbblue{\rho} / \mblue{\Lambda}
\end{equation}

The total number of individuals is $N = 2M$, so $\E[N] = \mblue{\Lambda}$ as in the perennial case.

\textbf{Connections within each sampled pair (ephemeral rule).} In the ephemeral rule, when two individuals $i$ and $j$ are sampled as an interaction pair, the following four potential edges are evaluated:
\begin{itemize}
\item $i \rightarrow j$ with probability $\mgreen{\vec{g}_i} \cdot \mred{\vec{r}_j}$
\item $j \rightarrow i$ with probability $\mgreen{\vec{g}_j} \cdot \mred{\vec{r}_i}$
\item $i \rightarrow i$ with probability $\mgreen{\vec{g}_i} \cdot \mred{\vec{r}_i}$ (self-loop)
\item $j \rightarrow j$ with probability $\mgreen{\vec{g}_j} \cdot \mred{\vec{r}_j}$ (self-loop)
\end{itemize}

In contrast, the perennial rule evaluates every ordered pair $(u, v)$ with
$u, v \in \{1, \ldots, N\}$, including self-pairs, so there are $N^2$
potential edges. The key distinction is which opportunities are evaluated:
perennial uses all ordered pairs, whereas ephemeral uses only the
$M = N/2$ sampled disjoint pairs.

The ephemeral rule produces a graph that decomposes into disconnected components with at most two nodes:

\begin{itemize}
\item \textbf{Disjoint pairs}: Each individual belongs to exactly one interaction pair; no node participates in interactions with multiple partners
\item \textbf{Rich local structure}: Within each pair, up to 4 edges can form (two cross-edges, two self-loops), yielding a non-trivial motif vocabulary
\item \textbf{No global connectivity}: Paths of length $> 1$ cannot exist; the graph is a disjoint union of small components
\end{itemize}

Yet, as we will see in the numerical results, meaningful aggregate structure emerges through the distribution of motif types across pairs and through post-hoc clustering or discretization of the latent space.

\subsection{Computing expected edges}\label{sec:expected-edges}

Despite their different generative mechanisms, both limiting rules admit clean formulas for expected edge counts.

\subsubsection{Perennial regime}

For the perennial rule, we use the \emph{second factorial moment formula}. For a Poisson process, the independence of counts in disjoint sets~\cite{daley2003introduction} implies:
\begin{equation}
\E\!\left[\sum_{i \neq j} f(x_i, x_j)\right] = \iint f(x, y) \, \lambda(dx) \, \lambda(dy)
\end{equation}

Applying this to edge counting with $f(s, t) = \mgreen{\vec{g}_s} \cdot \mred{\vec{r}_t}$:
\begin{equation}
\E[E]_{\mathbf{R}_\infty} = \iint_{\mblue{\Omega} \times \mblue{\Omega}} (\mgreen{\vec{g}_s} \cdot \mred{\vec{r}_t}) \, \mbblue{\rho}(s) \, \mbblue{\rho}(t) \, ds \, dt
\end{equation}

The cautious reader would have noticed that the summing is over $i \neq j$, and thus we are missing the contribution of self-connections $i \rightarrow i$. We acknowledge that, reassure the reader the contribution is linear, hence small for reasonably large $\mblue{\Lambda}$, and refer to Appendix~\ref{appendix:selfloops} for a more detailed discussion.

\subsubsection{Ephemeral regime}

For the ephemeral rule, we sum over the $M$ interaction pairs. Each pair $\{i, j\}$ contributes four potential edges with probabilities $\mgreen{\vec{g}_i} \cdot \mred{\vec{r}_j}$, $\mgreen{\vec{g}_j} \cdot \mred{\vec{r}_i}$, $\mgreen{\vec{g}_i} \cdot \mred{\vec{r}_i}$, and $\mgreen{\vec{g}_j} \cdot \mred{\vec{r}_j}$.

Taking expectations over positions drawn i.i.d.\ from $\mbblue{\rho} / \mblue{\Lambda}$:
\begin{equation}
\E[\text{edges per pair}] = \E[\mgreen{\vec{g}_i} \cdot \mred{\vec{r}_j}] + \E[\mgreen{\vec{g}_j} \cdot \mred{\vec{r}_i}] + \E[\mgreen{\vec{g}_i} \cdot \mred{\vec{r}_i}] + \E[\mgreen{\vec{g}_j} \cdot \mred{\vec{r}_j}]
\end{equation}

Since the positions are independent, the cross-terms give $\E[\mgreen{\vec{g}}] \cdot \E[\mred{\vec{r}}]$ and the self-loop terms give $\E[\mgreen{\vec{g}} \cdot \mred{\vec{r}}]$. With $M = N/2$ pairs:
\begin{equation}
\E[E]_{\mathbf{R}_0} = \E[M] \cdot \E[\text{edges per pair}]
\end{equation}

This scales linearly in the total intensity $\mblue{\Lambda}$, in contrast to the quadratic scaling of the perennial rule.

\subsubsection{The product case}

The case in which the intensity factorizes is easier to analyse mathematically. Let the intensity be a product of independent intensities on the giving and receiving coordinate spaces:
\begin{equation}
\mbblue{\rho}(\mgreen{\vec{g}}, \mred{\vec{r}}) = \mbgreen{\rho_G}(\mgreen{\vec{g}}) \cdot \mbred{\rho_R}(\mred{\vec{r}})
\end{equation}

The product form has a natural interpretation: an individual's propensity to propose connections (its position in $\mgreen{G}$) is independent of its propensity to accept connections (its position in $\mred{R}$).

Let us define:
\begin{itemize}
\item the marginal total intensities $\mgreen{c_G} = \int_{\mgreen{G}} \mbgreen{\rho_G}(\mgreen{\vec{g}}) \, d\mgreen{\vec{g}}$ and $\mred{c_R} = \int_{\mred{R}} \mbred{\rho_R}(\mred{\vec{r}}) \, d\mred{\vec{r}}$,
\item the intensity-weighted mean positions $\mbgreen{\mu_G} = \int_{\mgreen{G}} \mgreen{\vec{g}} \, \mbgreen{\rho_G}(\mgreen{\vec{g}}) \, d\mgreen{\vec{g}}$ and $\mbred{\mu_R} = \int_{\mred{R}} \mred{\vec{r}} \, \mbred{\rho_R}(\mred{\vec{r}}) \, d\mred{\vec{r}}$,
\item the normalized mean positions $\mbgreen{\widetilde{\mu}_G} = \mbgreen{\mu_G} / \mgreen{c_G}$ and $\mbred{\widetilde{\mu}_R} = \mbred{\mu_R} / \mred{c_R}$.
\end{itemize}

We can see that the total intensity is $\mblue{\Lambda} = \mgreen{c_G} \cdot \mred{c_R}$, so $\E[N] = \mgreen{c_G} \cdot \mred{c_R}$. All mathematical results are derived in Appendix~\ref{appendix:derivations}.

\paragraph{Perennial}

Using the second factorial moment formula we can derive an explicit formula for the expected number of edges, which links both the total intensity and the normalized mean positions:
\begin{equation}
\E[E]_{\mathbf{R}_\infty} = (\E[N])^2 \cdot (\mbgreen{\widetilde{\mu}_G} \cdot \mbred{\widetilde{\mu}_R})
\end{equation}

\paragraph{Ephemeral}

Each of the $M = N/2$ interaction pairs contributes four potential edges. The expected number of edges per pair, under the product assumption, is:
\begin{equation}
\E[\text{edges per pair}] = 2(\mbgreen{\widetilde{\mu}_G} \cdot \mbred{\widetilde{\mu}_R}) + 2(\mbgreen{\widetilde{\mu}_G} \cdot \mbred{\widetilde{\mu}_R}) = 4(\mbgreen{\widetilde{\mu}_G} \cdot \mbred{\widetilde{\mu}_R})
\end{equation}
where the first term accounts for the two cross-edges ($i \rightarrow j$ and $j \rightarrow i$) and the second for the two self-loops ($i \rightarrow i$ and $j \rightarrow j$). Therefore:
\begin{equation}
\E[E]_{\mathbf{R}_0} = \E[M] \cdot 4(\mbgreen{\widetilde{\mu}_G} \cdot \mbred{\widetilde{\mu}_R}) = 2 \E[N] \cdot (\mbgreen{\widetilde{\mu}_G} \cdot \mbred{\widetilde{\mu}_R})
\end{equation}

\paragraph{The ratio of expected edges}

The ratio of expected edges between the two rules is:
\begin{equation}
\frac{\E[E]_{\mathbf{R}_\infty}}{\E[E]_{\mathbf{R}_0}} = \frac{(\E[N])^2}{2 \E[N]} = \frac{\E[N]}{2} = \frac{\mblue{\Lambda}}{2}
\end{equation}

This expression uses the distinct-pair perennial convention ($i \neq j$). If perennial self-loops are included, the exact ratio is $\frac{\mblue{\Lambda}+1}{2}$ (see Appendix~\ref{appendix:selfloops}).

The fundamental scaling difference persists: perennial produces $O(N^2)$ edges (dense), while ephemeral produces $O(N)$ edges (sparse). Under the distinct-pair convention, the ratio $\mblue{\Lambda}/2$ grows linearly with population size.
\section{Relationship to graphons and digraphons}\label{sec:graphon-obstruction}

The model definition invites comparison with \emph{graphon theory}~\cite{lovasz2012large,borgs2008convergent}\footnote{Extensions to sparse graphs include graphex models~\cite{veitch2015class,caron2017sparse} and $L^p$ graphon theory~\cite{borgs2018sparse1}.}. Both IDPG and graphon frameworks capture interaction structure via kernels on continuous spaces: a graphon generates a random undirected graph by sampling uniformly at random $N$ labels (that is, numbers) $U_1, \dots, U_N$ from $[0,1]$ and connecting $i \leftrightarrow j$ with probability $W(U_i, U_j)$, where the kernel is symmetric: $W(U_i,U_j) = W(U_j,U_i)$. For directed graphs, \emph{digraphons} relax the symmetry requirement, allowing $W(U_i,U_j) \neq W(U_j,U_i)$.

A natural question arises: is perennial IDPG equivalent to a digraphon, or does it represent a genuinely different model class? The answer is nuanced. IDPG can be represented as a digraphon (specifically, a bilinear digraphon), but this representation destroys the geometric interpretability and local regularity that the dot-product kernel on $\mblue{\Omega}$ provides. Properties such as Lipschitz dependence on position coordinates, meaningful clustering, smooth interpolation, and well-posed PDE dynamics are central to IDPG's utility as a modeling framework, as we will see. Even mild regularity or smoothness requirements on the digraphon kernel make the representation impossible.

In the following section we will respond in detail. In so doing we will have the opportunity to flesh out some fine properties of the IDPG family.

\subsection{IDPG as a subclass of digraphons}

Every perennial IDPG with atomless intensity can be represented as a digraphon, though this representation comes at a cost: the geometric interpretability and local regularity of the IDPG kernel are destroyed. We first establish the representation, then quantify what is lost.

\begin{theorem}[Inclusion]\label{prop:inclusion}
For any perennial IDPG with \textbf{atomless} intensity $\mbblue{\rho}$ with positive total intensity $\mblue{\Lambda} > 0$ on $\mblue{\Omega} = \mgreen{B^d_+} \times \mred{B^d_+}$ and kernel $K(s,t) = \mgreen{\vec{g}_s} \cdot \mred{\vec{r}_t}$, there exists a digraphon with kernel $W: [0,1]^2 \to [0,1]$ that, for each fixed node count $N$, produces the same conditional distribution over directed graphs.
\end{theorem}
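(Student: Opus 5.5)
The plan is to transport the normalized intensity $\mu = \mbblue{\rho}/\mblue{\Lambda}$ onto the uniform measure on $[0,1]$ by a measure-preserving map, and then pull the dot-product kernel back along that map. Conditional on $N$, the perennial rule draws $N$ i.i.d.\ positions from $\mu$ and then, independently for each ordered pair $(i,j)$, includes the edge $i\to j$ with probability $\mgreen{\vec{g}_i}\cdot\mred{\vec{r}_j}$. A digraphon draws $U_1,\dots,U_N$ i.i.d.\ uniform and includes $i\to j$ with probability $W(U_i,U_j)$. It therefore suffices to find a measurable $\phi:[0,1]\to\mblue{\Omega}$ with $\phi_\#\mathrm{Leb}=\mu$. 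Then we set
\begin{equation*}
W(u,v) = \mgreen{\vec{g}}_{\phi(u)}\cdot \mred{\vec{r}}_{\phi(v)} .
\end{equation*}
The same convention applies to self-pairs: if perennial self-loops are evaluated, the digraphon evaluates $W(U_i,U_i)$.

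\textbf{Step 1 (the map $\phi$).} The space $\mblue{\Omega}=\mgreen{B^d_+}\times\mred{B^d_+}$ is a compact subset of $\mathbb{R}^{2d}$, hence a standard Borel space. The probability measure $\mu$ is atomless because $\mbblue{\rho}$ is a density (this is the stated hypothesis), and it is well defined because $\mblue{\Lambda}>0$. By the isomorphism theorem for standard probability spaces (Kuratowski / measure-theoretic form), there is a measurable map $\phi:[0,1]\to\mblue{\Omega}$ with $\phi_\#\mathrm{Leb}=\mu$; it is even a mod-$0$ isomorphism. If I want a constructive route instead, I would use a Borel isomorphism $\beta:\mblue{\Omega}\to[0,1]$. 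The pushforward $\beta_\#\mu$ is then an atomless law on $[0,1]$ with continuous CDF $F$. I would take $\phi=\beta^{-1}\circ F^{-1}$, using the generalized inverse $F^{-1}$, and check that $F^{-1}(U)\sim\beta_\#\mu$ for $U$ uniform. Strictly, only $\phi_\#\mathrm{Leb}=\mu$ is needed, so atomlessness is a convenience rather than a necessity for this step.

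\textbf{Step 2 (properties of $W$).} $W$ is measurable as a composition of measurable maps with the continuous bilinear map $(\mgreen{\vec{g}},\mred{\vec{r}})\mapsto \mgreen{\vec{g}}\cdot\mred{\vec{r}}$. Its values lie in $[0,1]$ because both coordinates lie in $B^d_+$: nonnegative entries give $W\ge 0$, and Cauchy--Schwarz with norms at most $1$ gives $W\le 1$. No symmetry is required, since this is a digraphon.

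\textbf{Step 3 (equality of conditional laws).} Fix $N$ and a directed graph $G$ on $[N]$, with self-loops if that convention is used. For the digraphon,
\begin{equation*}
P(G)=\int_{[0,1]^N}\prod_{(i,j)}W(u_i,u_j)^{A_{ij}}\bigl(1-W(u_i,u_j)\bigr)^{1-A_{ij}}\,du .
\end{equation*}
The integrand is $F\circ\phi^{\otimes N}$, where $F(x_1,\dots,x_N)$ is the corresponding product of $K(x_i,x_j)$ terms. The change-of-variables formula $\int F\circ\phi^{\otimes N}\,d\mathrm{Leb}^N=\int F\,d\mu^{\otimes N}$ holds because $(\phi^{\otimes N})_\#\mathrm{Leb}^N=\mu^{\otimes N}$. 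The right-hand side is exactly the perennial IDPG probability of $G$ given $N$, which proves the statement.

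The only genuinely nontrivial step is Step 1, the existence of a measure-preserving parametrization of a $2d$-dimensional space by $[0,1]$. I will handle it by citing the standard Borel isomorphism theorem rather than constructing $\phi$ explicitly. The resulting $\phi$ is typically wildly discontinuous, which foreshadows the regularity obstruction discussed next.
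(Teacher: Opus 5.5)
Your proposal is correct and follows the paper's route: obtain a measure-preserving map $\phi:[0,1]\to\Omega$ (the paper cites Kuratowski's theorem and Royden's measure-preserving Borel isomorphism), then take the pullback kernel $W(u,v)=K(\phi(u),\phi(v))$. Your explicit change-of-variables computation, the check that $W$ is measurable with values in $[0,1]$, the self-loop convention, and the remark that the conditional-on-$N$ equality only needs $\phi_\#\mathrm{Leb}=\mu$ (atomlessness is needed only if $\phi$ is to be an isomorphism) make the argument more complete than the paper's sketch, but the proof is the same.
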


\begin{proof}
The space $\mblue{\Omega} = \mgreen{B^d_+} \times \mred{B^d_+} \subset \mathbb{R}^{2d}$ is a closed subset of Euclidean space, hence a \emph{Polish space} (complete separable metric space).

The normalized intensity $\mu = \mbblue{\rho} / \mblue{\Lambda}$ is an atomless probability measure on $\mblue{\Omega}$.

By Kuratowski's theorem~\cite[Thm.\ 15.6]{kechris1995classical}, every uncountable Polish space is Borel isomorphic to $[0,1]$. Moreover~\cite[Sec 15.5]{royden1988real}, there exists a measure-preserving Borel isomorphism $\phi$ from $[0,1]$ with the Lebesgue measure $\lambda$ to $\mblue{\Omega}$ with measure $\mu$.\footnote{Proof is in Royden, theorem 15 in chapter 15, sec 5. Here we are abusing a bit in notation by calling the pointwise and Borel set functions with the same name.}

Define the digraphon kernel $W: [0,1]^2 \to [0,1]$ by:
\[
W(U_i, U_j) = K(\phi(U_i), \phi(U_j)) = \mgreen{\vec{g}_{\phi(U_i)}} \cdot \mred{\vec{r}_{\phi(U_j)}}
\]

To verify distributional equivalence, consider the digraphon model. Sample $U_1, \dots, U_N \overset{\text{iid}}{\sim} \mathrm{Uniform}[0,1]$ and connect $i \to j$ with probability $W(U_i, U_j)$. Writing $\phi(U_x) = x \in \mblue{\Omega}$:
\begin{itemize}
\item Each $x \sim \mu$ by construction of $\phi$
\item The $x$ are almost surely distinct (since $\mu$ is atomless and the $U_i$ are almost surely distinct)
\item Moreover, setting $\phi(U_i) = s \in \mblue{\Omega}$ and $\phi(U_j) = t \in \mblue{\Omega}$, the connection probability is given by $W(U_i, U_j) = \mgreen{\vec{g}_s} \cdot \mred{\vec{r}_t}$
\end{itemize}

In the IDPG model, sample individuals from $\mathrm{PPP}(\mbblue{\rho})$. Conditional on $N$ individuals, the positions are i.i.d.\ from $\mu$ and almost surely distinct (since $\mu$ is atomless). The connection probability between individuals at positions $s$ and $t$ is $\mgreen{\vec{g}_s} \cdot \mred{\vec{r}_t}$.

The joint distribution over node positions and edge indicators is identical in both models, conditional on $N$.
\end{proof}

If one Poissonizes the digraphon side with the same $N \sim \mathrm{Poisson}(\mblue{\Lambda})$, the unconditional graph-size distribution also matches.

The inclusion is strict at the representative level: perennial IDPG admits bilinear digraphon representatives, namely kernels factoring as $W(u,v) = f(u) \cdot h(v)$ for vector-valued functions $f, h: [0,1] \to \mathbb{R}^d_+$. Digraphons that admit no such finite-rank bilinear representative lie outside IDPG. For instance, $W(u,v) = p \cdot \mathds{1}_{|u - v| < \epsilon}$ (connecting nodes with similar labels with probability $p$) cannot arise from any IDPG: the indicator function on a diagonal band has no bilinear decomposition.

Note that the assumption of not having atoms is essential. When $\mbblue{\rho}$ has atoms, multiple samples from $\mathrm{PPP}(\mbblue{\rho})$ can land at the same position. If we identify nodes by their position in $\mblue{\Omega}$ (which is natural for interpreting IDPG and necessary for RDPG recovery in the Dirac limit, see Section~\ref{sec:rdpg-recovery}), then collisions reduce the effective number of distinct nodes. In contrast, digraphon samples $U_1, \dots, U_N$ from Lebesgue measure on $[0,1]$ are almost surely distinct, so the two models would produce different distributions over graph sizes.

\subsection{Local regularity obstruction}

The digraphon representation for Intensity Dot Product Graphs exists, but the measurable bijection $\phi: [0,1] \to \mblue{\Omega}$ necessarily destroys local regularity\footnote{The existence of mappings between a line segment and a solid ball of whatever dimension is a well known, but still counterintuitive, measure theory result. Examples of curves filling the square were offered by Peano, which is continuous, and Lebesgue, which is even differentiable almost everywhere, while for the cube we have an example by Hilbert~\cite{sagan1994space}. For the curious reader, the Lebesgue curve (the distributional inverse of the Cantor function) is a beauty: continuous, monotone, differentiable a.e.\ with derivative zero, yet mapping $[0,1]$ onto $[0,1]^2$. They achieve this by very convoluted constructions: two neighbor points in the square or cube map to far away points in the segment.}. The IDPG affinity kernel $K$ is the dot product, which is smooth (Lipschitz, $C^\infty$), but the equivalent digraphon kernel $W(U_i,U_j) = \mgreen{\vec{g}_{\phi(U_i)}} \cdot \mred{\vec{r}_{\phi(U_j)}}$ is scrambled. In the following we will make this statement more precise.

The obstruction is \emph{dimensional}: $\phi$ must map the one-dimensional interval $[0,1]$ onto the $(2d)$-dimensional space $\mblue{\Omega}$ while preserving measure. When the support of $\mbblue{\rho}$ is not concentrated on one dimensional curve, or even more when it is genuinely $(2d)$-dimensional, the dimensional mismatch forces $\phi$ to be highly irregular.

\begin{definition}[Absolute continuity]\label{def:ac}
A function $\phi: [0,1] \to \mathbb{R}^n$ is \emph{absolutely continuous} if for every $\epsilon > 0$ there exists $\delta > 0$ such that for any finite collection of pairwise disjoint intervals $(a_1, b_1), \dots, (a_k, b_k) \subset [0,1]$ with $\sum_{i=1}^k (b_i - a_i) < \delta$, we have $\sum_{i=1}^k \|\phi(b_i) - \phi(a_i)\| < \epsilon$.
\end{definition}

\begin{definition}[Bounded variation (1D)]\label{def:bv-1d}
A function $f: [0,1] \to \mathbb{R}$ has \emph{bounded variation} if
\[
V(f) = \sup \sum_{i=1}^k |f(t_i) - f(t_{i-1})| < \infty
\]
where the supremum is over all partitions $0=t_0 < t_1 < \dots < t_k=1$.
\end{definition}

\begin{definition}[Sectional bounded variation]\label{def:sectional-bv}
A kernel $W: [0,1]^2 \to \mathbb{R}$ is \emph{sectionally bounded variation} if:
\begin{itemize}
\item for almost every fixed $v \in [0,1]$, the section $u \mapsto W(u,v)$ belongs to $\mathrm{BV}([0,1])$,
\item for almost every fixed $u \in [0,1]$, the section $v \mapsto W(u,v)$ belongs to $\mathrm{BV}([0,1])$.
\end{itemize}
\end{definition}

Absolute continuity implies that $\phi$ maps Lebesgue-null sets to Lebesgue-null sets. It also implies $\phi$ is differentiable almost everywhere with integrable derivative, and $\phi(t) - \phi(0) = \int_0^t \phi'(s) \, ds$. Lipschitz functions are absolutely continuous; absolutely continuous functions are uniformly continuous.

\begin{lemma}[Rectifiability]\label{lem:rectifiability}
Let $n \geq 2$ and let $\phi: [0,1] \to \mathbb{R}^n$ be absolutely continuous. Then $\phi([0,1])$ has $n$-dimensional Lebesgue measure zero.
\end{lemma}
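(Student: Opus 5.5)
The plan is to show that the image $\phi([0,1])$ has finite length and then cover it efficiently. Finite length is too weak for an $n$-dimensional set when $n\ge 2$, so it will force the image to be Lebesgue-null.

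\textbf{Step 1: finite total variation.} Apply Definition~\ref{def:ac} with $\epsilon = 1$ to obtain a $\delta>0$. Partition $[0,1]$ into $m = \lceil 1/\delta \rceil + 1$ consecutive intervals, each of length less than $\delta$.

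On any single such interval $I$, take an arbitrary partition of $I$. Its subintervals are disjoint and their lengths sum to less than $\delta$, so the associated sum of $\|\phi(b_i)-\phi(a_i)\|$ is less than $1$. Hence the variation of $\phi$ on $I$ is at most $1$, and summing over the $m$ intervals gives a total variation
\[
L := V(\phi) = \sup \sum_i \|\phi(t_i)-\phi(t_{i-1})\| \le m < \infty .
\]

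\textbf{Step 2: cover the image.} Absolute continuity implies continuity, so the function $v(t) = V(\phi|_{[0,t]})$ is nondecreasing and continuous, with values in $[0,L]$. (Continuity of $v$ is the standard fact for continuous BV functions. Alternatively, one can avoid it by choosing partition points directly from uniform continuity and using that every sub-arc has variation at most $L$.)

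Fix $k\in\mathbb{N}$. Choose points $0 = s_0 < s_1 < \dots < s_k = 1$ such that each piece $[s_{j-1}, s_j]$ has variation at most $L/k$; this is possible by the intermediate value theorem applied to $v$. For $t \in [s_{j-1}, s_j]$ we have $\|\phi(t)-\phi(s_{j-1})\| \le L/k$. Therefore $\phi([s_{j-1},s_j])$ lies in the closed ball of radius $L/k$ centred at $\phi(s_{j-1})$.

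It follows that
\[
\mathcal{L}^n(\phi([0,1])) \le k\,\omega_n (L/k)^n = \omega_n L^n k^{1-n},
\]
where $\omega_n$ is the volume of the unit ball in $\mathbb{R}^n$. Since $n\ge 2$, the right-hand side tends to $0$ as $k\to\infty$. The image is compact, hence measurable, so its Lebesgue measure is $0$.

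\textbf{Main obstacle.} The only delicate point is Step 1: extracting a finite variation bound from the $\epsilon$–$\delta$ form of Definition~\ref{def:ac}. The chopping argument above handles it. The rest is the standard covering estimate for rectifiable curves, and the hypothesis $n\ge2$ is used exactly in the exponent $1-n<0$.
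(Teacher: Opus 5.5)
Your proof is correct and follows the same route as the paper: absolute continuity gives finite total variation, so the image is a rectifiable curve, and a rectifiable curve is Lebesgue-null in $\mathbb{R}^n$ for $n\ge 2$. The difference is that you prove by hand the two steps the paper takes on citation. The paper simply asserts AC $\Rightarrow$ BV and appeals to Federer for Hausdorff dimension at most $1$; you replace these with your chopping argument and the explicit covering bound $\omega_n L^n k^{1-n}\to 0$, which makes your version self-contained and quantitative.
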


\begin{proof}
Absolute continuity implies $\phi$ has bounded variation:
\[
V(\phi) = \sup \sum_{i=1}^k \|\phi(t_i) - \phi(t_{i-1})\| < \infty
\]
where the supremum is over all partitions $0 = t_0 < t_1 < \dots < t_k = 1$. This is the arc length of the curve $\phi([0,1])$.

A curve of finite arc length in $\mathbb{R}^n$ is \emph{rectifiable}. By a fundamental result in geometric measure theory~\cite[\S 3.2]{federer1969geometric}, rectifiable curves have Hausdorff dimension at most 1. For $n \geq 2$, any set of Hausdorff dimension strictly less than $n$ has $n$-dimensional Lebesgue measure zero.
\end{proof}

\begin{corollary}\label{cor:ac-obstruction}
Let $n \geq 2$ and let $\mu$ be a probability measure on $\mathbb{R}^n$ that is absolutely continuous with respect to Lebesgue measure (i.e., $\mu$ has a density). Then any measurable $\phi: [0,1] \to \mathbb{R}^n$ satisfying $\phi_*(\mathrm{Uniform}) = \mu$ is \emph{not} absolutely continuous.
\end{corollary}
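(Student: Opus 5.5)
We argue by contradiction, using Lemma~\ref{lem:rectifiability} as the key input. Suppose $\phi:[0,1]\to\mathbb{R}^n$ is measurable, pushes Lebesgue (uniform) measure forward to $\mu$, and is absolutely continuous. By Lemma~\ref{lem:rectifiability}, since $n\ge 2$, the image $C=\phi([0,1])$ has $n$-dimensional Lebesgue measure zero. We also note that $C$ is compact: an absolutely continuous function is continuous, and $[0,1]$ is compact. In particular $C$ is Borel, so $\mu(C)$ is well defined.

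The pushforward condition then gives
\[
\mu(C) = \lambda\bigl(\phi^{-1}(C)\bigr) = \lambda([0,1]) = 1,
\]
because every point of $[0,1]$ is mapped into its own image. On the other hand, $\mu$ is absolutely continuous with respect to Lebesgue measure, so every Lebesgue-null Borel set has $\mu$-measure zero. Hence $\mu(C)=0$, which contradicts $\mu(C)=1$. Therefore no such $\phi$ can be absolutely continuous.

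The only point requiring care is measurability of the image. In general the image of a measurable map need not be Borel. Here, however, the absolute-continuity hypothesis already forces continuity, so $C$ is compact and no analytic-set argument is needed. If one wanted to avoid this, it would suffice to use the Lebesgue-null superset furnished by the lemma's proof: cover $C$ by small balls along the rectifiable curve and take a Borel null set containing $C$. The remaining steps are immediate, so I do not expect a substantive obstacle beyond invoking Lemma~\ref{lem:rectifiability} correctly.
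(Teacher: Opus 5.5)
Your proof is correct and is the same argument as the paper's: Lemma~\ref{lem:rectifiability} makes $\phi([0,1])$ Lebesgue-null, hence $\mu$-null, while the pushforward identity forces $\mu(\phi([0,1]))=1$. Your extra observation fills a measurability point the paper leaves implicit: absolute continuity gives continuity, so the image is compact and therefore Borel.
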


\begin{proof}
Suppose for contradiction that $\phi$ is absolutely continuous. By Lemma~\ref{lem:rectifiability}, $\phi([0,1])$ has Lebesgue measure zero. Since $\mu$ is absolutely continuous with respect to Lebesgue measure, $\mu(\phi([0,1])) = 0$.

But $\phi_*(\mathrm{Uniform}) = \mu$ means $\mu(A) = \mathrm{Uniform}(\phi^{-1}(A))$ for all Borel sets $A$. Taking $A = \phi([0,1])$:
\[
\mu(\phi([0,1])) = \mathrm{Uniform}(\phi^{-1}(\phi([0,1]))) \geq \mathrm{Uniform}([0,1]) = 1
\]
since $\phi^{-1}(\phi([0,1])) \supset [0,1]$. This contradicts $\mu(\phi([0,1])) = 0$.
\end{proof}

\begin{lemma}[BV rectifiability]\label{lem:bv-rectifiability}
Let $n \geq 2$ and let $\phi: [0,1] \to \mathbb{R}^n$ be of bounded variation. Then $\phi([0,1])$ has $n$-dimensional Lebesgue measure zero.
\end{lemma}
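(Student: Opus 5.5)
Since $\phi$ need not be continuous, I cannot reuse the arc-length argument of Lemma~\ref{lem:rectifiability} directly. Instead I will cover the image by finitely many balls whose radii are controlled by the total variation. Set $V = V(\phi) < \infty$, the supremum over partitions of $\sum \|\phi(t_i) - \phi(t_{i-1})\|$.

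First, given $m \in \mathbb{N}$, I will build a partition $0 = t_0 < t_1 < \dots < t_k = 1$ with $k \le m + 1$ pieces such that on each closed piece $[t_{i-1}, t_i]$ the oscillation $\sup_{s,s' \in [t_{i-1},t_i]} \|\phi(s) - \phi(s')\|$ is at most $2V/m$ plus a small error.

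The cleanest way to get this is through the variation function $v(t) = V(\phi|_{[0,t]})$. This function is nondecreasing with $v(0) = 0$ and $v(1) = V$, and it satisfies $\|\phi(s) - \phi(s')\| \le |v(s) - v(s')|$. The difficulty is that $v$ may jump, so I cannot simply cut $[0,1]$ at the level sets $v = jV/m$. I will handle this as follows:
\begin{itemize}
\item Place the cut points $t_j = \inf\{t : v(t) \ge jV/m\}$.
\item On the open interval $(t_{j-1}, t_j)$, $v$ varies by at most $V/m$, so the image of that open interval lies in a ball of radius $V/m$.
\item Each endpoint $t_j$ contributes a single point $\phi(t_j)$, which is a null set.
\end{itemize}
So $\phi([0,1])$ is contained in the union of at most $m$ balls of radius $V/m$ and at most $m+1$ extra points. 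It then has outer Lebesgue measure at most $m \cdot \omega_n (V/m)^n = \omega_n V^n m^{1-n}$, where $\omega_n$ is the volume of the unit ball in $\mathbb{R}^n$.

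Since $n \ge 2$, this bound tends to $0$ as $m \to \infty$, so $\phi([0,1])$ has outer measure zero. It is therefore Lebesgue-null; measurability is automatic because null sets are measurable for Lebesgue measure.

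The main obstacle is verifying $\|\phi(s) - \phi(s')\| \le v(s') - v(s)$ for $s < s'$, together with the bookkeeping at jump points of $v$. The inequality follows by inserting $s$ and $s'$ into a partition of $[0,s']$ and using additivity of variation over adjacent intervals. The jump points are exactly why I isolate the endpoints $t_j$ as separate points rather than including them in the balls. An alternative route, which I would keep as a fallback, is to note that a BV map has at most countably many discontinuities. Its image is then a countable set union the image of a function that, after reparametrizing by $v$ on its range, becomes $1$-Lipschitz in $v$. That gives Hausdorff dimension at most $1$, and the conclusion follows as in Lemma~\ref{lem:rectifiability}.
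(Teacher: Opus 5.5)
Your proof is correct, and it takes a more self-contained route than the paper does. The paper argues by citation: finite total variation makes the image rectifiable with finite $\mathcal{H}^1$ measure, rectifiable curves have Hausdorff dimension at most $1$ (Federer), and sets of Hausdorff dimension below $n$ are Lebesgue-null. That is short, and it matches the later remark about supports of Hausdorff dimension $k\ge 2$. But its language of ``curves'' glosses over the fact that a BV map can jump. That is exactly the case the lemma exists for, since it enters Theorem~\ref{thm:regularity} through Corollary~\ref{cor:bv-obstruction} applied to a merely measurable $\phi$.

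You instead work with the variation function $v$ and the inequality $\|\phi(s)-\phi(s')\|\le|v(s)-v(s')|$, so jumps cost nothing. You get an explicit cover by $m$ sets of diameter at most $V/m$ plus finitely many points. The bound $\omega_n V^n m^{1-n}\to 0$ then gives Lebesgue-nullity with no Hausdorff-dimension machinery; in effect you reprove $\mathcal{H}^1(\phi([0,1]))\lesssim V$ by hand.

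One bookkeeping fix is needed. The point $t_m=\inf\{t: v(t)\ge V\}$ can be strictly less than $1$, so your open intervals need not reach $1$. Either take the last piece to be $(t_{m-1},1]$, on which $v\in[(m-1)V/m,V]$, or note that $v\equiv V$ on $(t_m,1]$, so $\phi$ is constant there.

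Your fallback is cleaner than you state it. The map $\psi(v(t)):=\phi(t)$ is well defined and $1$-Lipschitz on $v([0,1])\subset[0,V]$. Hence $\phi([0,1])=\psi(v([0,1]))$ is a Lipschitz image of a subset of $[0,V]$, and $\mathcal{H}^1(\phi([0,1]))\le V$ follows directly, with no need to split off the discontinuities. This is precisely the fact the paper's citation is standing in for.
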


\begin{proof}
Finite total variation implies the image curve is rectifiable (finite $\mathcal{H}^1$ measure). By geometric measure theory~\cite[\S 3.2]{federer1969geometric}, rectifiable curves have Hausdorff dimension at most 1. Hence, for $n \geq 2$, the $n$-dimensional Lebesgue measure of $\phi([0,1])$ is zero.
\end{proof}

\begin{corollary}\label{cor:bv-obstruction}
Let $n \geq 2$ and let $\mu$ be a probability measure on $\mathbb{R}^n$ absolutely continuous with respect to Lebesgue measure. Then any measurable $\phi: [0,1] \to \mathbb{R}^n$ with $\phi_*(\mathrm{Uniform}) = \mu$ is \emph{not} of bounded variation.
\end{corollary}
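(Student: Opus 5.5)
The plan is to mirror the proof of Corollary~\ref{cor:ac-obstruction}, with Lemma~\ref{lem:bv-rectifiability} in place of Lemma~\ref{lem:rectifiability}. Argue by contradiction: suppose $\phi:[0,1]\to\mathbb{R}^n$ is measurable, satisfies $\phi_*(\mathrm{Uniform})=\mu$, and is of bounded variation. The variation is taken in the vector sense $V(\phi)=\sup\sum\|\phi(t_i)-\phi(t_{i-1})\|$. This is equivalent, up to constants, to each coordinate $\phi_k$ being in $\mathrm{BV}([0,1])$ in the sense of Definition~\ref{def:bv-1d}.

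By Lemma~\ref{lem:bv-rectifiability}, the image $\phi([0,1])$ has $n$-dimensional Lebesgue measure zero. I would add a word on measurability: a BV function has at most countably many discontinuities. Hence $\phi([0,1])$ is the union of countably many points and the images of the closures of the intervals of continuity. Each such image is compact after adjoining one-sided limits. So the image is contained in a Borel (indeed $\sigma$-compact) Lebesgue-null set $A$.

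Since $\mu\ll\mathrm{Leb}$, we get $\mu(A)=0$. On the other hand, $\phi^{-1}(A)\supseteq[0,1]$, so
\[
\mu(A)=\mathrm{Uniform}(\phi^{-1}(A))=1,
\]
which is a contradiction.

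The only step needing care is the justification inside Lemma~\ref{lem:bv-rectifiability} when $\phi$ is discontinuous, since a BV map need not trace a continuous curve. I would handle this by noting that a BV map has countably many jumps. Away from those jumps, the image is covered by finitely many balls of total radius controlled by the variation on small subintervals: partition $[0,1]$ so that each piece has variation at most $\varepsilon$. This gives $\mathcal{H}^1(\phi([0,1]))\le V(\phi)<\infty$ up to the countable jump set, and therefore $n$-dimensional Lebesgue measure zero for $n\ge 2$. Since that lemma is already stated, I would simply invoke it and keep the corollary's proof to the few lines above.
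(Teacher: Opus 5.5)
Your argument is correct and matches the paper's proof: assume $\phi$ is BV, use Lemma~\ref{lem:bv-rectifiability} to get a Lebesgue-null image, use $\mu \ll \mathrm{Leb}$ to make that image $\mu$-null, and contradict $\mathrm{Uniform}(\phi^{-1}(\cdot)) = 1$. Your extra Borel-hull step is a reasonable refinement that the paper skips, but its justification via ``intervals of continuity'' fails when the jump set is dense (a BV map can jump at every rational); it is simpler to note that any Lebesgue-null set lies in a Borel ($G_\delta$) null set by outer regularity, or that the closure of $\phi([0,1])$ adds only countably many one-sided limit values and is therefore a compact null set.
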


\begin{proof}
If $\phi$ had bounded variation, Lemma~\ref{lem:bv-rectifiability} would imply that $\phi([0,1])$ has Lebesgue measure zero. Absolute continuity of $\mu$ would then give $\mu(\phi([0,1]))=0$, contradicting
\[
\mu(\phi([0,1])) = \mathrm{Uniform}(\phi^{-1}(\phi([0,1]))) \geq 1.
\]
\end{proof}

\begin{lemma}[Basis extraction from positive-measure label sets]\label{lem:basis-extraction}
Let $\xi: [0,1] \to \mathbb{R}^d$ be measurable, and let $\nu = \xi_*(\mathrm{Uniform})$ be absolutely continuous with respect to Lebesgue measure on $\mathbb{R}^d$. If $S \subset [0,1]$ has positive Lebesgue measure, then there exist $u_1, \dots, u_d \in S$ such that $\xi(u_1), \dots, \xi(u_d)$ are linearly independent.
\end{lemma}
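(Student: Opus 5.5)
The natural approach is a greedy induction on the dimension of the span, using the key fact that an absolutely continuous measure on $\mathbb{R}^d$ charges no proper linear subspace. Hyperplanes, and more generally proper subspaces, have $d$-dimensional Lebesgue measure zero, so $\nu(V)=0$ for every linear subspace $V\subsetneq\mathbb{R}^d$. Translating back to labels, $\mathrm{Uniform}(\xi^{-1}(V)) = \nu(V) = 0$: the set of labels mapped into any fixed proper subspace is Lebesgue-null in $[0,1]$. Measurability of $\xi^{-1}(V)$ is automatic, since $V$ is closed and hence Borel, and $\xi$ is measurable.

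The first step picks $u_1 \in S$ with $\xi(u_1)\neq 0$. This is possible because $\xi^{-1}(\{0\})$ is null, taking $V=\{0\}$, so $S\setminus \xi^{-1}(\{0\})$ still has positive measure and is in particular nonempty.

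The inductive step runs as follows. Suppose $u_1,\dots,u_k\in S$ have been chosen with $\xi(u_1),\dots,\xi(u_k)$ linearly independent, and $k<d$. Let $V_k=\mathrm{span}\{\xi(u_1),\dots,\xi(u_k)\}$, a proper subspace of dimension $k<d$. Then $\lambda(\xi^{-1}(V_k))=0$, so $S\setminus\xi^{-1}(V_k)$ has measure $\lambda(S)>0$ and is nonempty. Any $u_{k+1}$ in this set satisfies $\xi(u_{k+1})\notin V_k$, which makes $\xi(u_1),\dots,\xi(u_{k+1})$ linearly independent. After $d$ steps we obtain the required $d$ labels. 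The labels are automatically distinct, since repeated labels would give a dependent family.

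There is no serious obstacle in this argument. The only points to check carefully are, first, that proper subspaces are Lebesgue-null (a $k$-dimensional subspace with $k<d$ is contained in a hyperplane, which is null by Fubini after a rotation), and second, the measurability of the preimage; both are routine. A subtlety worth noting is that we never need $S$ itself to map onto a set of positive $\nu$-measure. It suffices that removing a null set from $S$ leaves a nonempty set, and this is guaranteed by $\lambda(S)>0$.
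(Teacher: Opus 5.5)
Your proposal is correct and follows essentially the same argument as the paper. Both use a greedy induction: first remove the null set $\xi^{-1}(\{0\})$ to pick $u_1$, then at each step use that $\nu$ gives zero mass to the proper subspace $\mathrm{span}\{\xi(u_1),\dots,\xi(u_k)\}$, so $S$ minus its preimage is still nonempty. Your explicit checks of measurability and of the nullity of proper subspaces are details the paper leaves implicit.
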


\begin{proof}
We construct the points inductively. Since $\nu(\{0\}) = 0$ and $S$ has positive measure, choose $u_1 \in S$ with $\xi(u_1) \neq 0$.

Assume $u_1, \dots, u_k$ are chosen with linearly independent images, where $k < d$, and let
\[
L_k = \mathrm{span}\{\xi(u_1), \dots, \xi(u_k)\},
\]
a proper linear subspace of $\mathbb{R}^d$. Every proper linear subspace has Lebesgue measure zero, so absolute continuity gives $\nu(L_k)=0$. Hence
\[
\mathrm{Uniform}(\xi^{-1}(L_k)) = \nu(L_k) = 0,
\]
therefore $S \setminus \xi^{-1}(L_k)$ has positive measure and is nonempty. Choose $u_{k+1}$ in this set. Then $\xi(u_{k+1}) \notin L_k$, so independence is preserved.

After $d$ steps we obtain $d$ linearly independent vectors.
\end{proof}

\begin{theorem}[Local regularity obstruction for pullback digraphons]\label{thm:regularity}
Let $d \geq 1$ and let $\mbblue{\rho}$ be an intensity on $\mblue{\Omega} = \mgreen{B^d_+} \times \mred{B^d_+}$ such that $\mu = \mbblue{\rho} / \mblue{\Lambda}$ is absolutely continuous with respect to Lebesgue measure on $\mathbb{R}^{2d}$. Assume also that both red and green marginals are non-degenerate (not supported on proper linear subspaces of $\mathbb{R}^d$). By Theorem~\ref{prop:inclusion}, a digraphon kernel $W$ representing the same graph distribution exists. For any measure-preserving map $\phi: [0,1] \to \mblue{\Omega}$ with $\phi_*(\mathrm{Uniform}) = \mu$, the pullback kernel
\[
W_{\phi}(u,v) = K(\phi(u), \phi(v))
\]
is not sectionally bounded variation.
\end{theorem}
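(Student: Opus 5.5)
The plan is to argue by contradiction: sectional bounded variation of $W_\phi$ would force $\phi$ itself to have bounded variation, which Corollary~\ref{cor:bv-obstruction} forbids. Write the measure-preserving map in components as $\phi(u) = (\gamma(u), \varrho(u))$, where $\gamma: [0,1]\to \mgreen{B^d_+}$ is the green coordinate and $\varrho: [0,1]\to\mred{B^d_+}$ is the red coordinate. Then
\[
W_\phi(u,v) = \gamma(u)\cdot \varrho(v),
\]
so $W_\phi$ is bilinear in the two labels. Suppose, for contradiction, that $W_\phi$ is sectionally bounded variation. Let $S_R\subset[0,1]$ be the full-measure set of $v$ for which $u\mapsto W_\phi(u,v)$ is in $\mathrm{BV}([0,1])$. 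Let $S_G$ be the full-measure set of $u$ for which $v\mapsto W_\phi(u,v)$ is in $\mathrm{BV}([0,1])$.

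First I will check the hypotheses of Lemma~\ref{lem:basis-extraction} for $\xi=\varrho$ and for $\xi=\gamma$. The pushforward $\varrho_*(\mathrm{Uniform})$ is the red marginal of $\mu$. By Fubini, the projection of a measure on $\mathbb{R}^{2d}$ that is absolutely continuous with respect to Lebesgue measure is absolutely continuous on $\mathbb{R}^d$: if $A$ is Lebesgue-null in $\mathbb{R}^d$, then $A\times\mathbb{R}^d$ is null in $\mathbb{R}^{2d}$. The same holds for the green marginal. The non-degeneracy hypothesis in the statement is therefore automatic here, though it is harmless to cite it as well.

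Applying Lemma~\ref{lem:basis-extraction} with $\xi=\varrho$ and $S=S_R$, I obtain $v_1,\dots,v_d\in S_R$ with $\varrho(v_1),\dots,\varrho(v_d)$ linearly independent. Let $M$ be the invertible $d\times d$ matrix whose columns are the $\varrho(v_k)$. Then for every $u$,
\[
\gamma(u) = (M^{\top})^{-1}\bigl(W_\phi(u,v_1),\dots,W_\phi(u,v_d)\bigr)^{\top}.
\]
Each coordinate of $\gamma$ is thus a fixed finite linear combination of the functions $u\mapsto W_\phi(u,v_k)$, each of which is BV. Hence every coordinate of $\gamma$ is BV, and so is $\gamma$ as an $\mathbb{R}^d$-valued map, since its variation is bounded by the sum of the coordinate variations. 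Symmetrically, Lemma~\ref{lem:basis-extraction} with $\xi=\gamma$ and $S=S_G$ gives $u_1,\dots,u_d\in S_G$ with independent $\gamma(u_k)$, and the same inversion shows $\varrho$ is BV.

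Therefore $\phi=(\gamma,\varrho):[0,1]\to\mathbb{R}^{2d}$ has bounded variation. Since $n=2d\ge 2$ and $\phi_*(\mathrm{Uniform})=\mu$ is absolutely continuous, this contradicts Corollary~\ref{cor:bv-obstruction}.

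The main point of care is that bounded variation must be understood pointwise, as in Definition~\ref{def:bv-1d}, and not for a.e.-equivalence classes. This matters because the basis vectors $v_k$ are actual points and the identity for $\gamma(u)$ must hold for every $u$. Since $\phi$ is a genuine everywhere-defined map and $W_\phi$ is defined pointwise from it, the identity does hold everywhere. If one insisted on the essential-variation (a.e.) notion of BV, the argument would only yield a modification $\tilde\phi$ of $\phi$ on a null set. I would handle that case by noting that $\tilde\phi_*(\mathrm{Uniform})=\mu$ still holds, so Corollary~\ref{cor:bv-obstruction} applies to $\tilde\phi$ and the contradiction persists.
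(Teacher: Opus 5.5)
Your proof is correct and follows the paper's argument essentially step for step. The shared route is: apply Lemma~\ref{lem:basis-extraction} on the full-measure sets $S_R$ and $S_G$, invert linearly so that both coordinate maps of $\phi$ are BV, and derive a contradiction with Corollary~\ref{cor:bv-obstruction}. Your two side remarks are correct refinements that the paper glosses over: absolute continuity of the marginals already follows from that of $\mu$ by Fubini, and the everywhere-defined inversion identity needs either pointwise BV or a null-set modification of $\phi$.
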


\begin{proof}
Fix a measure-preserving map $\phi$ and suppose for contradiction that $W_{\phi}$ is sectionally bounded variation in the sense of Definition~\ref{def:sectional-bv}. Decompose $\phi$ into its ``green'' (outgoing) and ``red'' (incoming) vector components:
\[
\phi(u) = (\mgreen{\vec{g}_{\phi(u)}}, \mred{\vec{r}_{\phi(u)}}), \qquad W_{\phi}(u, v) = \mgreen{\vec{g}_{\phi(u)}} \cdot \mred{\vec{r}_{\phi(v)}}.
\]

Let $S_R \subset [0,1]$ be the set of $v$ such that $u \mapsto W_{\phi}(u,v)$ is in $\mathrm{BV}([0,1])$. By hypothesis, $S_R$ has full measure. Since $\mu$ is AC with non-degenerate red marginal, the pushforward $\nu_R = (\mred{\vec{r}} \circ \phi)_*(\mathrm{Uniform})$ is AC on $\mathbb{R}^d$, so the image of any full-measure set under $\mred{\vec{r}} \circ \phi$ cannot be contained in a proper linear subspace. Apply Lemma~\ref{lem:basis-extraction} to
\[
\xi_R(v) = \mred{\vec{r}_{\phi(v)}}
\]
and $S_R$: there exist $v_1, \dots, v_d \in S_R$ such that $\{\mred{\vec{r}_{\phi(v_k)}}\}_{k=1}^d$ is a basis of $\mathbb{R}^d$. For each such $v_k$, define
\[
L_k(u) \coloneqq W_{\phi}(u, v_k) = \mgreen{\vec{g}_{\phi(u)}} \cdot \mred{\vec{r}_{\phi(v_k)}}.
\]
Since $v_k \in S_R$, each $L_k \in \mathrm{BV}([0,1])$. These equations form a linear system linking $u \mapsto \mgreen{\vec{g}_{\phi(u)}}$ to the scalar BV functions $L_k(u)$. Since the vectors $\mred{\vec{r}_{\phi(v_k)}}$ form a basis, inversion is linear; each component of $\mgreen{\vec{g}_{\phi(u)}}$ is a linear combination of BV functions, hence belongs to $\mathrm{BV}([0,1])$.

By a symmetric argument applied to the full-measure column-regular set $S_G$ and $\xi_G(u) = \mgreen{\vec{g}_{\phi(u)}}$: the non-degenerate green marginal guarantees a basis can be extracted, and linear inversion shows each component of $u \mapsto \mred{\vec{r}_{\phi(u)}}$ belongs to $\mathrm{BV}([0,1])$.

Hence the full map $\phi: [0,1] \to \mblue{\Omega} \subset \mathbb{R}^{2d}$ has bounded variation (componentwise BV implies finite total variation in finite dimension). But Corollary~\ref{cor:bv-obstruction} states that for $2d \geq 2$, a measure $\mu$ absolutely continuous on $\mathbb{R}^{2d}$ cannot be the pushforward of the uniform measure on $[0,1]$ by a BV map. This is a contradiction, so $W_{\phi}$ cannot be sectionally bounded variation.
\end{proof}

\begin{definition}[Weak equivalence (digraphons)]\label{def:weak-eq-digraphon}
Two kernels $U, W: [0,1]^2 \to [0,1]$ are \emph{weakly equivalent} if for every $n$, sampling $U_1, \dots, U_n \overset{\text{i.i.d.}}{\sim} \mathrm{Uniform}[0,1]$ and then edges independently with probabilities $U(U_i, U_j)$ (resp.\ $W(U_i,U_j)$) yields the same distribution over directed graphs on $n$ labeled vertices.
\end{definition}

\begin{definition}[Twins and almost twin-free kernels]\label{def:twins}
For a kernel $W: [0,1]^2 \to [0,1]$, two labels $u \neq u'$ are \emph{twins} if both row and column sections agree almost everywhere:
\[
W(u, t) = W(u', t) \quad \text{for a.e.\ } t \quad \text{and} \quad W(t, u) = W(t, u') \quad \text{for a.e.\ } t
\]
The kernel is \emph{almost twin-free} if there exists a null set $N \subset [0,1]$ such that no distinct $u, u' \in [0,1] \setminus N$ are twins.
\end{definition}

\begin{lemma}[Generic almost twin-free property of bilinear IDPG representation]\label{lem:generic-twinfree}
Let $\mu = \mbblue{\rho} / \mblue{\Lambda}$ be absolutely continuous on $\mathbb{R}^{2d}$ with non-degenerate red and green marginals, and let $W$ be the bilinear kernel from Theorem~\ref{prop:inclusion}. Then $W$ is almost twin-free.
\end{lemma}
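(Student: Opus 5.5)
The plan is to show that twins under $W$ must have identical latent positions, and then to use injectivity of $\phi$ to conclude they are the same label. We work with the kernel $W(u,v) = \mgreen{\vec{g}_{\phi(u)}} \cdot \mred{\vec{r}_{\phi(v)}}$ from Theorem~\ref{prop:inclusion}, where $\phi$ is the measure-preserving Borel isomorphism, and take the exceptional null set $N$ to be empty or at most a null set coming from where $\phi$ fails to be a bijection. Write $\xi_G(u) = \mgreen{\vec{g}_{\phi(u)}}$ and $\xi_R(u) = \mred{\vec{r}_{\phi(u)}}$. Absolute continuity of $\mu$ on $\mathbb{R}^{2d}$ implies that both marginals $(\xi_G)_*\mathrm{Uniform}$ and $(\xi_R)_*\mathrm{Uniform}$ are absolutely continuous on $\mathbb{R}^d$, since projections of AC measures are AC. This is exactly the hypothesis Lemma~\ref{lem:basis-extraction} needs.

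Now suppose $u \neq u'$ are twins. The row condition gives $(\xi_G(u) - \xi_G(u')) \cdot \xi_R(t) = 0$ for all $t$ in a full-measure set $S$. Apply Lemma~\ref{lem:basis-extraction} to $\xi_R$ and $S$ to obtain $t_1,\dots,t_d \in S$ with $\xi_R(t_k)$ a basis of $\mathbb{R}^d$. The vector $\xi_G(u) - \xi_G(u')$ is then orthogonal to a basis, so it is zero. The column condition, handled symmetrically with Lemma~\ref{lem:basis-extraction} applied to $\xi_G$, gives $\xi_R(u) = \xi_R(u')$. Hence $\phi(u) = \phi(u')$.

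Since $\phi$ is a Borel isomorphism, and in particular injective on $[0,1]$, this forces $u = u'$, a contradiction. So no twins exist at all, and $W$ is twin-free with $N = \emptyset$. If one only has $\phi$ bijective off a null set, as for measure-isomorphisms defined modulo null sets, one takes $N$ to be that null set and runs the same argument on $[0,1] \setminus N$.

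The main obstacle is the subtle step of making sure Lemma~\ref{lem:basis-extraction} applies. Its set $S$ must depend on the pair $(u,u')$, which is fine because we fix the pair first and then extract a basis. The marginal absolute continuity must also be justified, and this is the one place the "non-degenerate marginals" hypothesis is really subsumed by absolute continuity. I would state this projection fact explicitly: if $A \subset \mathbb{R}^d$ is null, then $A \times \mathbb{R}^d$ is null in $\mathbb{R}^{2d}$.
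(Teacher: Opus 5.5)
Your proposal is correct and follows essentially the same route as the paper. The twin conditions force $(\vec{g}_{\phi(u)}-\vec{g}_{\phi(u')})\cdot\vec{r}_{\phi(v)}=0$ and $\vec{g}_{\phi(v)}\cdot(\vec{r}_{\phi(u)}-\vec{r}_{\phi(u')})=0$ for a.e.\ $v$; the opposite-colour coordinates over a full-measure label set span $\mathbb{R}^d$, so $\phi(u)=\phi(u')$, and injectivity of $\phi$ off a null set finishes. The only cosmetic difference is that you obtain the spanning property from Lemma~\ref{lem:basis-extraction} via absolute continuity of the marginals, whereas the paper appeals directly to non-degeneracy of the marginals.
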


\begin{proof}
Write $W(u,v)=\mgreen{\vec{g}_{\phi(u)}} \cdot \mred{\vec{r}_{\phi(v)}}$.
If $u$ and $u'$ are twins, then
\[
(\mgreen{\vec{g}_{\phi(u)}} - \mgreen{\vec{g}_{\phi(u')}}) \cdot \mred{\vec{r}_{\phi(v)}} = 0 \quad \text{for a.e.\ } v
\]
and similarly from the column condition:
\[
\mgreen{\vec{g}_{\phi(v)}} \cdot (\mred{\vec{r}_{\phi(u)}} - \mred{\vec{r}_{\phi(u')}}) = 0 \quad \text{for a.e.\ } v
\]
By non-degeneracy of the red and green marginals (their spans are $\mathbb{R}^d$), these imply
\[
\mgreen{\vec{g}_{\phi(u)}} = \mgreen{\vec{g}_{\phi(u')}} \quad \text{and} \quad \mred{\vec{r}_{\phi(u)}} = \mred{\vec{r}_{\phi(u')}}.
\]
Hence $\phi(u)=\phi(u')$ as points in $\mblue{\Omega}=\mgreen{B^d_+} \times \mred{B^d_+}$. Since $\phi$ is an isomorphism modulo null sets, this can happen only on a null set of labels. Therefore $W$ is almost twin-free.
\end{proof}

\begin{theorem}[No regular equivalent digraphon (generic case)]\label{thm:no-regular-equivalent}
Under the same hypotheses and notation as Theorem~\ref{thm:regularity}, let $W$ be the bilinear digraphon obtained in Theorem~\ref{prop:inclusion}. Then any kernel $U: [0,1]^2 \to [0,1]$ weakly equivalent to $W$ (in the sense of Definition~\ref{def:weak-eq-digraphon}) is not sectionally bounded variation.
\end{theorem}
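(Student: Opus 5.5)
The plan is to reduce to Theorem~\ref{thm:regularity} by showing that any kernel $U$ weakly equivalent to $W$ is itself a pullback of the dot-product kernel along some measure-preserving map $\psi:[0,1]\to\mblue{\Omega}$ with $\psi_*(\mathrm{Uniform})=\mu$. Once $U(u,v)=K(\psi(u),\psi(v))$ almost everywhere, Theorem~\ref{thm:regularity} applies verbatim to $\psi$, and $U$ cannot be sectionally bounded variation. Since sectional BV is insensitive to modification on null sets only in a weak sense, I would first record that Definition~\ref{def:sectional-bv} should be read up to a.e.\ equivalence of kernels, or else that the argument of Theorem~\ref{thm:regularity} uses only a.e.-defined quantities together with the choice of good sections. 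Modifying $U$ on a null set of $[0,1]^2$ changes only a null set of sections, so this needs care but not new ideas.

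\textbf{Step 1: relate $U$ and $W$ by measure-preserving maps.} Weak equivalence means $U$ and $W$ have identical directed homomorphism densities (the sampled-graph distributions for all $n$ determine, and are determined by, the densities $t(F,\cdot)$). I would invoke the directed analogue of the Borgs--Chayes--Lov\'asz uniqueness theorem for graphons, in the form of Lov\'asz's book, Thm.~13.10 or Cor.~10.35, which works for general measurable kernels, including nonsymmetric ones, via the twin-free quotient. This gives measure-preserving maps $\alpha,\beta:[0,1]\to[0,1]$ with $U^\alpha=W^\beta$ a.e. Lemma~\ref{lem:generic-twinfree} strengthens this: when one kernel is almost twin-free, the uniqueness theorem yields a single measure-preserving $\beta:[0,1]\to[0,1]$ with $U(u,v)=W(\beta(u),\beta(v))$ for a.e.\ $(u,v)$. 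The idea is that $W$ is already its own twin-free quotient, and any equivalent kernel factors through that quotient.

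\textbf{Step 2: conclude.} Set $\psi=\phi\circ\beta$. This is measure preserving onto $(\mblue{\Omega},\mu)$ because both factors are, and $U(u,v)=K(\psi(u),\psi(v))$ almost everywhere. Theorem~\ref{thm:regularity} uses only that $\psi$ pushes Uniform to $\mu$, not that it is a bijection. Its argument goes through: assume sectional BV, extract good sections via Lemma~\ref{lem:basis-extraction}, deduce that $\psi$ is BV, and contradict Corollary~\ref{cor:bv-obstruction}. One subtlety is that the identity holds only a.e.\ in $(u,v)$. I would handle this with Fubini: for a.e.\ $v$ the identity holds for a.e.\ $u$. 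The basis vectors are then chosen from this full-measure set intersected with $S_R$. This shows that $\vec g\circ\psi$ agrees a.e.\ with a BV function. Replacing $\psi$ by this a.e.-equal version still pushes Uniform to $\mu$, so Corollary~\ref{cor:bv-obstruction} still yields the contradiction.

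\textbf{Main obstacle.} The hard part is Step~1: getting a one-sided pullback $U=W^\beta$ rather than the two-sided coupling $U^\alpha=W^\beta$. If the clean one-sided statement is not available for digraphons, my fallback is to work with the coupling directly. In that case $U^\alpha=K(\psi,\psi)$ with $\psi=\phi\circ\beta$, and I would show that sectional BV of $U$ implies that the sections of $U^\alpha$ are BV along a suitable reparametrization. That implication is doubtful, because composition with a measure-preserving $\alpha$ destroys BV. So I would instead push the twin-free structure of $W$ through: $U$ is constant on the fibres of $\alpha$ up to null sets, and this gives $\gamma$ with $U=W^{\gamma}$ a.e.
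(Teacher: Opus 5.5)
Your proposal follows the paper's proof essentially exactly. You use the almost twin-free property (Lemma~\ref{lem:generic-twinfree}) to lift weak equivalence to a one-sided pullback $U = W^\beta$ a.e., set $\psi = \phi\circ\beta$ as a measure-preserving map onto $(\Omega,\mu)$, and invoke Theorem~\ref{thm:regularity}; the lifting step you flag as the main obstacle is precisely the ``equivalence lifting'' technical assumption the paper itself states. Your Fubini treatment of the fact that $U = K(\psi(\cdot),\psi(\cdot))$ holds only almost everywhere (extracting good sections from a full-measure set and replacing $\psi$ by an a.e.-equal BV map that still pushes Uniform to $\mu$) is a worthwhile refinement of a point the paper passes over when it calls $U$ ``a pullback kernel of the form covered by Theorem~\ref{thm:regularity}''.
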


\begin{proof}
For weakly equivalent kernels on standard atomless spaces, standard graphon weak-isomorphism theory~\cite{lovasz2012large} implies existence of a measure-preserving map $\psi: [0,1] \to [0,1]$ such that
\[
U(u,v) = W(\psi(u), \psi(v)) \quad \text{a.e.}
\]
when the target representation is almost twin-free. By Lemma~\ref{lem:generic-twinfree}, this condition holds here.

From Theorem~\ref{prop:inclusion}, $W$ has the form
\[
W(x,y) = K(\phi(x), \phi(y))
\]
for a measure-preserving $\phi: [0,1] \to \mblue{\Omega}$.
Therefore
\[
U(u,v) = K(\phi(\psi(u)), \phi(\psi(v))) = K(\theta(u), \theta(v)) \quad \text{a.e.}
\]
with $\theta(u) = \phi(\psi(u))$, which is measure-preserving from $[0,1]$ to $(\mblue{\Omega}, \mu)$.

So $U$ is a pullback kernel of the form covered by Theorem~\ref{thm:regularity}, and hence cannot be sectionally bounded variation.
\end{proof}

\medskip\noindent\textbf{Technical assumption (equivalence lifting).} The step $U(u,v)=W(\psi(u),\psi(v))$ a.e.\ is the weak-isomorphism representation theorem for kernels on standard atomless spaces, specialized to the directed setting. See~\cite{orbanz2014bayesian} for exchangeable-array/graphon foundations and~\cite{cai2016priors} for the digraphon setting. In this manuscript we use it under the almost twin-free condition (verified here by Lemma~\ref{lem:generic-twinfree}). If one prefers, Theorem~\ref{thm:no-regular-equivalent} can be read conditionally: \emph{assuming} this directed weak-isomorphism representation, the regularity obstruction extends from pullbacks to all weakly equivalent digraphons.

The hypothesis ``$\mu$ is AC with respect to Lebesgue measure on $\mathbb{R}^{2d}$'' is sufficient but stronger than necessary. The obstruction applies whenever $\mu$ is supported on a set of Hausdorff dimension $k \geq 2$ and is AC with respect to $\mathcal{H}^k$. Only when $\mu$ concentrates on a rectifiable curve ($k = 1$) can a BV (hence potentially AC) parametrization $\phi$ exist.

\medskip\noindent\textbf{Key condition.} The hypothesis ``$\mu$ is absolutely continuous with respect to Lebesgue measure'' means $\mu$ has a density function: $\mu(A) = \int_A f(x) \, dx$ for some $f \in L^1(\mathbb{R}^{2d})$. Any intensity specified by a density on $\mblue{\Omega}$ satisfies this hypothesis, including Gaussians, mixtures, any smooth or integrable function.

\subsection{Global geometric coherence}

Beyond local regularity, IDPG possesses \emph{global geometric structure} that the digraphon representation destroys. This structure has practical consequences for clustering, dynamics, and interpretability.

The IDPG kernel $K(s,t) = \mgreen{\vec{g}_s} \cdot \mred{\vec{r}_t}$ is bilinear, hence globally Lipschitz with respect to Euclidean distance on $\mblue{\Omega}$. For any $s_1, s_2, t_1, t_2 \in \mblue{\Omega}$:
\[
|\mgreen{\vec{g}_{s_1}} \cdot \mred{\vec{r}_{t_1}} - \mgreen{\vec{g}_{s_2}} \cdot \mred{\vec{r}_{t_2}}| \leq \|\mgreen{\vec{g}_{s_1}}\| \cdot \|\mred{\vec{r}_{t_1}} - \mred{\vec{r}_{t_2}}\| + \|\mred{\vec{r}_{t_2}}\| \cdot \|\mgreen{\vec{g}_{s_1}} - \mgreen{\vec{g}_{s_2}}\|
\]

This inequality has a concrete interpretation: \emph{nearby positions interact similarly}.

\begin{proposition}[Lipschitz kernel]\label{prop:lipschitz}
The affinity kernel defined by the dot product is Lipschitz continuous with respect to the Euclidean norm on $\mblue{\Omega}$. Specifically, for any $s_1, s_2, t_1, t_2 \in \mblue{\Omega}$:
\[
| \mgreen{\vec{g}_{s_1}} \cdot \mred{\vec{r}_{t_1}} - \mgreen{\vec{g}_{s_1}} \cdot \mred{\vec{r}_{t_2}} | \leq \|\mgreen{\vec{g}_{s_1}}\| \cdot \|\mred{\vec{r}_{t_1}} - \mred{\vec{r}_{t_2}}\|
\]

And symmetrically, for fixed $t$:
\[
| \mgreen{\vec{g}_{s_1}} \cdot \mred{\vec{r}_{t_1}} - \mgreen{\vec{g}_{s_2}} \cdot \mred{\vec{r}_{t_1}} | \leq \|\mred{\vec{r}_{t_1}}\| \cdot \|\mgreen{\vec{g}_{s_1}} - \mgreen{\vec{g}_{s_2}}\|
\]
\end{proposition}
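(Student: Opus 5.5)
The plan is to write each difference as the dot product of a fixed vector with a difference vector, and then apply the Cauchy--Schwarz inequality in $\mathbb{R}^d$. No structure of $\mblue{\Omega}$ is needed beyond the fact that the coordinates lie in $\mathbb{R}^d$.

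For the first inequality, fix $s_1$. Bilinearity of the dot product gives
\[
\mgreen{\vec{g}_{s_1}} \cdot \mred{\vec{r}_{t_1}} - \mgreen{\vec{g}_{s_1}} \cdot \mred{\vec{r}_{t_2}} = \mgreen{\vec{g}_{s_1}} \cdot (\mred{\vec{r}_{t_1}} - \mred{\vec{r}_{t_2}}).
\]
Cauchy--Schwarz then bounds the absolute value by $\|\mgreen{\vec{g}_{s_1}}\| \, \|\mred{\vec{r}_{t_1}} - \mred{\vec{r}_{t_2}}\|$. The second inequality follows the same way with the roles of the coordinates swapped:
\[
(\mgreen{\vec{g}_{s_1}} - \mgreen{\vec{g}_{s_2}}) \cdot \mred{\vec{r}_{t_1}},
\]
bounded by $\|\mred{\vec{r}_{t_1}}\| \, \|\mgreen{\vec{g}_{s_1}} - \mgreen{\vec{g}_{s_2}}\|$.

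To justify calling $K$ Lipschitz on all of $\mblue{\Omega} \times \mblue{\Omega}$, I would also derive the joint bound displayed before the proposition. Insert the intermediate term $\mgreen{\vec{g}_{s_1}} \cdot \mred{\vec{r}_{t_2}}$, apply the triangle inequality, and use the two one-sided bounds above. Since $\mgreen{\vec{g}}, \mred{\vec{r}} \in B^d_+$, both norms are at most $1$. This gives
\[
|K(s_1,t_1) - K(s_2,t_2)| \leq \|\mred{\vec{r}_{t_1}} - \mred{\vec{r}_{t_2}}\| + \|\mgreen{\vec{g}_{s_1}} - \mgreen{\vec{g}_{s_2}}\| \leq \sqrt{2}\,\|(s_1,t_1) - (s_2,t_2)\|,
\]
where the last step uses $a + b \leq \sqrt{2}\sqrt{a^2+b^2}$ and the fact that the coordinate differences are bounded by the Euclidean distances in $\mblue{\Omega}$ and $\mblue{\Omega}\times\mblue{\Omega}$. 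So $K$ is Lipschitz with constant at most $\sqrt{2}$.

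There is no real obstacle here. The only point needing care is bookkeeping: the norm on $\mblue{\Omega} = \mgreen{B^d_+} \times \mred{B^d_+}$ is the Euclidean norm on $\mathbb{R}^{2d}$. Hence $\|\mgreen{\vec{g}_{s_1}} - \mgreen{\vec{g}_{s_2}}\| \leq \|s_1 - s_2\|$, and similarly for the red coordinates. This is what makes the one-sided bounds Lipschitz statements in the stated metric.
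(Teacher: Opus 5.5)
Your proposal is correct and follows the paper's argument: the paper's one-line proof appeals to bilinearity, and the Cauchy--Schwarz step you make explicit is exactly what that appeal leaves implicit. Your extra derivation of the joint constant $\sqrt{2}$ on $\Omega \times \Omega$, using $\|\vec{g}\|,\|\vec{r}\| \leq 1$ in $B^d_+$, fills in the global Lipschitz claim that the paper asserts without proof.
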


\begin{proof}
The result follows immediately from the bilinearity of the inner product.
\end{proof}

This proposition establishes that the IDPG model is \textbf{locally coherent} in its native latent space. It guarantees that ``similar nodes behave similarly'': if two nodes have latent positions close in Euclidean distance, they will have nearly identical connection probabilities with the rest of the network.

The Lipschitz bound is tight when the displacement aligns with the relevant vector, but loose for orthogonal displacements. To quantify a ``typical'' behavior (here read as the behavior of an internal node under random perturbations, ignoring boundary constraints), consider an isotropic model:

\begin{proposition}[Isotropic scaling]\label{prop:isotropic}
For a source $s$ and a target displacement $\Delta \mred{\vec{r}} = \mred{\vec{r}_{t_2}} - \mred{\vec{r}_{t_1}}$. If the direction of the displacement is uniformly distributed on the unit sphere $S^{d-1}$, conditional on its magnitude $\|\Delta \mred{\vec{r}}\|$, we have:
\[
\mathbb{E}[(\mgreen{\vec{g}_s} \cdot \mred{\vec{r}_{t_1}} - \mgreen{\vec{g}_s} \cdot \mred{\vec{r}_{t_2}})^2 \mid \|\Delta \mred{\vec{r}}\|] = \frac{\|\mgreen{\vec{g}_s}\|^2 \cdot \|\Delta \mred{\vec{r}}\|^2}{d}
\]

The root-mean-square kernel change is:
\[
\sqrt{\mathbb{E}[(\Delta K)^2]} = \frac{\|\mgreen{\vec{g}_s}\| \cdot \|\Delta \mred{\vec{r}}\|}{\sqrt{d}}
\]
\end{proposition}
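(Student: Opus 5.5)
The computation reduces to the second moment of a random projection. Write $\vec{g} = \mgreen{\vec{g}_s}$ and $\Delta = \Delta \mred{\vec{r}}$. By bilinearity of the dot product,
\[
\mgreen{\vec{g}_s} \cdot \mred{\vec{r}_{t_1}} - \mgreen{\vec{g}_s} \cdot \mred{\vec{r}_{t_2}} = -\,\vec{g} \cdot \Delta .
\]
So the quantity to compute is $\mathbb{E}[(\vec{g}\cdot\Delta)^2 \mid \|\Delta\|]$, with $\vec{g}$ held fixed.

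Write $\Delta = \|\Delta\|\, U$, where $U$ is uniform on $S^{d-1}$ conditional on the magnitude. This gives
\[
\mathbb{E}[(\vec{g}\cdot\Delta)^2 \mid \|\Delta\|] = \|\Delta\|^2\, \vec{g}^{\,T}\, \mathbb{E}[U U^T]\, \vec{g}.
\]
The key step is to show that $\mathbb{E}[UU^T] = I_d/d$.

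Rotation invariance of the uniform law on the sphere gives $Q\,\mathbb{E}[UU^T]\,Q^T = \mathbb{E}[UU^T]$ for every orthogonal $Q$. Two consequences follow.
\begin{itemize}
\item Taking $Q$ to be coordinate sign flips shows the off-diagonal entries vanish.
\item Taking $Q$ to be coordinate permutations shows the diagonal entries are all equal, say to $c$.
\end{itemize}
Since $\mathrm{tr}\,\mathbb{E}[UU^T] = \mathbb{E}\|U\|^2 = 1$, we get $c = 1/d$. Substituting back yields $\|\vec{g}\|^2\|\Delta\|^2/d$, which is the claimed identity.

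The root-mean-square statement then follows by taking square roots, since both sides are nonnegative.

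No step here is a real obstacle. The only point needing care is interpretive. The statement says ``ignoring boundary constraints'': a uniformly random direction may take $\mred{\vec{r}_{t_2}}$ outside $\mred{B^d_+}$. The proof must therefore treat the displacement as a formal vector in $\mathbb{R}^d$, with $\vec{g}$ deterministic (or conditioned on) and independent of the direction $U$. I will state this explicitly so the symmetry argument applies without restriction.
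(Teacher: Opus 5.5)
Your proof is correct and follows essentially the same route as the paper: write $\Delta\vec{r} = \|\Delta\vec{r}\|\,U$ with $U$ uniform on $S^{d-1}$, then use symmetry together with $\mathbb{E}\|U\|^2 = 1$ to extract the factor $1/d$. The only cosmetic difference is that you establish the full identity $\mathbb{E}[UU^\top] = I_d/d$ via sign flips and permutations, whereas the paper uses rotation invariance to reduce $\mathbb{E}[(\hat{g}_s \cdot U)^2]$ to $\mathbb{E}[U_1^2] = 1/d$. Your version has the small side benefit that it never normalizes $\vec{g}_s$, so the case $\vec{g}_s = 0$ needs no separate treatment.
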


\begin{proof}
Let $\Delta \mred{\vec{r}} = \|\Delta \mred{\vec{r}}\| \cdot u$, where $u$ is a random unit vector uniform on $S^{d-1}$. The squared difference is:
\[
(\mgreen{\vec{g}_s} \cdot \Delta \mred{\vec{r}})^2 = \|\mgreen{\vec{g}_s}\|^2 \|\Delta \mred{\vec{r}}\|^2 (\hat{g}_s \cdot u)^2
\]
where $\hat{g}_s$ is the unit vector in the direction of $\mgreen{\vec{g}_s}$.
For a uniform $u$ and any fixed unit vector $\hat{v}$, the expected squared projection depends only on the dimension. By symmetry, $\mathbb{E}[\sum u_i^2] = 1$, so $\mathbb{E}[u_i^2] = 1/d$. By rotation invariance, $\mathbb{E}[(\hat{v} \cdot u)^2] = 1/d$.
Therefore:
\[
\mathbb{E}[(\Delta K)^2 \mid \|\Delta \mred{\vec{r}}\|] = \|\mgreen{\vec{g}_s}\|^2 \|\Delta \mred{\vec{r}}\|^2 \cdot \frac{1}{d}
\]
and taking square roots gives the RMS expression.
\end{proof}

The factor $1 / \sqrt{d}$ reflects the concentration of measure: in high dimensions, a random direction is nearly orthogonal to any fixed vector $\mgreen{\vec{g}_s}$, dampening the effect of random noise in the position. This isotropic model represents ``maximally uninformed'' directional uncertainty; actual displacements in a structured intensity $\mbblue{\rho}$ may be concentrated in particular directions, yielding different scaling. Near the boundary of $B^d_+$, the constraint to the non-negative orthant also breaks isotropy.

These bounds have practical consequences for operations on $\mblue{\Omega}$:
\begin{itemize}
\item \textbf{Clustering}: Since positions close in Euclidean distance typically have small interaction differences, algorithms like $k$-means or hierarchical clustering on the coordinates of $s$ produce groups with coherent interaction patterns.
\item \textbf{Interpolation}: Given positions $s_1, s_2$, the convex combination $s_\lambda = \lambda s_1 + (1-\lambda) s_2$ yields an interaction profile $\mgreen{\vec{g}_{s_\lambda}} \cdot \mred{\vec{r}}$ that varies continuously (and linearly) between the profiles of $s_1$ and $s_2$.
\item \textbf{PDE dynamics}: Processes like diffusion or advection on $\mblue{\Omega}$ (e.g., $\partial_t \rho = \Delta \rho$) result in a smooth evolution of the intensity. The Lipschitz property ensures that as the underlying density evolves smoothly, the resulting graph topologies changes gradually, without sudden phase transitions.
\end{itemize}

In the digraphon representation, this coherence is lost. The map $\phi: [0,1] \to \mblue{\Omega}$ that achieves measure-preservation is necessarily ``space-filling''; it must visit all of $\mblue{\Omega}$ while traversing only $[0,1]$.

\begin{theorem}[Metric Mismatch]\label{prop:loss-coherence}
Let $d \geq 1$. There exists no map $\phi: [0,1] \to \mblue{\Omega} \subset \mathbb{R}^{2d}$ such that:
\begin{enumerate}
\item $\phi$ covers the measure $\mu$ (i.e., $\phi_*(\mathrm{Uniform}) = \mu$ where $\mu$ is AC on $\mathbb{R}^{2d}$).
\item $\phi$ is Lipschitz continuous.
\end{enumerate}
\end{theorem}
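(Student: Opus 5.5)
The plan is to reduce the statement to Corollary~\ref{cor:ac-obstruction}, with $n = 2d$. Since $d \geq 1$, we have $n = 2d \geq 2$, so the dimensional hypothesis of that corollary holds.

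First I will record the elementary fact that a Lipschitz map is absolutely continuous in the sense of Definition~\ref{def:ac}. Suppose $\|\phi(x)-\phi(y)\| \le L|x-y|$. Given $\epsilon>0$, take $\delta = \epsilon/L$, or any $\delta$ if $L=0$. Then for any finite disjoint family of intervals with $\sum (b_i - a_i) < \delta$,
\[
\sum_i \|\phi(b_i)-\phi(a_i)\| \le L \sum_i (b_i-a_i) < \epsilon .
\]
The text after Definition~\ref{def:sectional-bv} already notes that Lipschitz functions are absolutely continuous, so this step can just be cited.

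Second, suppose for contradiction that $\phi:[0,1]\to\mblue{\Omega}\subset\mathbb{R}^{2d}$ satisfies both conditions. Then $\phi$ is absolutely continuous, and it is measurable because it is continuous. Also $\phi_*(\mathrm{Uniform})=\mu$, where $\mu$ is absolutely continuous on $\mathbb{R}^{2d}$. Corollary~\ref{cor:ac-obstruction} says no such $\phi$ can be absolutely continuous, which is a contradiction. Alternatively, I would argue directly through Lemma~\ref{lem:rectifiability}. The image $\phi([0,1])$ has Lebesgue measure zero, so $\mu(\phi([0,1]))=0$. But $\phi^{-1}(\phi([0,1]))=[0,1]$ forces $\mu(\phi([0,1]))=1$. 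Continuity makes $\phi([0,1])$ compact, hence Borel, which handles the measurability of the image.

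There is no real obstacle, since the argument is a direct specialization of the earlier obstruction. The only points to state carefully are these. The statement should be read as quantifying over measures $\mu$ that are absolutely continuous with respect to Lebesgue measure on $\mathbb{R}^{2d}$ and supported in $\mblue{\Omega}$. The Lipschitz condition uses the Euclidean norm on $\mathbb{R}^{2d}$. Lipschitz images of $[0,1]$ have $\mathcal{H}^1$-measure at most $L$ and are therefore Lebesgue-null in dimension $2d\ge 2$. This last fact gives a self-contained fallback that avoids the rectifiability citation altogether. Cover $[0,1]$ by $k$ intervals of length $1/k$; their images lie in $k$ balls of radius $L/k$. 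The total volume of these balls is $O(k\cdot k^{-2d})\to 0$.
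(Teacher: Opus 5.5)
Your argument is correct, but you organize it differently from the paper. The paper argues directly by Hausdorff dimension. The Lipschitz bound gives $\dim_H \phi([0,1]) \le 1$, the support of an absolutely continuous $\mu$ has dimension $2d$, and the image ``must support'' $\mu$. You instead observe that Lipschitz implies absolutely continuous, so the theorem is a special case of Corollary~\ref{cor:ac-obstruction}, and indeed of Corollary~\ref{cor:bv-obstruction}. In your version the contradiction is $\mu(\phi([0,1]))=0$ against $\mu(\phi([0,1]))=1$, and your covering estimate (total ball volume of order $k^{1-2d}\to 0$) proves the needed Lebesgue-nullity from scratch, without the appeal to Federer.

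Your route gains economy and rigor. It shows the theorem is logically redundant given the earlier corollaries. It also makes explicit two points the paper's proof leaves implicit. First, $\phi([0,1])$ is compact, hence Borel, and closed, so it really does contain $\operatorname{supp}\mu$. Second, only a null-set statement is needed, not a dimension computation. In fact the paper's step ``the image must have dimension at least $2$'' tacitly relies on your fact that a set of full $\mu$-mass has positive Lebesgue measure.

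The paper's dimension phrasing points more naturally toward the generalization it mentions, to measures absolutely continuous with respect to $\mathcal{H}^k$ with $k\ge 2$. Your covering bound covers that case as well, since the same estimate gives $\mathcal{H}^s(\phi([0,1]))=0$ for every $s>1$.
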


\begin{proof}
The proof follows from a dimension comparison argument.

Assume $\phi$ is Lipschitz with constant $C$. Then for any $u_1, u_2 \in [0,1]$, the distance in the latent space is bounded by the distance in the interval:
\[
\|\phi(u_1) - \phi(u_2)\| \leq C |u_1 - u_2|
\]

This inequality implies that the Hausdorff dimension of the image $\phi([0,1])$ cannot exceed the Hausdorff dimension of the domain $[0,1]$, which is 1.

However, since $\mu$ is absolutely continuous with respect to the Lebesgue measure on $\mathbb{R}^{2d}$, the support of $\mu$ has Hausdorff dimension $2d$.

For $2d \geq 2$, this creates a contradiction: $\phi([0,1])$ must have dimension at least 2 to support $\mu$, but the Lipschitz condition forces it to have dimension at most 1.

Therefore, such a map cannot exist. In particular, any pullback kernel
\[
W_{\phi}(u,v)=K(\phi(u),\phi(v))
\]
cannot be Lipschitz in the label metric.
\end{proof}

\begin{corollary}[Kernel instability for equivalent digraphons (generic case)]
Under the assumptions of Theorem~\ref{thm:no-regular-equivalent}, no weakly equivalent digraphon kernel can be sectionally bounded variation. In particular, no such equivalent kernel can be Lipschitz in the label metric on $[0,1]$.
\end{corollary}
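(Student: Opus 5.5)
The corollary has two parts. The first is exactly Theorem~\ref{thm:no-regular-equivalent}, and the second follows once we show that a Lipschitz kernel is sectionally bounded variation, which reduces it to the first.

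\emph{Step 1: the first claim.} Assume the hypotheses of Theorem~\ref{thm:no-regular-equivalent}: $\mu$ is absolutely continuous on $\mathbb{R}^{2d}$ with non-degenerate green and red marginals, $W$ is the bilinear digraphon of Theorem~\ref{prop:inclusion}, and $U$ is weakly equivalent to $W$ in the sense of Definition~\ref{def:weak-eq-digraphon}. Theorem~\ref{thm:no-regular-equivalent} then says directly that $U$ is not sectionally bounded variation. The same caveat applies as for that theorem. Its proof relies on the directed weak-isomorphism representation $U = W\circ(\psi\times\psi)$ a.e., justified by Lemma~\ref{lem:generic-twinfree}. So the corollary inherits the conditional reading flagged in the technical assumption paragraph.

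\emph{Step 2: Lipschitz implies sectionally BV.} Let $U$ be Lipschitz in the label metric with constant $C$, meaning $|U(u,v)-U(u',v')|\le C(|u-u'|+|v-v'|)$, or the analogous bound with any equivalent norm on $[0,1]^2$. Fix any $v$ and any partition $0=t_0<\dots<t_k=1$. Then
\[
\sum_{i=1}^k |U(t_i,v)-U(t_{i-1},v)| \le C\sum_{i=1}^k (t_i-t_{i-1}) = C,
\]
so every section $u\mapsto U(u,v)$ has variation at most $C$. The same bound holds for every section $v\mapsto U(u,v)$. This is stronger than Definition~\ref{def:sectional-bv}, which only asks for almost every section. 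A weaker reading of ``Lipschitz in the label metric'' would require only separate Lipschitz continuity in each variable, with constant $C$ for a.e. section. The same computation gives BV of a.e. section in that case too.

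\emph{Step 3: conclusion.} Combining the two steps, a Lipschitz kernel weakly equivalent to $W$ would be sectionally BV, which contradicts Step 1. Hence no weakly equivalent kernel is Lipschitz.

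The main subtlety is that Lipschitz continuity is a pointwise property, while weak equivalence only pins down $U$ up to null sets. This causes no trouble here. The contradiction targets the specific kernel $U$ assumed to be Lipschitz. Theorem~\ref{thm:no-regular-equivalent} applies to every weakly equivalent kernel, so it applies to that one, and Step 2 shows it is sectionally BV.
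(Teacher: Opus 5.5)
Your proof is correct and matches the paper's intended argument. The paper gives no separate proof: it treats the first sentence as a restatement of Theorem~\ref{thm:no-regular-equivalent}, and the second as the immediate consequence that a Lipschitz kernel has every section of variation at most its Lipschitz constant, hence is sectionally bounded variation. Your note that the corollary inherits the conditional (equivalence-lifting) reading of that theorem is also accurate.
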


The digraphon kernel $W(u,v) = K(\phi(u), \phi(v))$ is thus a ``scrambled'' version of $K$:
\begin{itemize}
\item Labels $u_1 \approx u_2$ in $[0,1]$ may map to distant positions $\phi(u_1), \phi(u_2)$ in $\mblue{\Omega}$
\item Nearby positions in $\mblue{\Omega}$ arise from distant labels in $[0,1]$
\item In the standard label metric on $[0,1]$, $W$ is not Lipschitz
\item Clustering, interpolation, and PDE evolution on $[0,1]$ have no geometric meaning
\end{itemize}

\subsection{Dense-to-sparse interpolation}

A fundamental limitation of classical graphon theory is the \emph{dense graph assumption}: sampling $N$ nodes and connecting with probability $W(x_i, x_j)$ yields $\mathbb{E}[E] = O(N^2)$ edges. Sparse graphs require extensions such as graphexes~\cite{veitch2015class,caron2017sparse} or $L^p$ graphon theory~\cite{borgs2018sparse1}.

IDPG naturally interpolates between dense and sparse regimes through the realization rules:

\begin{center}
\begin{tabular}{lll}
\hline
\textbf{Rule} & \textbf{Expected edges} & \textbf{Scaling} \\
\hline
Perennial & $\mblue{\Lambda}^2 \cdot (\mbgreen{\tilde{\mu}_G} \cdot \mbred{\tilde{\mu}_R})$ & $O(N^2)$: dense \\
Ephemeral & $2 \mblue{\Lambda} \cdot (\mbgreen{\tilde{\mu}_G} \cdot \mbred{\tilde{\mu}_R})$ & $O(N)$: sparse \\
Intermediate & $p_{\text{overlap}} \cdot (\mblue{\Lambda}^2 + \mblue{\Lambda}) \cdot (\mbgreen{\tilde{\mu}_G} \cdot \mbred{\tilde{\mu}_R})$ & tunable \\
\hline
\end{tabular}
\end{center}

In the intermediate regime, the overlap probability $p_{\text{overlap}}$ depends on mean lifetime $\eta$ relative to observation window $W$. If the probability of overlap scales with population size as $p_{\text{overlap}} \propto \mblue{\Lambda}^{-k}$ for $k \in [0, 1]$, then:
\[
\mathbb{E}[E] \approx \mblue{\Lambda}^{-k} \cdot (\mblue{\Lambda}^2 + \mblue{\Lambda}) \cdot c = O(\mblue{\Lambda}^{2 - k}) = O(N^{2 - k})
\]

\textbf{Coupling lifetime to population size.} The interpolation $p_{\text{overlap}} \propto \mblue{\Lambda}^{-k}$ requires specifying how the temporal parameters scale with total intensity. Recall that for expected lifetime $\tau \sim \mathrm{Exp}(\eta)$, we have $p_{\text{overlap}}(\eta, W) = \frac{2}{u^2} (u - 1 + e^{-u})$ with $u = W / \eta$. In the perennial limit $u \to 0$, we get $p_{\text{overlap}} \to 1$, hence $k = 0$. In the ephemeral limit $u \to \infty$, we get $p_{\text{overlap}} \approx 2/u \to 0$, but the value of $k$ depends on how $u$ grows with $\mblue{\Lambda}$.

If $u$ remains constant as $\mblue{\Lambda} \to \infty$, then $p_{\text{overlap}}$ is also constant, yielding dense $O(N^2)$ graphs regardless of lifetime. The interesting sparse regimes emerge when $u$ grows, and hence $p_{\text{overlap}}$ shrinks, with $\mblue{\Lambda}$. Suppose $u \propto \mblue{\Lambda}^k$ for some $k \in [0, 1]$. In the large-$u$ limit, $p_{\text{overlap}} \approx 2/u \propto \mblue{\Lambda}^{-k}$, recovering the desired scaling. Since $u = W / \eta$, this can arise from fixed $W$ with $\eta \propto \mblue{\Lambda}^{-k}$, fixed $\eta$ with $W \propto \mblue{\Lambda}^k$, or any combination where $W / \eta \propto \mblue{\Lambda}^k$.\footnote{Consider for example a stationary process with constant instantaneous intensity $\lambda_t$, so that $\mblue{\Lambda} = \lambda_t \cdot W$. If lifetimes scale with the observation window as $\eta = \eta_0 \cdot W^\theta$ for some $\theta \in [0, 1]$, then $u = W / \eta \propto \mblue{\Lambda}^{1 - \theta}$, giving $k = 1 - \theta$. Fixed lifetimes ($\theta = 0$) yield sparse $O(N)$ graphs; lifetimes growing proportionally with the observation window ($\theta = 1$) yield dense $O(N^2)$ graphs.}

Another, alternative route to tunable sparsity arises from \textbf{growing populations} with density-dependent per-capita birth rates. Suppose each living individual gives rise to new individuals at rate $b(N)$ depending on current population size. In the growth-dominated regime (neglecting deaths), letting $N_t$ be the number of individuals alive at time $t$, the population evolves following an instantaneous birth rate given by:
\[
\lambda(t) = \frac{dN_t}{dt} = b(N_t) \cdot N_t
\]
The total intensity is $\mblue{\Lambda} = \int_0^W \lambda(t) \, dt$, and a uniformly sampled individual has birth time distributed with density $\lambda(t) / \mblue{\Lambda}$.
The overlap probability for two independently sampled individuals is:
\begin{align}
p_{\text{overlap}} &= \iint_0^W p(\text{overlap} \mid t_1, t_2) \frac{\lambda(t_1)}{\mblue{\Lambda}} \frac{\lambda(t_2)}{\mblue{\Lambda}} \, dt_1 \, dt_2 \nonumber \\
&= \frac{1}{\mblue{\Lambda}^2} \iint_0^W e^{-|t_2 - t_1| / \eta} \lambda(t_1) \lambda(t_2) \, dt_1 \, dt_2
\end{align}

\textbf{Constant per-capita rate} ($b(N) = b_0$). The population grows as $N_t = N_0 e^{b_0 t}$, giving $\lambda(t) = b_0 N_0 e^{b_0 t}$.
As $W \to \infty$, it is possible to show that $p_{\text{overlap}}$ converges to $(b_0 \eta) / (b_0 \eta + 1)$, independent of $\mblue{\Lambda}$. Hence $k = 0$: dense $O(N^2)$ scaling.

\textbf{Density-dependent rate} ($b(N) = b_0 N^{-\delta}$ for $\delta \in (0, 1]$). Solving $dN / dt = b_0 N^{1 - \delta}$ gives $N_t \propto t^{1 / \delta}$, and thus:
\[
\lambda(t) = b(N(t)) \cdot N(t) = b_0 N(t)^{1 - \delta} \propto t^{(1 - \delta) / \delta}
\]

The total intensity scales as $\mblue{\Lambda} = \int_0^W t^{(1 - \delta) / \delta} \, dt = \delta W^{1 / \delta}$. In the large-$u$ limit ($u = W / \eta \propto \mblue{\Lambda}^\delta$), the overlap integral yields $p_{\text{overlap}} \propto 1 / u \propto \mblue{\Lambda}^{-\delta}$, hence $k = \delta$:

\begin{center}
\begin{tabular}{llll}
\hline
$\delta$ & Per-capita birth rate $b(N)$ & $k$ & Edge scaling \\
\hline
$0$ & constant ($b_0$) & $0$ & $O(N^2)$ \\
$1/2$ & $b_0 / \sqrt{N}$ & $1/2$ & $O(N^{3/2})$ \\
$1$ & $b_0 / N$ & $1$ & $O(N)$ \\
\hline
\end{tabular}
\end{center}

Stronger density dependence (larger $\delta$) yields sparser graphs: crowding spreads births more evenly across time, reducing temporal overlap.

In summary, the sparsity exponent $k$ captures how interaction opportunities scale with population: $k = 0$ yields dense graphs where everyone interacts with everyone, while $k=1$ yields sparse graphs where each individual maintains a roughly constant number of interaction partners regardless of total population size.

\subsection{Summary}

We saw that an IDPG can be represented as a bilinear digraphon, but at the cost of losing the geometric interpretability and local regularity that the dot-product kernel on $\mblue{\Omega}$ provides.

\begin{itemize}
\item $\mathrm{IDPG}$ is a strict subset of $\mathrm{Digraphon}$. Perennial IDPG is a strict subclass of digraphons: graph distributions induced by IDPG models can be represented by bilinear digraphons $W(u,v) = f(u) \cdot h(v)$. Non-bilinear digraphons (e.g., $W(u,v) = \mathds{1}_{|u-v| < \epsilon}$) lie outside the family of IDPG.
\item \textbf{No regular equivalent digraphon (generic case).} The IDPG affinity kernel is Lipschitz on $\mblue{\Omega}$: Euclidean distance governs interaction similarity. This enables meaningful clustering, smooth interpolation, and well-posed PDE dynamics on $\mblue{\Omega}$. Under the non-degenerate assumptions of Theorem~\ref{thm:no-regular-equivalent}, every weakly equivalent digraphon representation inherits the same obstruction: no equivalent kernel can be sectionally bounded variation in label space.
\end{itemize}

\textbf{Tunable density.} The intermediate regime provides principled interpolation between dense $O(N^2)$ and sparse $O(N)$ scaling through individual lifetime.

\begin{center}
\begin{tabular}{lll}
\hline
\textbf{Property} & \textbf{IDPG} & \textbf{Equivalent digraphon} \\
\hline
Kernel & $K(s,t) = \mgreen{\vec{g}_s} \cdot \mred{\vec{r}_t}$ & $W(u,v) = K(\phi(u), \phi(v))$ \\
Continuous & \checkmark & \checkmark{} (via Lebesgue curve) \\
a.e.\ differentiable & \checkmark & \checkmark{} (via Lebesgue curve) \\
Lipschitz & \checkmark & \texttimes \\
$C^1$ & \checkmark & \texttimes \\
Euclidean coherence & \checkmark{} (nearby $\Rightarrow$ similar) & \texttimes{} (scrambled geometry) \\
\hline
\end{tabular}
\end{center}

\section{The heat maps}\label{sec:heatmap}

In classical RDPG models, the interaction structure is fully captured by the probability matrix $\mathbf{P}$ with entries $P_{ij} = \mgreen{\vec{g}_i} \cdot \mred{\vec{r}_j}$. This matrix encodes, for each pair of nodes, the probability of observing an edge between them. The classic random graphs theory seems to justify the tongue-in-cheek quote, often attributed to Gian Carlo Rota, that probability is the study of combinatorics divided by N.

Our intent in introducing the family of Intensity Graph model is, in large part, to dispel this supremacy of discrete over continuous objects. In the following session we will introduce various mathematical objects, in the forms of maps and operators, that, together with tools from spectral analysis, build a ``calculus'' for Intensity Graphs. Although we also establish links with more classic, and discrete, views of random graphs, we posit that are these operators to be the ideal locus of mathematical analysis of Intensity Graphs.

\subsection{Raw heat}

The natural analog of the probability matrix is a measure-theoretic object that captures interaction structure between regions of $\mblue{\Omega}$, rather than between discrete nodes. We call this object the \emph{heat map}.

\begin{definition}[Raw heat]\label{def:raw-heat}
For an IDPG with intensity $\mbblue{\rho}$ on $\mblue{\Omega} = \mgreen{B^d_+} \times \mred{B^d_+}$ and affinity kernel $K(s,t) = \mgreen{\vec{g}_s} \cdot \mred{\vec{r}_t}$, the \emph{raw heat density} is:
\[
h(s, t) = K(s, t) \cdot \mbblue{\rho}(s) \cdot \mbblue{\rho}(t) = (\mgreen{\vec{g}_s} \cdot \mred{\vec{r}_t}) \cdot \mbblue{\rho}(s) \cdot \mbblue{\rho}(t)
\]

For Borel sets $A, B \subseteq \mblue{\Omega}$, the \emph{raw heat map} is:
\[
\mathcal{H}(A, B) = \int_A \int_B h(s, t) \, ds \, dt
\]
\end{definition}

The raw heat map is a measure on the product $\sigma$-algebra $\mathcal{B}(\mblue{\Omega}) \otimes \mathcal{B}(\mblue{\Omega})$. It depends only on the intensity $\mbblue{\rho}$ and the affinity kernel $K$, not on the choice of realization rule.

\textbf{Interpretation and dimensions.} The raw heat density $h(s,t)$ has dimensions $[\mbblue{\rho}]^2$. If $\mbblue{\rho}$ has dimensions of ``individuals per unit volume,'' then $h$ has dimensions of ``individuals\textsuperscript{2} per unit volume\textsuperscript{2} in $\mblue{\Omega} \times \mblue{\Omega}$.'' Integrating over regions $A \times B$ yields $\mathcal{H}(A, B)$, which, by the second factorial moment formula (which computes the expected number of ordered pairs of distinct points in a Poisson process; see Section~\ref{sec:expected-edges} and Appendix~\ref{appendix:derivations} for a detailed treatment), equals the expected number of edges from $A$ to $B$ under perennial sampling. The affinity kernel $K(s,t) \in [0, 1]$ is dimensionless (a probability), ensuring dimensional consistency.

\textbf{Properties.} The raw heatmap is:
\begin{itemize}
\item \emph{Asymmetric}: $\mathcal{H}(A, B) \neq \mathcal{H}(B, A)$ in general, reflecting directed edges
\item \emph{Additive}: $\mathcal{H}(A_1 \cup A_2, B) = \mathcal{H}(A_1, B) + \mathcal{H}(A_2, B)$ for disjoint $A_1, A_2$
\end{itemize}

And the total raw heat gives the expected number of edges (in the perennial regime): $\mathcal{H}(\mblue{\Omega}, \mblue{\Omega}) = \mathbb{E}[E]_{\text{perennial}}$

\subsubsection{What the heat map captures}

The heat map provides a complete characterization of expected edge structure: $\mathcal{H}(A, B)$ gives the expected number of edges from $A$ to $B$.

Under perennial sampling, edges are conditionally independent given node positions, but node positions are themselves random (sampled from a PPP with intensity $\mbblue{\rho}$). The heat map captures expected edge counts, but one might wonder whether it also determines the underlying intensity, and hence the full graph distribution.

\textbf{Identifiability (a.e., density case).} If $\mbblue{\rho}$ admits a density (no singular/atomic component) and $K(s,s) > 0$ almost everywhere, then $\mbblue{\rho}$ is determined almost everywhere by the raw heat map $\mathcal{H}$ (equivalently by $h$), via:
\[
\mbblue{\rho}(s) = \sqrt{h(s,s) / K(s,s)} \quad \text{for a.e.\ } s
\]
If $K(s,s)=0$ on a positive-measure set, or if singular/atomic components are allowed, additional assumptions are needed for uniqueness.\footnote{In classical RDPG, the invariance to orthogonal transformations makes it impossible to estimate latent positions from an observed graph in absolute coordinates. In the IG case, with an absolute coordinate system and under the regularity conditions above, the map from intensity to heat map is injective up to null sets.}

\subsection{Bound heat (product case)}

Under product intensity $\mbblue{\rho} = \mbgreen{\rho_G} \otimes \mbred{\rho_R}$, the raw heat admits a lower-dimensional representation that separates the ``proposing'' and ``accepting'' coordinates.

\textbf{Coordinate projections.} Recall that a position $s \in \mblue{\Omega}$ has coordinates $s = (\mgreen{\vec{g}_s}, \mred{\vec{r}_s})$. The affinity kernel $K(s, t) = \mgreen{\vec{g}_s} \cdot \mred{\vec{r}_t}$ depends only on the green coordinate of the source and the red coordinate of the target. This asymmetry motivates projecting the full $4d$-dimensional space $\mblue{\Omega} \times \mblue{\Omega}$ onto the $2d$-dimensional space of ``active'' coordinates $(\mgreen{\vec{g}_s}, \mred{\vec{r}_t}) \in B^d_+ \times B^d_+$.

\textbf{Green and red bites.} Define cylinder sets that constrain specific coordinates:
\begin{itemize}
\item \emph{Green bite}: For $a \subseteq \mgreen{B^d_+}$, let $\mathcal{G}(a) = a \times \mred{B^d_+}$ (positions with green coordinate in $a$)
\item \emph{Red bite}: For $b \subseteq \mred{B^d_+}$, let $\mathcal{R}(b) = \mgreen{B^d_+} \times b$ (positions with red coordinate in $b$)
\end{itemize}

A green bite constrains where individuals ``propose from'' (their $\mgreen{\vec{g}}$ coordinate); a red bite constrains where individuals ``accept at'' (their $\mred{\vec{r}}$ coordinate).

\begin{definition}[Bound heat]\label{def:bound-heat}
Under product intensity, the \emph{bound heat density} is a function on $\mgreen{B^d_+} \times \mred{B^d_+}$:
\[
\overline{h}(\mgreen{\vec{g}}, \mred{\vec{r}}) = (\mgreen{\vec{g}} \cdot \mred{\vec{r}}) \cdot \mbgreen{\rho_G}(\mgreen{\vec{g}}) \cdot \mbred{\rho_R}(\mred{\vec{r}})
\]

This is the projection of the raw heat density onto the coordinates that appear in the kernel. The \emph{bound heat map} for $a, b \subseteq \mgreen{B^d_+}, \mred{B^d_+}$ respectively is:
\[
\overline{\mathcal{H}}(a, b) = \int_a \int_b \overline{h}(\mgreen{\vec{g}}, \mred{\vec{r}}) \, d\mgreen{\vec{g}} \, d\mred{\vec{r}}
\]
\end{definition}

The bound heat map is a measure on $\mathcal{B}(\mgreen{B^d_+}) \otimes \mathcal{B}(\mred{B^d_+})$, living in dimension $2d$ rather than $4d$.

\begin{proposition}[Bite-to-heat correspondence]\label{prop:bite-heat}
Under product intensity:
\[
\mathcal{H}(\mathcal{G}(a), \mathcal{R}(b)) = \mblue{\Lambda} \cdot \overline{\mathcal{H}}(a, b)
\]
\end{proposition}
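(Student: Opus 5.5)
The plan is a direct computation with Fubini--Tonelli. I would unfold the definition of the raw heat map on the cylinder sets and then integrate out the two coordinates that the kernel does not see. Write the source as $s=(\mgreen{\vec{g}_s},\mred{\vec{r}_s})$ and the target as $t=(\mgreen{\vec{g}_t},\mred{\vec{r}_t})$. By Definition~\ref{def:raw-heat} and the product form $\mbblue{\rho}(s)=\mbgreen{\rho_G}(\mgreen{\vec{g}_s})\,\mbred{\rho_R}(\mred{\vec{r}_s})$,
\[
\mathcal{H}(\mathcal{G}(a),\mathcal{R}(b))=\int_{a\times B^d_+}\int_{B^d_+\times b}(\mgreen{\vec{g}_s}\cdot\mred{\vec{r}_t})\,\mbgreen{\rho_G}(\mgreen{\vec{g}_s})\mbred{\rho_R}(\mred{\vec{r}_s})\,\mbgreen{\rho_G}(\mgreen{\vec{g}_t})\mbred{\rho_R}(\mred{\vec{r}_t})\,dt\,ds .
\]
Here $\mgreen{\vec{g}_s}$ ranges over $a$, $\mred{\vec{r}_s}$ over all of $B^d_+$, $\mgreen{\vec{g}_t}$ over all of $B^d_+$, and $\mred{\vec{r}_t}$ over $b$.

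The key steps, in order, are as follows.
\begin{enumerate}
\item Observe that the integrand is non-negative and measurable. The kernel is in $[0,1]$ on $B^d_+\times B^d_+$ and the intensities are non-negative. So Tonelli lets me split the $4d$-dimensional integral into iterated integrals over the four coordinate blocks in any order.
\item The kernel depends only on $(\mgreen{\vec{g}_s},\mred{\vec{r}_t})$. So the $\mred{\vec{r}_s}$-integral factors out as $\int_{B^d_+}\mbred{\rho_R}=\mred{c_R}$, and the $\mgreen{\vec{g}_t}$-integral factors out as $\int_{B^d_+}\mbgreen{\rho_G}=\mgreen{c_G}$.
\item What remains is $\mgreen{c_G}\mred{c_R}\int_a\int_b(\mgreen{\vec{g}}\cdot\mred{\vec{r}})\,\mbgreen{\rho_G}(\mgreen{\vec{g}})\,\mbred{\rho_R}(\mred{\vec{r}})\,d\mred{\vec{r}}\,d\mgreen{\vec{g}}$. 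This is exactly $\overline{\mathcal{H}}(a,b)$ by Definition~\ref{def:bound-heat}.
\item Conclude using the identity $\mblue{\Lambda}=\mgreen{c_G}\mred{c_R}$ established in the product-case discussion.
\end{enumerate}

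There is no serious obstacle. The only points needing care are bookkeeping ones. First, I must correctly identify which coordinate blocks are constrained by the green bite on the source and the red bite on the target. Second, I must justify the interchange of integration. Non-negativity handles the interchange via Tonelli, without any integrability assumption beyond $\mgreen{c_G},\mred{c_R}<\infty$. That finiteness follows from $\mblue{\Lambda}<\infty$ when both factors are nonzero; if either is zero, both sides vanish. I would state this degenerate case explicitly in one line.
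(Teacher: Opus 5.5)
Your proposal is correct and matches the paper's proof: expand the raw heat over $\mathcal{G}(a)\times\mathcal{R}(b)$, note that the kernel depends only on $\vec{g}_s$ and $\vec{r}_t$, and integrate out $\vec{r}_s$ and $\vec{g}_t$ to obtain the factor $c_R\, c_G = \Lambda$. Your explicit appeal to Tonelli via non-negativity, and your one-line handling of the degenerate case $c_G c_R = 0$, are small additions in rigor that the paper leaves implicit.
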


\begin{proof}
Expanding the raw heat integral over $s \in \mathcal{G}(a)$ and $t \in \mathcal{R}(b)$:
\[
\mathcal{H}(\mathcal{G}(a), \mathcal{R}(b)) = \int_{\mgreen{\vec{g}_s} \in a} \int_{\mred{\vec{r}_s} \in B^d_+} \int_{\mgreen{\vec{g}_t} \in B^d_+} \int_{\mred{\vec{r}_t} \in b} (\mgreen{\vec{g}_s} \cdot \mred{\vec{r}_t}) \, \mbgreen{\rho_G}(\mgreen{\vec{g}_s}) \, \mbred{\rho_R}(\mred{\vec{r}_s}) \, \mbgreen{\rho_G}(\mgreen{\vec{g}_t}) \, \mbred{\rho_R}(\mred{\vec{r}_t})
\]
The kernel $\mgreen{\vec{g}_s} \cdot \mred{\vec{r}_t}$ depends only on $\mgreen{\vec{g}_s}$ and $\mred{\vec{r}_t}$. The coordinates $\mred{\vec{r}_s}$ and $\mgreen{\vec{g}_t}$ do not appear in the kernel; under product intensity, the integrals over these coordinates factor out, yielding $\mred{c_R}$ and $\mgreen{c_G}$ respectively:
\[
= \mred{c_R} \cdot \mgreen{c_G} \cdot \int_a \int_b (\mgreen{\vec{g}} \cdot \mred{\vec{r}}) \, \mbgreen{\rho_G}(\mgreen{\vec{g}}) \, \mbred{\rho_R}(\mred{\vec{r}}) \, d\mgreen{\vec{g}} \, d\mred{\vec{r}} = \mblue{\Lambda} \cdot \overline{\mathcal{H}}(a, b)
\]
\end{proof}

The factor $\mblue{\Lambda} = \mgreen{c_G} \cdot \mred{c_R}$ arises from integrating out the coordinates that do not appear in the kernel.

\begin{figure}[htbp]
\centering
\includegraphics[width=\textwidth]{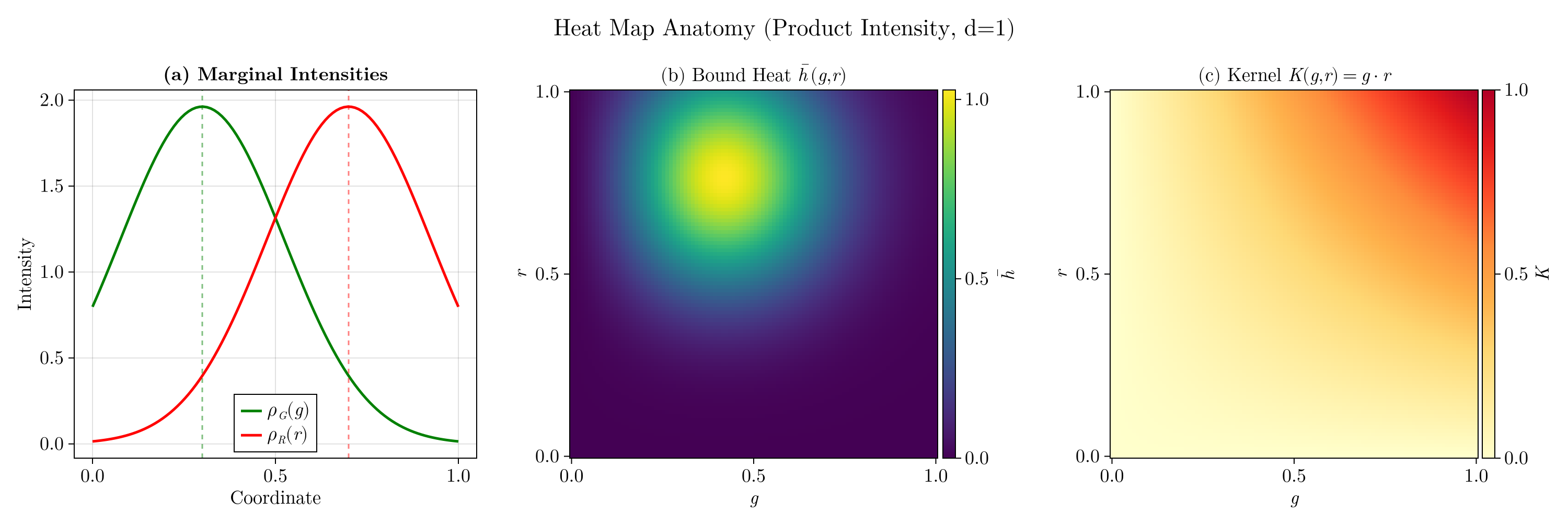}
\caption{\textbf{Heat map anatomy under product intensity ($d = 1$).} (a) Component intensities $\mbgreen{\rho_G}(g)$ and $\mbred{\rho_R}(r)$: the primitive objects defining the product intensity $\mbblue{\rho}(g,r) = \mbgreen{\rho_G}(g) \cdot \mbred{\rho_R}(r)$. Dashed lines mark the centers $g_0 = 0.3$ and $r_0 = 0.7$. (b) Bound heat $\overline{h}(g,r) = K(g,r) \cdot \mbgreen{\rho_G}(g) \cdot \mbred{\rho_R}(r)$, showing how the interaction landscape concentrates where both intensities and kernel are large. (c) Affinity kernel $K(g,r) = g \cdot r$. Under product intensity, raw heat $\mathcal{H}$ and bound heat $\overline{\mathcal{H}}$ have the same shape, differing only by a scalar factor: $\mathcal{H} = \mblue{\Lambda} \cdot \overline{\mathcal{H}}$.}
\label{fig:heat_anatomy}
\end{figure}

\subsubsection{Heat between other bite combinations}

For completeness, we record the raw heat between all combinations of green and red bites. Let $\mgreen{c_G}(a) = \int_a \mbgreen{\rho_G}$ denote the mass in green region $a$, and $\mbgreen{\mu_G}(a) = \int_a \mgreen{\vec{g}} \, \mbgreen{\rho_G}(\mgreen{\vec{g}}) \, d\mgreen{\vec{g}}$ the (unnormalized) mean green position in $a$. Define $\mred{c_R}(b)$ and $\mbred{\mu_R}(b)$ analogously.

\begin{center}
\begin{tabular}{lll}
\hline
\textbf{Source} & \textbf{Target} & \textbf{Raw heat $\mathcal{H}$} \\
\hline
$\mathcal{G}(a)$ & $\mathcal{R}(b)$ & $\mblue{\Lambda} \cdot \overline{\mathcal{H}}(a, b)$ \quad \textbf{(bound heat)} \\
$\mathcal{G}(a)$ & $\mathcal{G}(a')$ & $\mred{c_R} (\mbgreen{\mu_G}(a) \cdot \mbred{\mu_R}) \mgreen{c_G}(a')$ \\
$\mathcal{R}(b)$ & $\mathcal{R}(b')$ & $\mgreen{c_G} (\mbgreen{\mu_G} \cdot \mbred{\mu_R}(b')) \mred{c_R}(b)$ \\
$\mathcal{R}(b)$ & $\mathcal{G}(a)$ & $(\mbgreen{\mu_G} \cdot \mbred{\mu_R}) \mred{c_R}(b) \mgreen{c_G}(a)$ \\
\hline
\end{tabular}
\end{center}

Only the green-to-red combination ($\mathcal{G}(a) \to \mathcal{R}(b)$) captures local interaction structure through the bound heat. The other combinations involve global intensity-weighted means: they encode how much mass is in each region, weighted by average interaction propensity, but not the fine spatial structure of who-connects-to-whom.

This asymmetry reflects the kernel structure: $K(s,t) = \mgreen{\vec{g}_s} \cdot \mred{\vec{r}_t}$ uses the green coordinate of the source and the red coordinate of the target. Green bites constrain sources ``where they act from''; red bites constrain targets ``where they receive.''

\subsubsection{When bound heat fails: non-product intensity}

The bound heat representation relies on the product structure $\mbblue{\rho} = \mbgreen{\rho_G} \otimes \mbred{\rho_R}$. Without it, the ``inactive'' coordinates $(\mred{\vec{r}_s}, \mgreen{\vec{g}_t})$ do not factor out, and no $2d$-dimensional summary exists.

\begin{figure}[htbp]
\centering
\includegraphics[width=\textwidth]{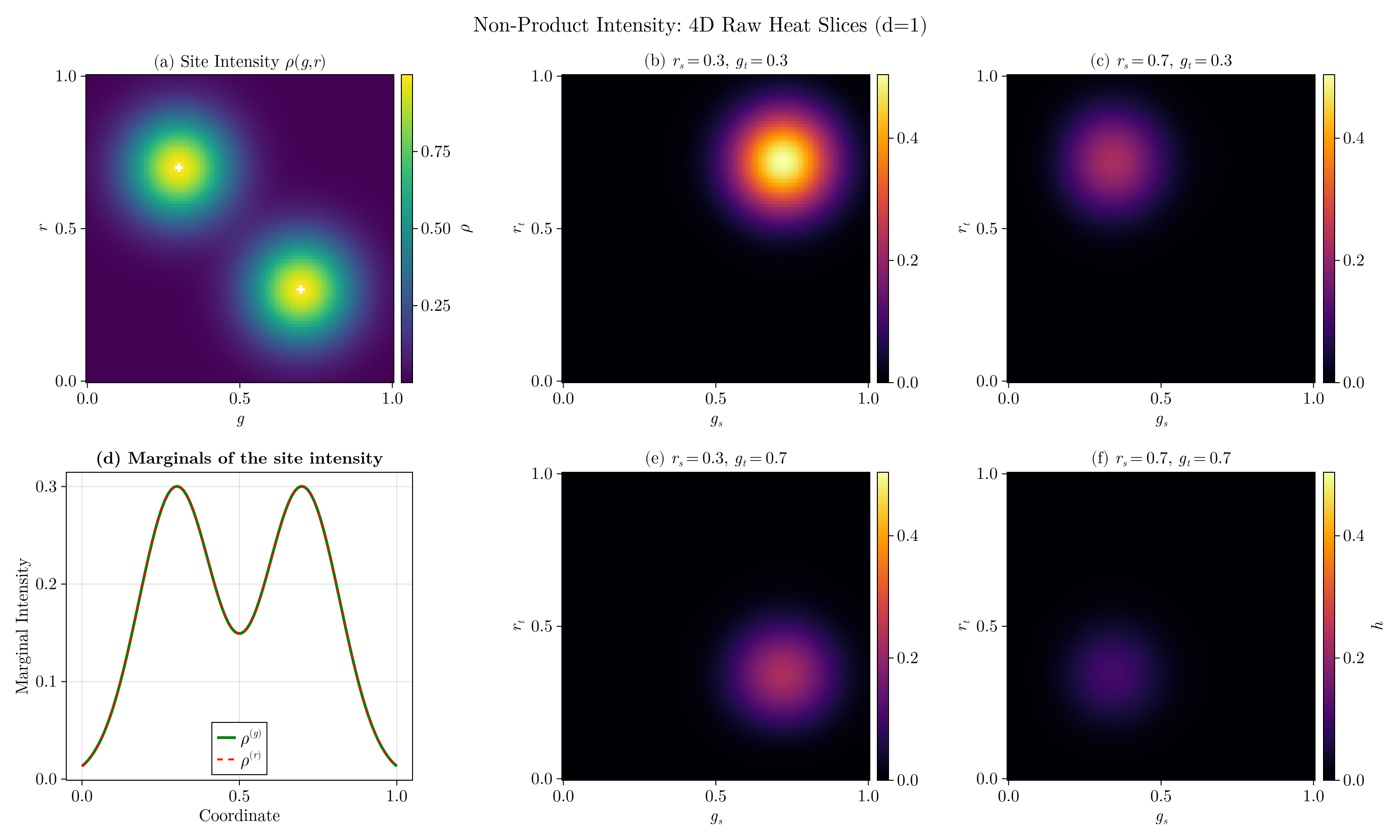}
\caption{\textbf{Non-product intensity and the failure of dimensional reduction.} (a) Intensity $\mbblue{\rho}(g,r)$ consisting of two Gaussian blobs at $(0.3, 0.7)$ and $(0.7, 0.3)$. (d) Marginals $\rho^{(g)}(g) = \int \mbblue{\rho} \, dr$ and $\rho^{(r)}(r) = \int \mbblue{\rho} \, dg$ of the intensity. The marginals are identical by symmetry of the blob placement, yet the joint does not factor: $\mbblue{\rho}(g,r) \neq \rho^{(g)}(g) \cdot \rho^{(r)}(r)$. The product of marginals would produce four blobs; the actual joint has only two. (b,c,e,f) Slices of the 4D raw heat $h(g_s, r_t \mid r_s, g_t)$ at four combinations of inactive coordinates. Different slices have different shapes because the inactive coordinates do not factor out. Under product intensity (Figure~\ref{fig:heat_anatomy}), all slices would be proportional, enabling the 2D bound heat summary.}
\label{fig:nonproduct}
\end{figure}

\subsection{Spectral structure of heat}

The heat map, viewed as an integral kernel, defines a compact operator whose spectral decomposition reveals the dominant modes of interaction. The relevant mathematical framework is the spectral theory of Hilbert--Schmidt operators~\cite[Ch.\ VI]{reed1980methods}\cite[Ch.\ 28]{lax2002functional}.

\textbf{The bound heat operator}: Under product intensity, define the \emph{bound heat operator} $\overline{T}: L^2(\mred{B^d_+}) \to L^2(\mgreen{B^d_+})$ by:
\[
(\overline{T} f)(\mgreen{\vec{g}}) = \int_{\mred{B^d_+}} \overline{h}(\mgreen{\vec{g}}, \mred{\vec{r}}) \, f(\mred{\vec{r}}) \, d\mred{\vec{r}}
\]

This maps functions on $\mred{\text{red space}}$ to functions on $\mgreen{\text{green space}}$, encoding how accepting-propensities translate to proposing-propensities through the interaction structure. The operator $\overline{T}$ is Hilbert--Schmidt (hence compact) whenever $\overline{h} \in L^2(\mgreen{B^d_+} \times \mred{B^d_+})$, which holds for any bounded intensity on the bounded domain $B^d_+$.

\subsubsection{Finite rank from the dot product kernel}

The affinity kernel $K(s,t) = \mgreen{\vec{g}_s} \cdot \mred{\vec{r}_t} = \sum_{k=1}^d (\mgreen{g}_s)_k (\mred{r}_t)_k$ is a sum of $d$ rank-1 terms. Consequently, $\overline{T}$ has \emph{rank at most $d$}. This finite-rank property, inherited from the dot product structure, distinguishes IDPG from models based on infinite-rank kernels such as Gaussian RBF.

Explicitly, write:
\[
\overline{h}(\mgreen{\vec{g}}, \mred{\vec{r}}) = \sum_{k=1}^d \underbrace{\mgreen{g}_k \, \mbgreen{\rho_G}(\mgreen{\vec{g}})}_{\alpha_k(\mgreen{\vec{g}})} \cdot \underbrace{\mred{r}_k \, \mbred{\rho_R}(\mred{\vec{r}})}_{\beta_k(\mred{\vec{r}})}
\]

The spectral structure of $\overline{T}$ is determined by the \emph{Gram matrices}. Let
\[
\langle f, g \rangle_{L^2} = \int_{B^d_+} f(x) \, g(x) \, dx
\]
denote the standard $L^2$ inner product. Then:
\[
A_{jk} = \langle \alpha_j, \alpha_k \rangle_{L^2} = \int_{\mgreen{B^d_+}} \mgreen{g}_j \, \mgreen{g}_k \, \mbgreen{\rho_G}(\mgreen{\vec{g}})^2 \, d\mgreen{\vec{g}}
\]
\[
B_{jk} = \langle \beta_j, \beta_k \rangle_{L^2} = \int_{\mred{B^d_+}} \mred{r}_j \, \mred{r}_k \, \mbred{\rho_R}(\mred{\vec{r}})^2 \, d\mred{\vec{r}}
\]

These $d \times d$ matrices encode the ``shape'' of the intensity in each coordinate direction. See Appendix~\ref{appendix:spectral} for the full derivation.

\subsubsection{Singular value decomposition}

Since the kernel is non-symmetric ($\overline{h}(\mgreen{\vec{g}}, \mred{\vec{r}}) \neq \overline{h}(\mred{\vec{r}}, \mgreen{\vec{g}})$ in general), we use \emph{singular value decomposition} rather than eigendecomposition. By the Schmidt decomposition theorem~\cite{schmidt1907theorie}\cite[Ch.\ 28]{lax2002functional}, there exist orthonormal left singular functions $\{u_n\} \subset L^2(\mgreen{B^d_+})$, right singular functions $\{v_n\} \subset L^2(\mred{B^d_+})$, and singular values $\sigma_1 \geq \sigma_2 \geq \cdots \geq \sigma_d \geq 0$ such that:
\[
\overline{h}(\mgreen{\vec{g}}, \mred{\vec{r}}) = \sum_{n=1}^d \sigma_n \, u_n(\mgreen{\vec{g}}) \, v_n(\mred{\vec{r}})
\]

The singular values satisfy $\overline{T} v_n = \sigma_n u_n$ and $\overline{T}^* u_n = \sigma_n v_n$. For symmetric positive-definite kernels, this reduces to the eigendecomposition given by Mercer's theorem~\cite{mercer1909functions}\cite[Ch.\ 28]{lax2002functional}; the non-symmetric case requires the more general singular value decomposition (SVD) framework.

\subsubsection{Interpretation of the spectrum}

The singular values encode interaction structure:

\begin{itemize}
\item $\sigma_1$: The dominant mode: the direction in latent space along which most interaction intensity concentrates
\item $\sum_k \sigma_k^2 = \|\overline{h}\|_{HS}^2$: Total interaction intensity (Hilbert--Schmidt norm squared)
\item Decay rate of $\sigma_k$: How ``low-dimensional'' the effective interaction is. Rapid decay means interactions are well-approximated by fewer than $d$ modes.
\end{itemize}

For concentrated intensities (approaching Dirac masses), the spectrum reflects the positions of the point masses. For diffuse intensities, the spectrum reflects the geometric overlap between $\mbgreen{\rho_G}$ and $\mbred{\rho_R}$ in the latent space.

Perturbation theory~\cite{kato1995perturbation} guarantees stability: if the intensity $\mbblue{\rho}$ changes smoothly, the singular values change continuously. Specifically, Weyl's inequality~\cite{weyl1912asymptotische}\cite[Ch.\ 3]{horn2013matrix} bounds $|\sigma_k(\overline{T}_1) - \sigma_k(\overline{T}_2)| \leq \|\overline{T}_1 - \overline{T}_2\|_{\text{op}}$.

\begin{figure}[htbp]
\centering
\includegraphics[width=0.9\textwidth]{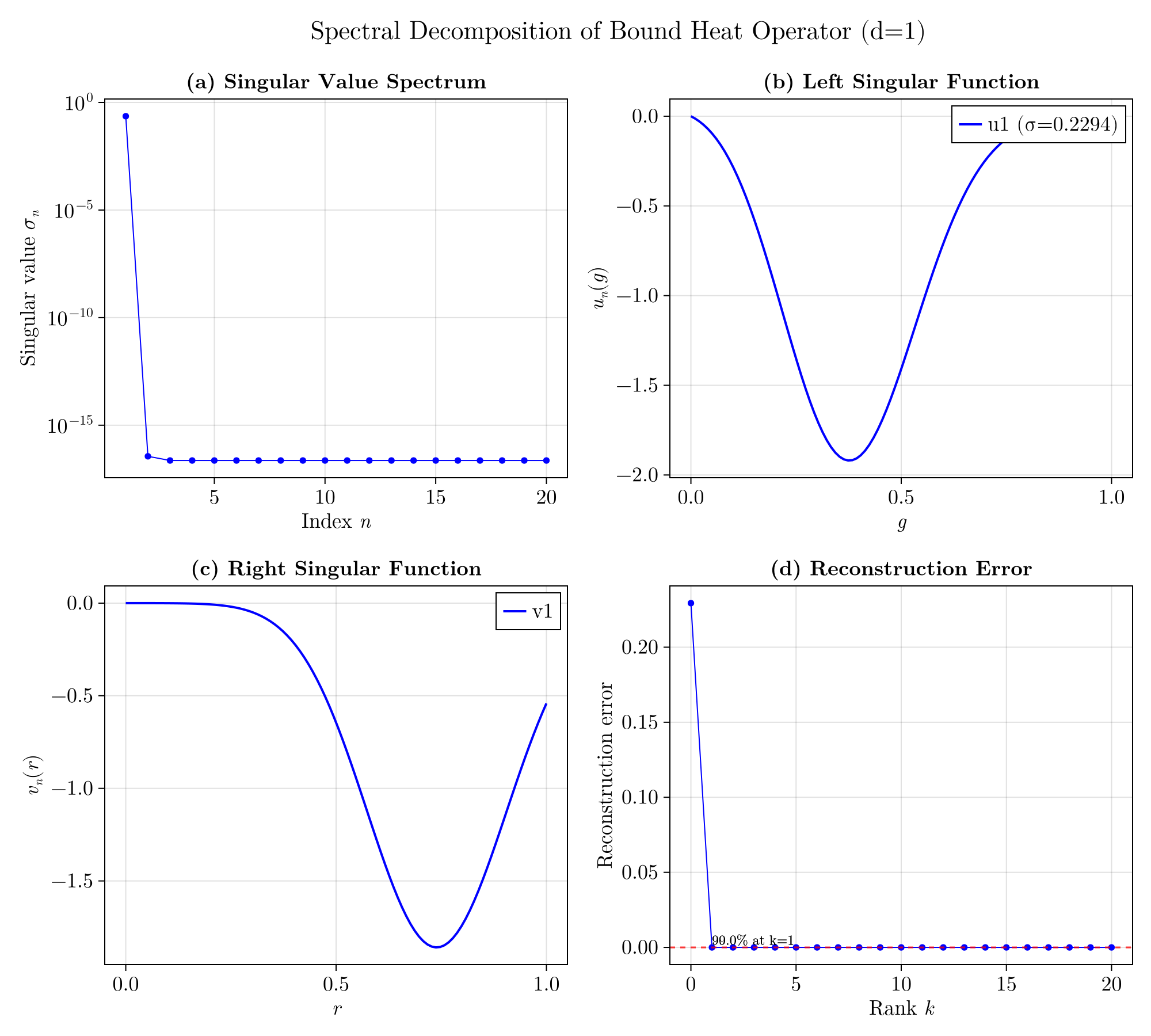}
\caption{\textbf{Spectral decomposition of the bound heat operator ($d = 1$).} (a) Singular value spectrum showing exactly one non-zero singular value $\sigma_1 \approx 0.23$, confirming the theoretical rank bound $\operatorname{rank}(\overline{T}) \leq d$. (b,c) The unique left and right singular functions $u_1(g)$ and $v_1(r)$, whose shapes reflect the intensity-weighted coordinate functions. (d) Reconstruction error as a function of rank $k$; the first mode captures 99\% of the Hilbert--Schmidt norm.}
\label{fig:spectral_1d}
\end{figure}

\subsection{The desire operator}

The bound heat operator $\overline{T}$ incorporates the intensity directly into its kernel: $\overline{h}(g, r) = (g \cdot r) \rho_G(g) \rho_R(r)$, so that both the interaction structure and the population density are entangled. An alternative formulation uses the normalized intensities $\tilde{\rho}_G = \rho_G / c_G$ and $\tilde{\rho}_R = \rho_R / c_R$ as probability distributions describing where individuals are located, and keeps the affinity kernel $K(s,t) = \mgreen{\vec{g}_s} \cdot \mred{\vec{r}_t}$ separate.

\begin{definition}
The \emph{desire operator} $\tilde{D}: L^2(B^d_+, \tilde{\rho}_R) \to L^2(B^d_+, \tilde{\rho}_G)$ is defined by:
\[
(\tilde{D} f)(g) = \int_{B^d_+} (g \cdot r) \, f(r) \, \tilde{\rho}_R(r) \, dr
\]
where $\tilde{\rho}_G = \rho_G / c_G$ and $\tilde{\rho}_R = \rho_R / c_R$ are the normalized marginal intensities (probability densities).
\end{definition}

\textbf{Interpretation.} Given a distribution of receiver profiles $f$ weighted by the population $\tilde{\rho}_R$, the desire operator computes the resulting ``giving desire'' landscape over $G$-space. The value $(\tilde{D} f)(g)$ measures the expected affinity of a node at $g$ towards the population of receivers $f$.
The adjoint $\tilde{D}^*$ computes the reverse: conditional on a certain distribution of givers, where would receivers want to be?

\subsubsection{Spectral structure and Gram matrices}

Like the bound heat operator, $\tilde{D}$ has finite rank (at most $d$). However, its spectrum is governed by the geometry of the weighted $L^2$ spaces induced by the population densities.

Define the weighted inner products for the proposal and acceptance spaces:
\[
\langle u, v \rangle_{\tilde{\rho}_G} = \int_{\mgreen{B^d_+}} u(\mgreen{\vec{g}}) \, v(\mgreen{\vec{g}}) \, \tilde{\rho}_G(\mgreen{\vec{g}}) \, d\mgreen{\vec{g}}
\]
\[
\langle u, v \rangle_{\tilde{\rho}_R} = \int_{\mred{B^d_+}} u(\mred{\vec{r}}) \, v(\mred{\vec{r}}) \, \tilde{\rho}_R(\mred{\vec{r}}) \, d\mred{\vec{r}}
\]

The \emph{desire Gram matrices} are the Gramians of the coordinate functions with respect to these weighted inner products:
\[
(\Sigma_G)_{jk} = \langle \mgreen{g}_j, \mgreen{g}_k \rangle_{\tilde{\rho}_G}
\]
\[
(\Sigma_R)_{jk} = \langle \mred{r}_j, \mred{r}_k \rangle_{\tilde{\rho}_R}
\]

Explicitly, $(\Sigma_G)_{jk} = \int \mgreen{g}_j \, \mgreen{g}_k \, \tilde{\rho}_G \, d\mgreen{\vec{g}} = \mathbb{E}[\mgreen{g}_j \, \mgreen{g}_k]$ and likewise for $\Sigma_R$. These are the \emph{second moment matrices} of the latent positions under the normalized intensities: unlike centred covariance matrices, they do not subtract the mean, so they capture both the spread and the location of the population in the latent space. The singular values of the desire operator are determined by the alignment of these two geometries:
\[
\sigma_k(\tilde{D}) = \sqrt{\lambda_k(\Sigma_G \Sigma_R)}
\]
where $\lambda_k(M)$ denotes the $k$-th largest eigenvalue of the matrix $M$. Note that while the product $\Sigma_G \Sigma_R$ is not symmetric, it is similar to a symmetric positive semi-definite matrix, ensuring real non-negative eigenvalues (see Appendix~\ref{appendix:spectral-desire}).

\subsubsection{Sample estimation}

The desire operator provides a direct link between the measure-centric operators and the observed, discrete, graphs. Under mild conditions, we can see that the spectral decomposition of the observed graphs (that determine important structural property of these discrete objects) converge to the spectral decomposition of the desire operators.

\begin{theorem}[Spectral Consistency of the Adjacency Matrix]\label{thm:spectral-consistency-adjacency}
Let $A_N$ be the adjacency matrix of a graph sampled from the perennial IDPG model \emph{conditional on having $N$ nodes} (equivalently: sample $N$ latent positions i.i.d.\ from $\mu = \mbblue{\rho} / \mblue{\Lambda}$, then sample edges conditionally independently). Let
\[
P_N = \mathbb{E}[A_N \mid (\mgreen{\vec{g}_i}, \mred{\vec{r}_i})_{i=1}^N]
\]
with entries $(P_N)_{ij} = \mgreen{\vec{g}_i} \cdot \mred{\vec{r}_j}$. As $N \to \infty$:
\[
\frac{\sigma_k(A_N)}{N} \xrightarrow{p} \sigma_k(\tilde{D})
\]

where $\xrightarrow{p}$ denotes convergence in probability. That is, for every $\epsilon > 0$,
\[
P\left(\left|\frac{\sigma_k(A_N)}{N} - \sigma_k(\tilde{D})\right| > \epsilon\right) \to 0
\quad \text{as } N \to \infty.
\]
Equivalently, the scaled singular values of the observed graph converge to the singular values of the desire operator.
\end{theorem}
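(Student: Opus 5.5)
The plan is to split the problem into a noise part and a signal part, written as $A_N = P_N + (A_N - P_N)$, and to compare scaled singular values through Weyl's inequality:
\[
\left|\frac{\sigma_k(A_N)}{N} - \frac{\sigma_k(P_N)}{N}\right| \le \frac{\|A_N - P_N\|_{\mathrm{op}}}{N}.
\]
It then suffices to prove two things: (i) $\|A_N - P_N\|_{\mathrm{op}} = o_p(N)$, and (ii) $\sigma_k(P_N)/N \to \sigma_k(\tilde{D})$ almost surely, or at least in probability. Whether self-loops are included on the diagonal only changes $A_N$ by a diagonal matrix of entries in $[0,1]$. That has operator norm at most $1$, so it is negligible after dividing by $N$.

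\textbf{Noise step.} Conditional on the positions $(\vec{g}_i, \vec{r}_i)_{i=1}^N$, the matrix $A_N - P_N$ has independent, centred entries bounded by $1$ in absolute value, with variances at most $1/4$. A Frobenius bound is not enough here, since $\|A_N-P_N\|_F$ is of order $N$. I would therefore use a genuine spectral concentration result for random matrices with independent bounded entries, for instance the Bandeira--van Handel bound, Lata{\l}a's inequality, or matrix Bernstein applied to the sum of the $N^2$ rank-one terms $(A_{ij}-P_{ij})e_ie_j^T$. Any of these gives $\mathbb{E}[\|A_N-P_N\|_{\mathrm{op}} \mid \text{positions}] \le C\sqrt{N}$, up to a $\sqrt{\log N}$ factor in the Bernstein version, with a constant uniform in the positions. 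Markov's inequality then yields $\|A_N-P_N\|_{\mathrm{op}}/N \xrightarrow{p} 0$ unconditionally. This is the step I expect to need the most care: the theorem should quote an off-the-shelf operator-norm bound rather than reprove one, and checking that its hypotheses hold uniformly in the latent positions is the main technical point.

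\textbf{Signal step.} Let $G\in\mathbb{R}^{N\times d}$ have rows $\vec{g}_i$ and $R\in\mathbb{R}^{N\times d}$ have rows $\vec{r}_i$, so that $P_N = GR^T$. The nonzero eigenvalues of $P_N^TP_N = R\,G^TG\,R^T$ coincide with those of $(G^TG)(R^TR)$, by the standard identity that $XY$ and $YX$ share nonzero spectrum. Hence
\[
\frac{\sigma_k(P_N)^2}{N^2} = \lambda_k\!\left(\tfrac{1}{N}G^TG \cdot \tfrac{1}{N}R^TR\right).
\]
The rows $\vec{g}_i$ and $\vec{r}_i$ belonging to the same node may be dependent, but this does not matter, because each Gram matrix is separately an average of i.i.d.\ bounded terms. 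The strong law of large numbers gives $\frac1N G^TG \to \mathbb{E}[\vec{g}\vec{g}^T] = \Sigma_G$ and $\frac1N R^TR \to \Sigma_R$ almost surely, where the expectations are taken under the green and red marginals of $\mu$; under product intensity these are exactly $\tilde\rho_G$ and $\tilde\rho_R$. The product converges entrywise. The product $\Sigma_G\Sigma_R$ is similar to the symmetric matrix $\Sigma_G^{1/2}\Sigma_R\Sigma_G^{1/2}$, and likewise for the empirical version, so the ordered eigenvalues are continuous functions of the matrices. Combining this with the earlier identity $\sigma_k(\tilde{D}) = \sqrt{\lambda_k(\Sigma_G\Sigma_R)}$ gives $\sigma_k(P_N)/N \to \sigma_k(\tilde D)$ almost surely.

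For $k>d$, both $\sigma_k(P_N)$ and $\sigma_k(\tilde D)$ vanish, so the claim reduces to the noise step. Putting the noise and signal steps together with the triangle inequality proves the theorem. I would also remark that for non-product $\mu$ the same argument goes through, provided $\tilde D$ is defined through the marginals of $\mu$.
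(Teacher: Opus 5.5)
Your proposal is correct and follows the same route as the paper: a triangle-inequality split into a Bernoulli-noise term, controlled by Weyl's inequality and an $O(\sqrt{N})$ operator-norm bound on $A_N-P_N$, and a discretization term, controlled by the law of large numbers for the Gram matrices $\frac{1}{N}G^\top G\to\Sigma_G$ and $\frac{1}{N}R^\top R\to\Sigma_R$. You make explicit what the paper only sketches, namely the reduction $\sigma_k(P_N)^2/N^2=\lambda_k\bigl(\frac{1}{N}G^\top G\cdot\frac{1}{N}R^\top R\bigr)$ and a position-uniform operator-norm bound (e.g.\ Lata{\l}a's) followed by Markov's inequality, which also shows that the paper's degree-growth proviso is unnecessary in this dense setting.
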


\begin{proof}
The proof relies on decomposing the error into a ``sampling error'' (approximating integrals with $P$) and a ``Bernoulli noise'' error (realizing edges in $A$).

By the triangle inequality:
\[
\left|\frac{\sigma_k(A_N)}{N} - \sigma_k(\tilde{D})\right| \leq \underbrace{\left|\frac{\sigma_k(A_N)}{N} - \frac{\sigma_k(P_N)}{N}\right|}_{\text{Bernoulli Noise}} + \underbrace{\left|\frac{\sigma_k(P_N)}{N} - \sigma_k(\tilde{D})\right|}_{\text{Discretization Error}}
\]

\begin{enumerate}
\item \textbf{Discretization Error:} Conditional on $N$, we established that $\sigma_k(P_N) / N$ converges to $\sigma_k(\tilde{D})$. This follows from the Law of Large Numbers applied to the Gram matrices; see Appendix~\ref{appendix:spectral-desire}.

\item \textbf{Bernoulli Noise:} By Weyl's inequality, $|\sigma_k(A_N) - \sigma_k(P_N)| \leq \|A_N - P_N\|_{\text{op}}$. The matrix $E_N = A_N - P_N$ consists of independent centered bounded random variables. Standard concentration results for the spectral norm of random matrices~\cite{athreya2018statistical} guarantee that $\|E_N\|_{\text{op}} \leq C \sqrt{N}$ with high probability if the maximum expected degree in the graph grows fast enough~\cite[Sec.\ 3.1]{athreya2018statistical}.
\end{enumerate}

Consequently, the noise term behaves as:
\[
\frac{\|A_N - P_N\|_{\text{op}}}{N} \leq \frac{C \sqrt{N}}{N} = \frac{C}{\sqrt{N}} \to 0
\]

Since both error terms vanish, the result follows.
\end{proof}

This is a conditional-on-size asymptotic statement. In the original PPP formulation with random size $N \sim \text{Poisson}(\mblue{\Lambda})$, the same limit follows along $\mblue{\Lambda} \to \infty$ because $N / \mblue{\Lambda} \to 1$ in probability.

In the scenario of the above theorem we observe only one graph sampled from a certain IDPG model, although a very large one.

We have a similar, albeit weaker, result in the scenario in which we observe many \emph{small} independent graphs, sampled from the same IDPG model.

Even if the graphs have different vertex sets and sizes, the \emph{average} of their scaled singular values are linked to the operator spectrum. In practice we use a non-empty-graph sampling protocol (empty realizations carry no spectral information). Yet, the singular values of a finite graph are biased estimators of the operator spectrum. Averaging over $m$ graphs reduces the \textbf{variance} of the estimate, but not this inherent \textbf{bias}.
\[
|\overline{\sigma}_k - \sigma_k(\tilde{D})| = \mathcal{O}(1/\sqrt{\mblue{\Lambda}}) + \mathcal{O}_p(1/\sqrt{m \mblue{\Lambda}})
\]
Thus, accurate recovery requires the total intensity $\mblue{\Lambda}$, and hence the expected number of nodes, to be sufficiently large. If $\Lambda$ is large, the error is dominated by fluctuations, and observing $m$ graphs reduces the error variance by a factor of $1/m$, effectively scaling the precision with the total number of observed nodes. See Proposition~\ref{prop:multi-graph-consistency} in the Appendix for the rigorous derivation.

\subsubsection{Numerical verification}

Figure~\ref{fig:desire-spectral} verifies these spectral consistency results through Monte Carlo simulation. We generate IDPGs in $d = 4$ dimensions using a two-component mixture intensity on $B^4_+ \times B^4_+$, with theoretical singular values $\sigma_1(\tilde{D}) = 0.509$, $\sigma_2(\tilde{D}) = 0.107$, $\sigma_3(\tilde{D}) \approx 0.032$, and $\sigma_4(\tilde{D}) \approx 0.030$.

Panel (a) demonstrates single-graph convergence: the scaled singular values $\sigma_k(A)/N$ approach their theoretical limits as the total intensity $\Lambda$ (and hence expected node count) increases. The leading singular values $\sigma_1$ and $\sigma_2$ converge rapidly, while $\sigma_3$ and $\sigma_4$, being close to the noise floor $1/\sqrt{\Lambda}$, require larger graphs to distinguish from the fifth singular value, which should be theoretically zero.

Panel (b) confirms that the finite-$\Lambda$ bias scales as $\mathcal{O}(1/\sqrt{\Lambda})$, consistent with the CLT-based argument in the proof. Panel (c) verifies the multi-graph consistency result (Proposition~\ref{prop:multi-graph-consistency} in the Appendix): at fixed $\Lambda = 300$, averaging $m$ independent graphs reduces the standard deviation of $\overline{\sigma}_k$ as $\mathcal{O}(1/\sqrt{m})$, while the bias (set by $\Lambda$) remains unchanged.

A key insight emerges around $\Lambda \approx 10^3$: this is where the noise floor $1/\sqrt{\Lambda} \approx 0.03$ drops below the smallest true singular values, allowing the rank-$d$ structure of the desire operator to become empirically distinguishable. Before this threshold, the signal from $\sigma_3$ and $\sigma_4$ is confounded with noise; after it, the full spectral structure emerges.

\begin{figure}[htbp]
\centering
\includegraphics[width=\textwidth]{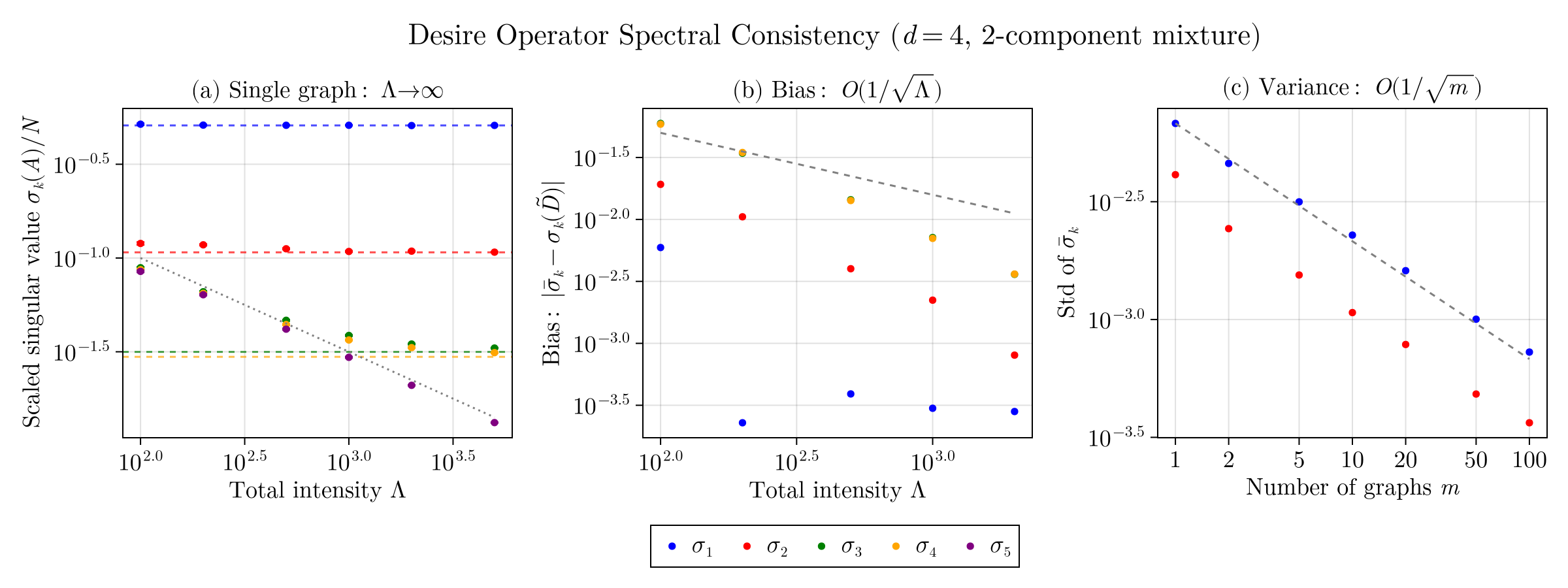}
\caption{\textbf{Numerical verification of spectral consistency.} A two-component mixture intensity in $d = 4$ dimensions with theoretical singular values $\sigma_k(\tilde{D}) = (0.509, 0.107, 0.032, 0.030)$. (a) Single-graph convergence: scaled singular values $\sigma_k(A)/N$ approach theoretical limits as $\Lambda \to \infty$. The noise floor $1/\sqrt{\Lambda}$ (dotted) bounds the estimation precision for small singular values. (b) Finite-intensity bias scales as $\mathcal{O}(1/\sqrt{\Lambda})$, with $\sigma_1$ showing negligible bias. (c) Multi-graph averaging at $\Lambda = 300$: standard deviation decreases as $1/\sqrt{m}$ while bias remains fixed.}
\label{fig:desire-spectral}
\end{figure}

\subsubsection{Relationship to bound heat}

The two operators encode complementary views of the system:

\begin{itemize}
\item \textbf{Bound Heat $\overline{T}$:} Represents the \textbf{total interaction mass}. Its Gram matrices ($A, B$) involve integrals of $\rho^2$. It answers ``Where are the edges located in space?''.
\item \textbf{Desire $\tilde{D}$:} Represents the \textbf{per-capita interaction structure}. Its Gram matrices ($\Sigma_G, \Sigma_R$) involve integrals of $\rho$ (first moments). It answers ``What is the connectivity rule for an average node?''.
\end{itemize}

\subsection{The measure-theoretic Laplacian}

The classic Laplacian is the matrix $L = D - A$~\cite[Ch.\ 1]{chung1997spectral} where $D$ is a diagonal matrix which entries are the nodes degree, and $A$ is the adjacency matrix. The classic Laplacian is central to many results in graph and complex network theory. Here we introduce a measure-theoretic analogous.

Define the \emph{out-degree function}:
\[
d_{\text{out}}(s) = \int_{\mblue{\Omega}} h(s, t) \, dt
\]

Under product intensity, this becomes:
\[
d_{\text{out}}(s) = \mbgreen{\rho_G}(\mgreen{\vec{g}_s}) \, \mbred{\rho_R}(\mred{\vec{r}_s}) \cdot \mred{c_R} \cdot (\mgreen{\vec{g}_s} \cdot \mbred{\tilde{\mu}_R})
\]

The \emph{continuous Laplacian} is the operator $\mathcal{L}: L^2(\mblue{\Omega}) \to L^2(\mblue{\Omega})$:
\[
(\mathcal{L} f)(s) = d_{\text{out}}(s) \, f(s) - \int_{\mblue{\Omega}} h(s, t) \, f(t) \, dt
\]

Equivalently, define the \emph{raw heat operator} $T: L^2(\mblue{\Omega}) \to L^2(\mblue{\Omega})$ by
\[
(T f)(s) = \int h(s,t) \, f(t) \, dt,
\]
and the degree operator $D$ by $(D f)(s) = d_{\text{out}}(s) \, f(s)$. Then $\mathcal{L} = D - T$.

In symmetric/reversible variants (e.g., after symmetrization), the spectral gap controls spreading rates and admits Cheeger-type interpretations~\cite{cheeger1970lower}\cite[Ch.\ 2]{chung1997spectral}. For the general directed non-self-adjoint operator, the spectral interpretation is subtler and is left to future work. In related geometric-random-graph settings, operator-to-graph Laplacian convergence results are known~\cite{hein2007graph,trillos2020error}; establishing the exact analogue for the present directed IDPG construction is left open.\footnote{A full development of the Laplacian's spectral properties (niceties such as Cheeger inequalities, clustering from eigenvectors, connection to random walks) is deferred to future work. The key point here is that the heat map provides the natural kernel for defining such operators.}


\section{Recovery of RDPG}
\label{sec:rdpg-recovery}

The reader might be tempted to read a classic RDPG as a limiting case for an IDPG whose intensities become supported pointwise. The heat map framework allows us to make this analogy more robust, but the two models remain distinct.

\textbf{The Dirac limit.} Consider a sequence of increasingly concentrated intensities converging to point masses. Let $\mbblue{\rho}^{(\epsilon)} = \sum_{i=1}^{N} \rho_i^{(\epsilon)}$ where each $\rho_i^{(\epsilon)}$ is a truncated Gaussian centered at $s_i = (\mgreen{\vec{g}_i}, \mred{\vec{r}_i})$ with variance $\epsilon^2$, normalized so $\int \rho_i^{(\epsilon)} = 1$. Then $\mblue{\Lambda}^{(\epsilon)} = N$ for all $\epsilon$.

As $\epsilon \to 0$, the intensity converges weakly to a sum of Dirac measures:
\[
\mbblue{\rho}^{(\epsilon)} \to \mbblue{\rho} = \sum_{i=1}^{N} \delta_{s_i}
\]

\textbf{Measure-theoretic interpretation.} When $\mbblue{\rho}$ is a sum of Dirac measures, the raw heat becomes a discrete measure on $\mblue{\Omega} \times \mblue{\Omega}$. For singletons $\{s_i\}$ and $\{s_j\}$, the Dirac measure satisfies $\delta_{s_i}(\{s_i\}) = 1$, so:
\[
\mathcal{H}(\{s_i\}, \{s_j\}) = (\mgreen{\vec{g}_i} \cdot \mred{\vec{r}_j}) \cdot \delta_{s_i}(\{s_i\}) \cdot \delta_{s_j}(\{s_j\}) = \mgreen{\vec{g}_i} \cdot \mred{\vec{r}_j} = P_{ij}
\]

The raw heat at point masses recovers the entries of the RDPG probability matrix $\mathbf{P}$.

\textbf{Relationship to RDPG.} The correspondence illuminates the structure but does not establish containment in either direction:

\begin{itemize}
\item In RDPG, $P_{ij}$ is a probability (between 0 and 1) and nodes are fixed.
\item In IDPG, $\mathcal{H}(\{s_i\}, \{s_j\}) = P_{ij}$ in the unit-weight Dirac limit recovers an RDPG-style probability matrix, but in general IDPG there is no finite matrix $P$: interaction mass is encoded by measures (raw heat / bound heat).
\item The Dirac limit of IDPG produces point masses with unit weight; weighted Diracs $\sum_i w_i \delta_{s_i}$ with $w_i \neq 1$ give $\mathcal{H}(\{s_i\}, \{s_j\}) = w_i w_j P_{ij}$, which has no direct RDPG analog.
\end{itemize}

In an RDPG the interaction structure is encoded in a kernel evaluated at latent positions. The heat map framework generalizes the intuition behind RDPG to a continuous, measure-theoretic setting. IDPG points toward RDPG in the concentrated limit, but the two models remain fundamentally distinct.
\section{An ecological motivating example}\label{sec:foodwebs}

A \emph{food web} is an epitome example of an ecological network in which nodes represent species and edges represent consumption, predation, or, in brief, \emph{who eats whom}. Usually, the set of species in a food web represents a certain ecological community or an ecosystem. And the edges are often determined by painstaking laborious field work by ecologists.

Yet, despite their fruitful application, it is quite clear that it is not a species eating another species: it's a certain individual of a species, say a cow, eating one or more individuals of another species. And, from a Darwinian point of a view, a species is a population (we might say a collection satisfying certain genealogical conditions) of individuals. Crucially, the individuals are not all identical, as various mechanisms bring some variance in the genetic (and phenotypical, and thus ecological) identity of individuals.

Hence, it is rather common in evolutionary sciences to describe the variability of individuals in a species with a certain probability distribution $\mu$ in some space where the metric represents genetic similarity (and often the distributions tend to be multivariate Gaussians: most individuals will have a genome very similar to each other with few mutations, some will vary more, a few will be rather atypical, \ldots).

The environment, its biotic and abiotic components, and the phenotypic expression of an individual concur to determine the individual ecological role in an ecosystem (that is, the individual propensities to establish different, ecologically relevant, connections with other individuals from the same or other species). This mapping of a genome to an ecological role corresponds to a mapping from the distribution $\mu$, which takes values in the genetic space, to a distribution $\mathcal{E}(\mu)$ taking values in a theoretical space of ecological roles. In the case of a food web, where the ecological interactions are trophic relationships (who is a food resource for whom, who is a consumer of whom) we can ideally project $\mathcal{E}(\mu)$ into two subspaces $\mathcal{E}_g(\mu)$ and $\mathcal{E}_r(\mu)$ of \emph{ecological role as a resource} and \emph{ecological role as a consumer} (see \cite{dalla2016exploring} for such an analysis, although at the level of species).

An interaction between two individuals of different species will hence depend on the probability of those two individuals being ``there'', the propensity of one individual to eat the other, and the propensity of the latter to be eaten.

We can thus represent an ecological network, and in particular a food web, as an IDPG under the \emph{ephemeral} interpretation. The intensity $\mbblue{\rho}(\mgreen{\vec{g}}, \mred{\vec{r}})$ captures the density of potential individuals characterized by their resource-role $\mgreen{\vec{g}}$ and consumer-role $\mred{\vec{r}}$. Edge opportunities arise when pairs of ephemeral (in the time scale of evolutionary processes) individuals encounter each other; the probability that a consumer at $\mgreen{\vec{g}_s}$ successfully consumes a resource at $\mred{\vec{r}_t}$ is given by the dot product $\mgreen{\vec{g}_s} \cdot \mred{\vec{r}_t}$.

In this sense, a classic representation of a food web should be seen as a statistical summary of an ephemeral IDPG in which individuals are aggregated through an appropriate clustering procedure into species nodes. The necessity of considering food webs as probabilistic in nature has been recognized by \cite{poisot2016structure}.

When aggregating from the continuous ephemeral representation to discrete species-level graphs, different amounts of information may be retained per sampled edge:

\begin{itemize}
  \item \textbf{Minimal}: only $(\mgreen{\vec{g}_s}, \mred{\vec{r}_t})$, the coordinates used in the affinity kernel
  \item \textbf{Full}: each edge carries all four coordinates $(\mgreen{\vec{g}_s}, \mred{\vec{r}_s}, \mgreen{\vec{g}_t}, \mred{\vec{r}_t})$
\end{itemize}

Full position information enables clustering by the complete $(\mgreen{\vec{g}}, \mred{\vec{r}})$ signature, essential for ecological applications where species are identified by their combined resource-consumer profile.

Interestingly, ecological and evolutionary processes do really happen at the level of the underlying intensity functions, by the gradual movement of a species across the space, as fully recognized by various disciplines as population genetics or adaptive dynamics. The timescale of these evolutionary changes relative to the persistence of individual organisms determines whether the perennial or ephemeral view is more appropriate for a given analysis.

\subsection{Mixture of product intensities for species}\label{sec:mixture-products}

For food webs with multiple distinct species, a natural representation arises when the intensity is a \emph{sum} of species-specific products:
\[
  \mbblue{\rho}(\mgreen{\vec{g}}, \mred{\vec{r}}) = \sum_{m=1}^{M} \rho_{G,m}(\mgreen{\vec{g}}) \cdot \rho_{R,m}(\mred{\vec{r}})
\]

Each species $m$ has its own marginal intensities $\rho_{G,m}$ and $\rho_{R,m}$ defining its niche in the giving and receiving spaces.

\textbf{Key distinction from product intensity.} Compare the two forms:

\begin{table}[htbp]
\centering
\begin{tabular}{|p{0.45\textwidth}|p{0.45\textwidth}|}
\hline
\textbf{Product of Mixtures} & \textbf{Mixture of Products} \\
\hline
$[\sum_m \rho_{G,m}] \times [\sum_m \rho_{R,m}]$ & $\sum_m [\rho_{G,m} \times \rho_{R,m}]$ \\
\hline
$\mgreen{\vec{g}}$ and $\mred{\vec{r}}$ independent & $\mgreen{\vec{g}}$ and $\mred{\vec{r}}$ coupled by species \\
\hline
Cross-species mixing in $(\mgreen{\vec{g}}, \mred{\vec{r}})$ & No cross-species mixing \\
\hline
\end{tabular}
\end{table}

In the mixture of products, a sampled individual's $\mgreen{\vec{g}}$ and $\mred{\vec{r}}$ come from the \emph{same} species: they are coupled by species identity. This structure is essential for ecological modeling where species have characteristic profiles in both resource and consumer roles.

\textbf{Species identity as model state.} In the mixture model, species identity $m$ is \emph{part of the sampled state}, not inferrable from position alone. If species have overlapping supports in $\mblue{\Omega}$, an individual at position $(\mgreen{\vec{g}}, \mred{\vec{r}})$ could belong to multiple species. The sampling procedure makes species identity explicit:

\textbf{Sampling from mixture of products:}
\begin{enumerate}
  \item Compute species contributions $\gamma_m = c_{G,m} \cdot c_{R,m}$ where $c_{G,m} = \int \rho_{G,m}$ and $c_{R,m} = \int \rho_{R,m}$
  \item Total intensity $\mblue{\Lambda} = \sum_m \gamma_m$
  \item Sample $N \sim \mathrm{Poisson}(\mblue{\Lambda})$
  \item For each individual:
  \begin{enumerate}
    \item Select species $m$ with probability $\gamma_m / \mblue{\Lambda}$
    \item Sample $\mgreen{\vec{g}}$ from $\rho_{G,m} / c_{G,m}$ (normalized)
    \item Sample $\mred{\vec{r}}$ from $\rho_{R,m} / c_{R,m}$ (normalized)
    \item The individual carries species label $m$ as part of its state
  \end{enumerate}
\end{enumerate}

The species label enables clustering and analysis at the species level, bridging the continuous latent space with discrete taxonomic structure.

\subsection{Source-target asymmetry in food webs}

\footnote{This section describes a variant of the ephemeral model where source and target individuals are drawn from potentially different distributions. This ``asymmetric ephemeral'' model differs from the symmetric ephemeral rule defined earlier.}

In the basic ephemeral model, both individuals in a pair are drawn from the same intensity $\mbblue{\rho}$. For food webs, we may want \emph{asymmetric} source and target distributions: producers should more often be targets (eaten) than sources (eating), while apex predators should be the reverse.

\subsubsection{General edge intensity}

The asymmetric ephemeral rule requires only that $\mbblue{\rho_{\mathcal{E}}}: \mathcal{E} \to \mathbb{R}_+$ with $\int \mbblue{\rho_{\mathcal{E}}} = \mblue{\Lambda} / 2$. The product form $\mbblue{\rho_{\mathcal{E}}}(s,t) \propto \mbblue{\rho}(s) \mbblue{\rho}(t)$ is a special case. More generally:
\[
  \mbblue{\rho_{\mathcal{E}}}(s, t) = f(s, t)
\]
for any non-negative function $f$ with the correct total mass. This includes:

\begin{itemize}
  \item \textbf{Factored asymmetric}: $\mbblue{\rho_{\mathcal{E}}}(s,t) \propto \rho_S(s) \cdot \rho_T(t)$ with different source and target intensities
  \item \textbf{Fully coupled}: $\mbblue{\rho_{\mathcal{E}}}(s,t)$ that cannot be written as a product
\end{itemize}

\subsubsection{Species-weighted asymmetry}

For mixture-of-products intensities (Section~\ref{sec:mixture-products}), a natural asymmetric form uses species-specific source and target weights. Let $\rho_m(\mgreen{\vec{g}}, \mred{\vec{r}}) = \rho_{G,m}(\mgreen{\vec{g}}) \cdot \rho_{R,m}(\mred{\vec{r}})$ denote the intensity for species $m$.

Define source and target intensities:
\[
  \rho_S(s) = \sum_m w_{S,m} \cdot \rho_m(s), \quad \rho_T(t) = \sum_m w_{T,m} \cdot \rho_m(t)
\]

where $w_{S,m}$ is the propensity of species $m$ to appear as a source (consumer) and $w_{T,m}$ its propensity to appear as a target (resource). For producers: $w_{S,m} \approx 0$, $w_{T,m}$ large. For apex predators: the reverse.

The edge intensity is then:
\[
  \mbblue{\rho_{\mathcal{E}}}(s, t) = \frac{\rho_S(s) \cdot \rho_T(t)}{2 M_S M_T / \mblue{\Lambda}}
\]
where $M_S = \int \rho_S$ and $M_T = \int \rho_T$. This ensures $\int \int \mbblue{\rho_{\mathcal{E}}} = \mblue{\Lambda} / 2$.

\subsubsection{Coordinate-dependent weights and kernel absorption}

A different mechanism uses weights that depend on position rather than species. If the weight depends only on $\mgreen{\vec{g}}$ for sources and only on $\mred{\vec{r}}$ for targets, the weights can be \emph{absorbed into the affinity kernel}:
\[
  w_S(\mgreen{\vec{g}_s}) \cdot w_T(\mred{\vec{r}_t}) \cdot (\mgreen{\vec{g}_s} \cdot \mred{\vec{r}_t}) = \tilde{\mgreen{\vec{g}}}_s \cdot \tilde{\mred{\vec{r}}}_t
\]
where $\tilde{\mgreen{\vec{g}}} = w_S(\mgreen{\vec{g}}) \cdot \mgreen{\vec{g}}$ is a rescaled green coordinate.
This absorption is valid only if the transformed coordinates remain admissible for the probability model (e.g., inside $B^d_+$ so that all induced dot products stay in $[0,1]$); otherwise an additional renormalization or projection step is required.

However, if weights depend on the \emph{full position} $(\mgreen{\vec{g}}, \mred{\vec{r}})$, e.g., when trophic roles are determined by both coordinates jointly, they cannot be absorbed into the kernel. This requires genuine asymmetry in the edge intensity $\mbblue{\rho_{\mathcal{E}}}$.

The distinction matters: kernel absorption preserves the dot-product form of connection probabilities, while edge-intensity asymmetry modifies the distribution of edge opportunities without changing the affinity kernel.

\begin{figure}[htbp]
  \centering
  \includegraphics[width=0.95\textwidth]{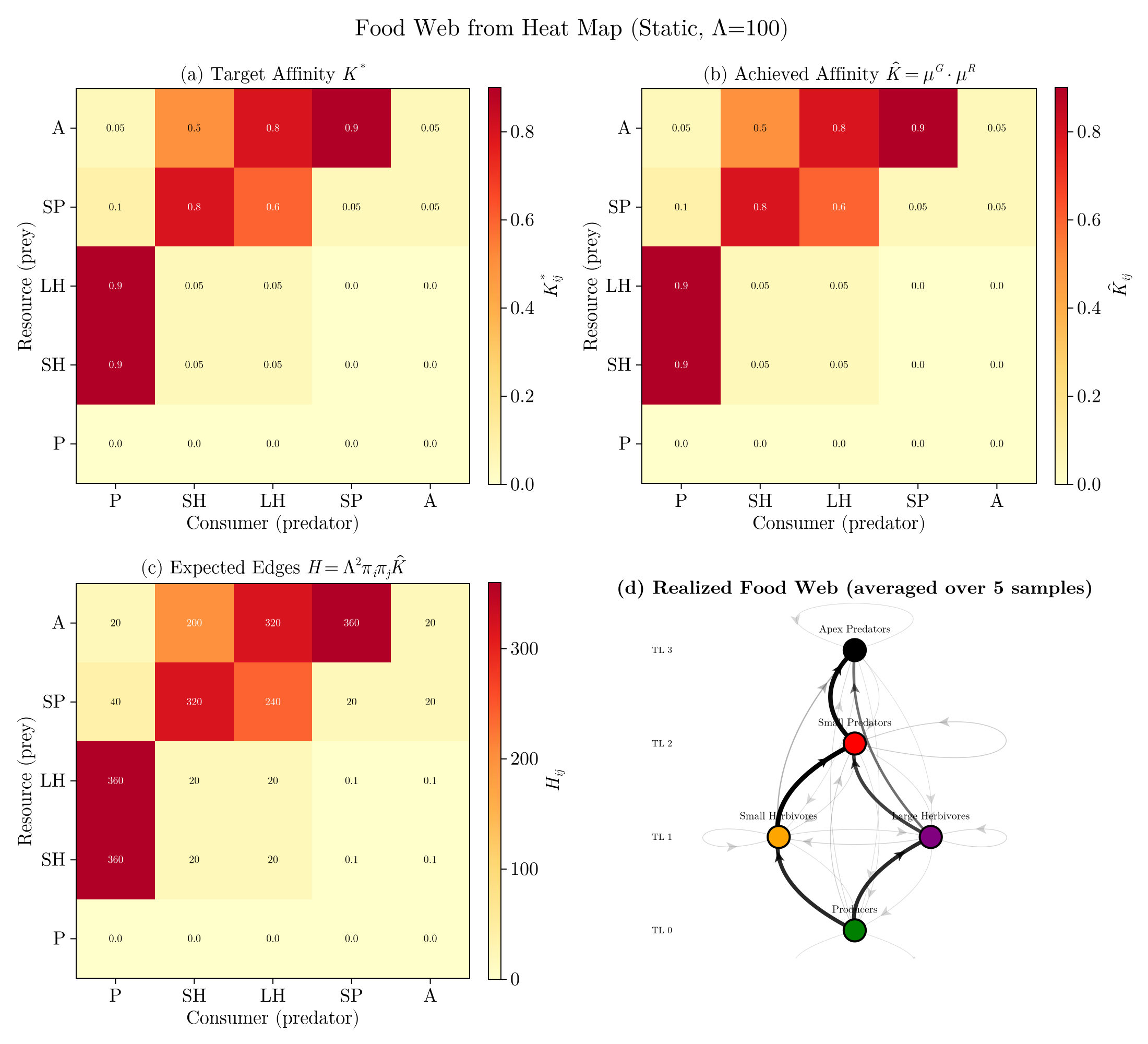}
  \caption{\textbf{Food web generation from IDPG ($\mblue{\Lambda} = 100$, five guilds).} (a) Target affinity matrix $K^*_{ij}$ specifying desired interaction structure between guilds: producers (P) are consumed by herbivores (SH, LH), which are consumed by predators (SP, A). (b) Achieved affinity $\hat{K}_{ij} = \mbgreen{\mu^G_i} \cdot \mbred{\mu^R_j}$ from optimized guild centroids in $d = 4$ dimensional latent space. (c) Expected edges $\mathcal{H}_{ij} = \mblue{\Lambda}^2 \cdot \pi_i \cdot \pi_j \cdot \hat{K}_{ij}$ combining guild abundances with interaction propensities. (d) Realized food web (averaged over 5 samples) showing trophic structure: arrow width proportional to edge count, vertical position indicates trophic level.}
  \label{fig:foodweb_static}
\end{figure}

\begin{figure}[htbp]
  \centering
  \includegraphics[width=0.90\textwidth]{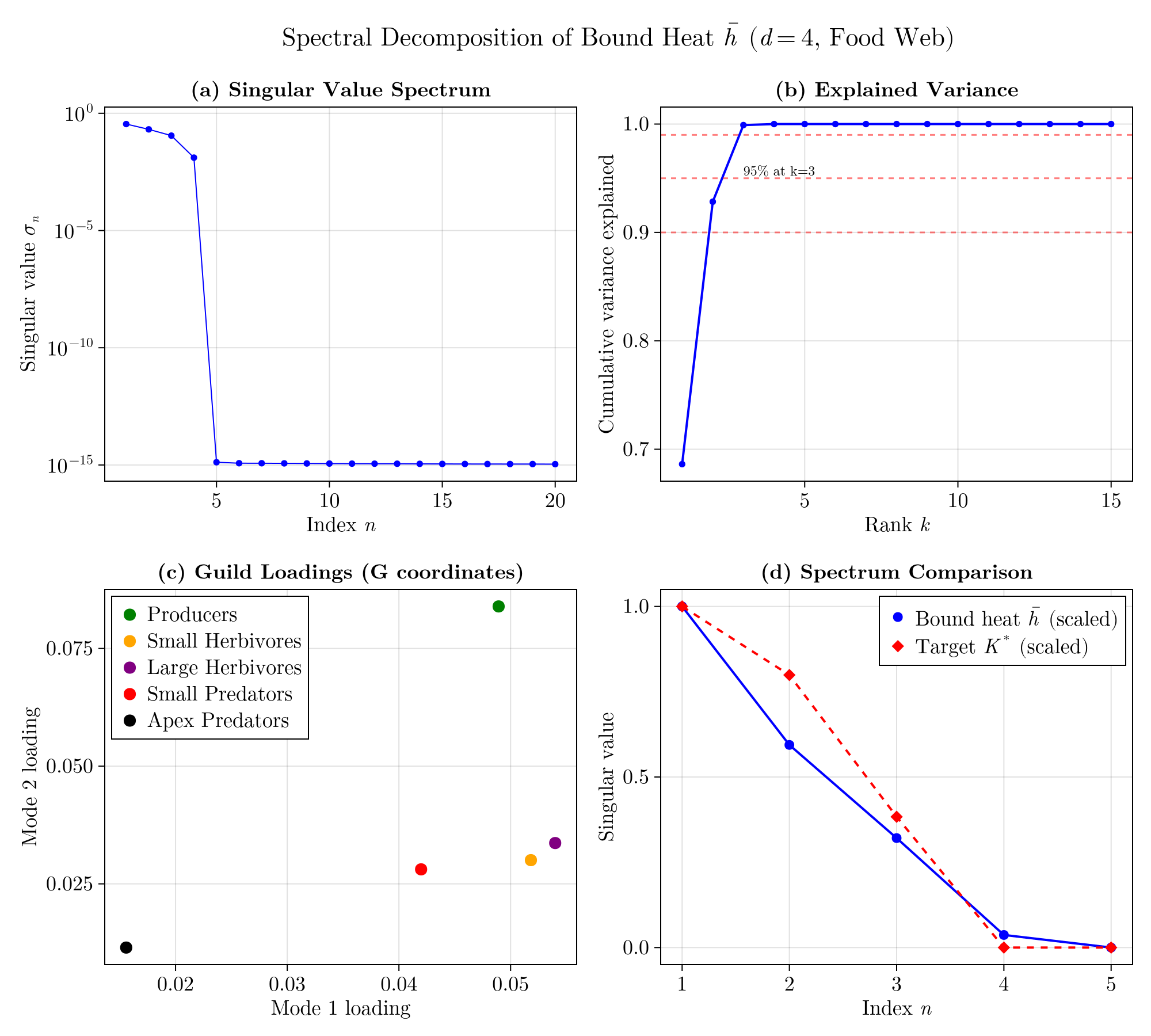}
  \caption{\textbf{Spectral decomposition of the food web heat map ($d = 4$).} (a) Singular value spectrum showing four non-zero singular values before machine precision, confirming $\mathrm{rank}(\overline{T}) = d = 4$. (b) Cumulative variance explained; three modes capture 95\% of the interaction structure. (c) Guild loadings in the space of the first two singular modes, revealing trophic organization: producers (green) and herbivores (yellow, purple) separate from predators (red, black). (d) Comparison of singular value spectra between the realized heat map and the target affinity matrix.}
  \label{fig:spectral_foodweb}
\end{figure}

\section{Time indexing and beyond}\label{sec:time-indexing}

So far we considered graphs, and their random models, as stuck in time. Yet, the motivating example of ecological food webs invite to consider what happens when evolution occurs, that is, when the intensities change in time.

This means that instead of observing one graph $G$, we observe a sequence of graphs $G_t = (V_t, E_t)$, where the index $t$ is commonly assumed to stand for time. In most RDPG time extensions, we do consider $V_t = V$ for each time $t$, that is, the set of vertices does not change, while the connections between them can change.

Each graph $G_t$ is generated by an RDPG model with parameters $\mbgreen{G}_t$ and $\mbred{R}_t$. A body of statistical results guides us to decide whether, given two graphs $G_t$ and $G_{t + \delta t}$, we can determine that $(\mbgreen{G}, \mbred{R})_t = (\mbgreen{G}, \mbred{R})_{t + \delta t}$ or not, that is, whether the change in the observation is actually induced by a movement of the points in $\mgreen{G}$ and $\mred{R}$ or by the inherent variability of the observation process.

Eventually, but this has so far received less attention, the graphs can be indexed by more than one variable, e.g., we could consider a spatiotemporal distribution of graphs $G_{x,y,t}$ where $x,y$ are some geographic coordinates and $t$ is time.

So far, there hasn't been an attempt to study from a dynamical system perspective the movement of the graph points in time and space.

\subsection{Two temporal scales}\label{sec:two-temporal-scales}

Introducing time into IDPG reveals two distinct dynamical scales:

\subsubsection{Sampling dynamics (fast scale)}

The Poisson point process describes when and where individuals appear. PPP is memoryless: individuals appear independently at rate $\mbblue{\rho}$. But richer temporal structure can arise from generalizing the sampling process:

\begin{itemize}
  \item \textbf{Hawkes processes}: self-exciting point processes where past events increase the rate of future events. An interaction at $(\mgreen{\vec{g}}, \mred{\vec{r}})$ temporarily boosts the intensity nearby, producing temporal clustering. This could model social reinforcement or predator-prey encounter dynamics.

  \item \textbf{Cox processes}: doubly stochastic processes where the intensity $\mbblue{\rho}$ is itself random, introducing additional variability in event rates.
\end{itemize}

These generalizations govern the temporal correlations in \emph{when} individuals appear, while the spatial structure of $\mbblue{\rho}$ governs \emph{where} they appear.

\subsubsection{Intensity evolution (slow scale)}

On a slower timescale, the intensity $\mbblue{\rho}$ itself can evolve. We write $\mbblue{\rho}(\mgreen{\vec{g}}, \mred{\vec{r}}, t)$ for a time-varying intensity on $\mblue{\Omega} = \mgreen{B^d_+} \times \mred{B^d_+}$. The sampling process then operates on a landscape that drifts over time.

To handle the domain constraint, we view $\mblue{\Omega} \subset \mathbb{R}^{2d}$ and use PDE operators in the full coordinate $(\mgreen{\vec{g}}, \mred{\vec{r}})$. The boundary $\partial \mblue{\Omega}$ has flat faces where some $\mgreen{g}_k = 0$ or $\mred{r}_k = 0$, and curved faces where $\|\mgreen{\vec{g}}\| = 1$ or $\|\mred{\vec{r}}\| = 1$. Three natural boundary conditions arise:

\begin{itemize}
  \item \textbf{Absorbing boundary} ($\mbblue{\rho} = 0$ on $\partial \mblue{\Omega}$): intensity that reaches the boundary vanishes. Without exogenous inputs, the total mass decreases over time, and $\mathbb{E}[N(t)]$ shrinks. This models extinction or loss of individuals that become too extreme in their interaction propensities.

  \item \textbf{Reflecting boundary} (no-flux condition $\nabla_{(\mgreen{\vec{g}}, \mred{\vec{r}})} \mbblue{\rho} \cdot \vec{n} = 0$ on $\partial \mblue{\Omega}$): intensity cannot escape $\mblue{\Omega}$. Total mass is conserved, so $\mathbb{E}[N(t)]$ remains constant even as the distribution evolves. This may be more natural for ecological applications where species cannot leave the space of viable niches and they accumulate at boundaries rather than disappearing.

  \item \textbf{Robin boundary} ($\alpha \mbblue{\rho} + \beta \nabla_{(\mgreen{\vec{g}}, \mred{\vec{r}})} \mbblue{\rho} \cdot \vec{n} = 0$ on $\partial \mblue{\Omega}$, with $\alpha, \beta \geq 0$): a linear combination of the intensity value and its normal flux vanishes at the boundary. This interpolates between absorbing ($\beta = 0$) and reflecting ($\alpha = 0$) conditions. The parameter ratio $\alpha / \beta$ controls the rate at which intensity ``leaks'' through the boundary: a small ratio yields near-reflecting behaviour with slow mass loss, while a large ratio approaches the absorbing case. Robin conditions can model partial permeability of the niche boundary, where some fraction of individuals at extreme positions are lost while others are retained.
\end{itemize}

In all three cases, no individuals are ever sampled at positions outside $\mblue{\Omega}$ (the intensity there is zero or inaccessible).

Under the product assumption $\mbblue{\rho}(\mgreen{\vec{g}}, \mred{\vec{r}}, t) = \mbgreen{\rho_G}(\mgreen{\vec{g}}, t) \cdot \mbred{\rho_R}(\mred{\vec{r}}, t)$, we can study the evolution of each marginal intensity separately. Moreover, if $\mbgreen{\rho_G}$ and $\mbred{\rho_R}$ each evolve according to independent PDEs, the product structure is preserved: the proposing and accepting landscapes evolve autonomously.

\subsection{PDE regimes on the intensity}\label{sec:pde-regimes}

Classic partial differential equations describe canonical modes of intensity evolution:

\subsubsection{Diffusion}

The heat equation
\[
  \frac{\partial \mbblue{\rho}}{\partial t} = \nu \Delta_{(\mgreen{\vec{g}}, \mred{\vec{r}})} \mbblue{\rho}
\]
where $\nu > 0$ is the diffusion coefficient, models spreading or mixing. An initially concentrated intensity diffuses outward, representing diversification or loss of specificity. In the product case, if $\mbgreen{\rho_G}$ diffuses, individuals become less specialized in their proposing behavior over time.

\subsubsection{Advection}

The transport equation
\[
  \frac{\partial \mbblue{\rho}}{\partial t} = - \nabla_{(\mgreen{\vec{g}}, \mred{\vec{r}})} \cdot (\vec{v}\, \mbblue{\rho})
\]
models directed drift. The intensity translates through the latent space at velocity $\vec{v}$, representing systematic change in interaction propensities. In ecological terms, this could model adaptation or environmental pressure shifting species' niches.

\subsubsection{Reaction-diffusion}

Combining local dynamics with spatial spreading:
\[
  \frac{\partial \mbblue{\rho}}{\partial t} = \nu \Delta_{(\mgreen{\vec{g}}, \mred{\vec{r}})} \mbblue{\rho} + f(\mbblue{\rho})
\]
where $f(\mbblue{\rho})$ captures local growth, decay, or competition. This can produce pattern formation, traveling waves, or stable heterogeneous distributions.

\subsubsection{Pursuit-evasion}

Under the product assumption, the two marginal intensities can be coupled through their centroids, modelling a predator-prey or pursuit-evasion dynamic in the latent space. The ``prey'' intensity $\mbgreen{\rho_G}$ is advected away from the ``predator'' centroid $\mbred{\tilde{\mu}_R}(t)$, while the ``predator'' intensity $\mbred{\rho_R}$ is advected toward the ``prey'' centroid $\mbgreen{\tilde{\mu}_G}(t)$. An elastic restoring term prevents either population from drifting indefinitely:
\[
  \frac{\partial \mbgreen{\rho_G}}{\partial t} = - \nabla \cdot (\vec{v}_G \cdot \mbgreen{\rho_G}), \quad \vec{v}_G = -\alpha (\mbred{\tilde{\mu}_R} - \vec{x}_0) - \gamma (\mgreen{\vec{g}} - \vec{x}_0)
\]
\[
  \frac{\partial \mbred{\rho_R}}{\partial t} = - \nabla \cdot (\vec{v}_R \cdot \mbred{\rho_R}), \quad \vec{v}_R = \beta (\mbgreen{\tilde{\mu}_G} - \vec{x}_0) - \gamma (\mred{\vec{r}} - \vec{x}_0)
\]
where $\alpha, \beta > 0$ control the evasion and pursuit speeds respectively, $\gamma > 0$ is the elastic centering strength, and $\vec{x}_0$ is a reference position. The centroids $\mbgreen{\tilde{\mu}_G}(t)$ and $\mbred{\tilde{\mu}_R}(t)$ are computed from the current intensities, making this a nonlinear, nonlocal system. The resulting dynamics produce coupled oscillatory motion in the latent space (Figure~\ref{fig:pde_reflecting}, bottom row).

\subsection{Induced dynamics on graph statistics}\label{sec:induced-dynamics}

As $\mbblue{\rho}$ evolves, so do the expected graph properties. Under the product assumption, the expected number of nodes $\mathbb{E}[N(t)] = \mgreen{c_G}(t) \cdot \mred{c_R}(t)$ and the expected edges $\mathbb{E}[|E(t)|]$ become functions of time, determined by the evolving marginal intensities.

For instance, under pure diffusion with no-flux boundary conditions on $B^d_+$, total mass is conserved: $\mgreen{c_G}(t) = \mgreen{c_G}(0)$. But the intensity-weighted means $\mbgreen{\mu_G}(t)$ and $\mbred{\mu_R}(t)$ may change, affecting expected edge counts even as expected node counts remain constant.

This connects random graph theory to the broader literature on PDE inference from stochastic observations.

\begin{figure}[htbp]
  \centering
  \includegraphics[width=\textwidth]{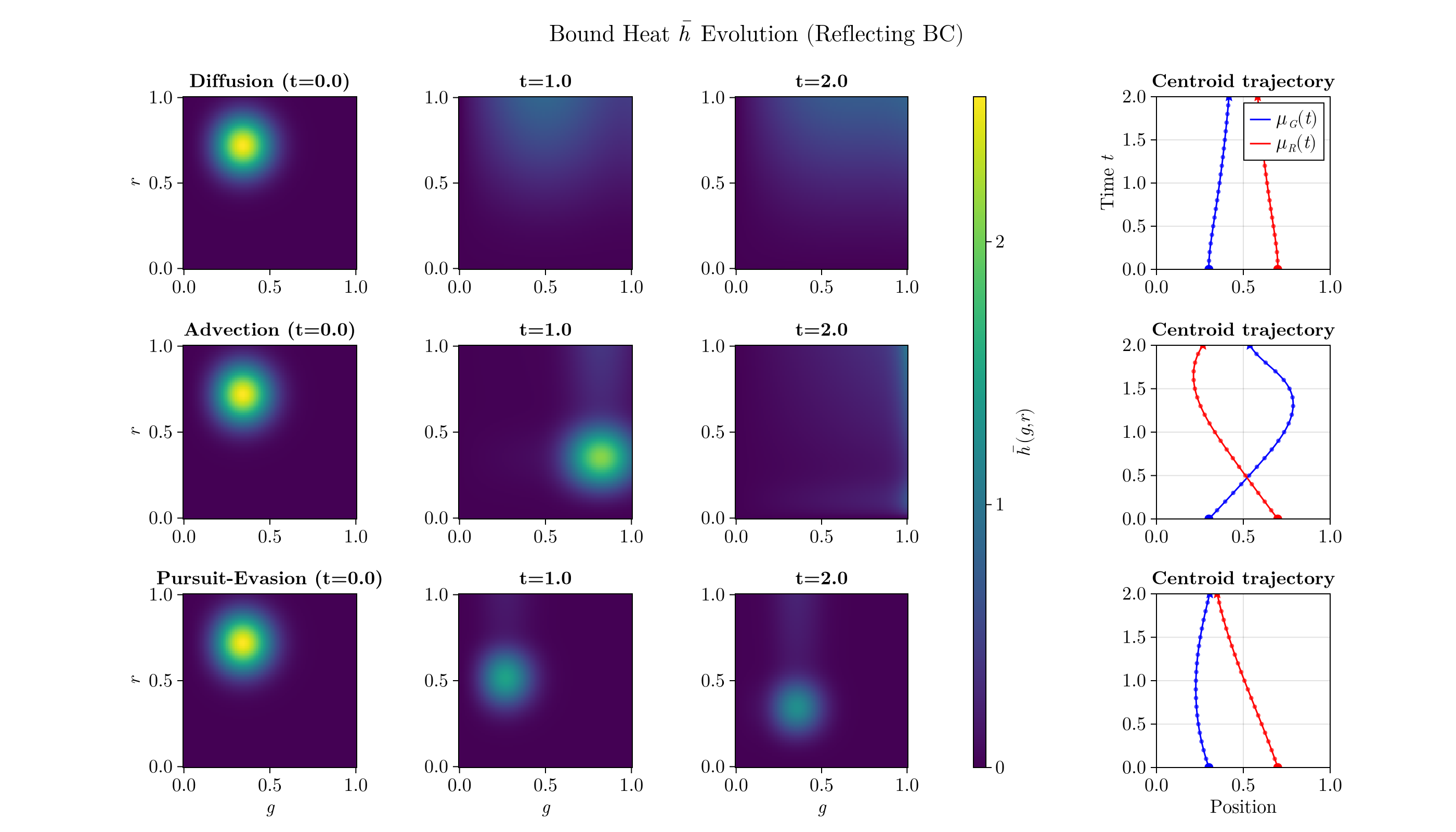}
  \caption{\textbf{Bound heat evolution under PDE dynamics (reflecting boundary conditions).} Three rows show diffusion, advection, and pursuit-evasion dynamics, each with snapshots at $t \in \{0, 1, 2\}$ and centroid trajectories $\tilde{\mu}_G(t)$, $\tilde{\mu}_R(t)$. Diffusion spreads the intensity while centroids remain stable. Advection translates both marginals rightward. Pursuit-evasion creates coupled oscillatory motion as the ``predator'' ($\mbgreen{\rho_G}$) chases the ``prey'' ($\mbred{\rho_R}$). Reflecting boundaries preserve total mass throughout.}
  \label{fig:pde_reflecting}
\end{figure}

\begin{figure}[htbp]
  \centering
  \includegraphics[width=0.95\textwidth]{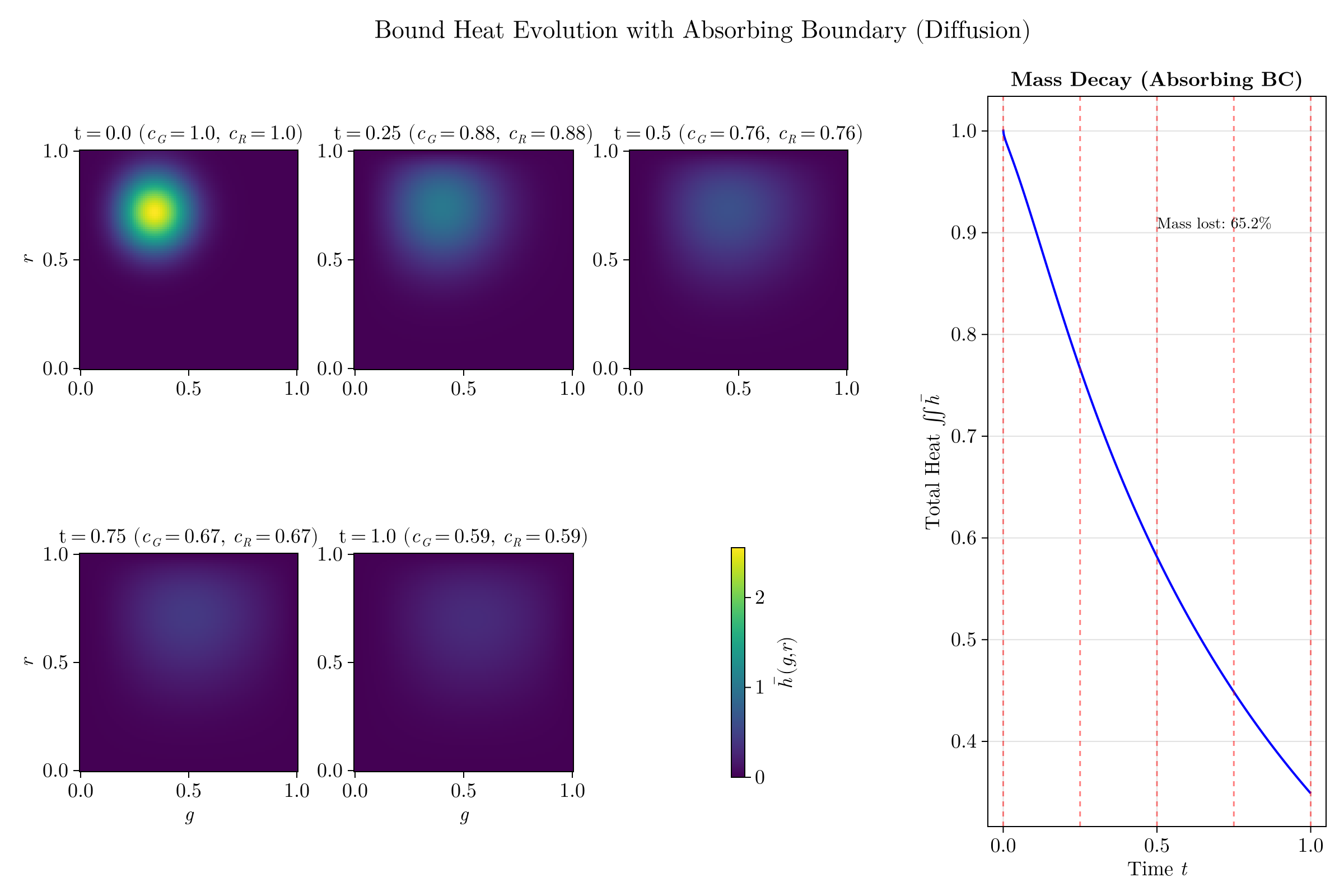}
  \caption{\textbf{Mass decay with absorbing boundary conditions.} Left: Bound heat snapshots at $t \in \{0, 0.25, 0.5, 0.75, 1.0\}$ showing intensity loss as diffusion carries mass to the boundary. Titles show marginal charges $\mgreen{c_G}(t)$ and $\mred{c_R}(t)$. Right: Total bound heat $\int \int \overline{h}\, dg\, dr$ as a function of time, decaying from 1.0 to approximately 0.35 (65\% mass loss). The bound heat decays faster than mass because it depends on the product $\mgreen{c_G} \cdot \mred{c_R}$.}
  \label{fig:pde_absorbing}
\end{figure}

\section{Computational experiments}\label{sec:simulations}

We verified the theoretical predictions through Monte Carlo simulations. Full details and additional figures are available in the supplementary materials.

\subsection{Perennial vs ephemeral scaling}

Figure~\ref{fig:sim1} confirms the fundamental scaling dichotomy. Using a product intensity on $B^2_+$ with Gaussian marginals ($\kappa = 15$, means $\mbgreen{\mu_G} = (0.6, 0.4)$ and $\mbred{\mu_R} = (0.5, 0.5)$), we generated 1000 replications at each intensity level $\mblue{\Lambda} \in \{10, 25, 50, 100, 200\}$.

\begin{figure}[htbp]
  \centering
  \includegraphics[width=0.85\textwidth]{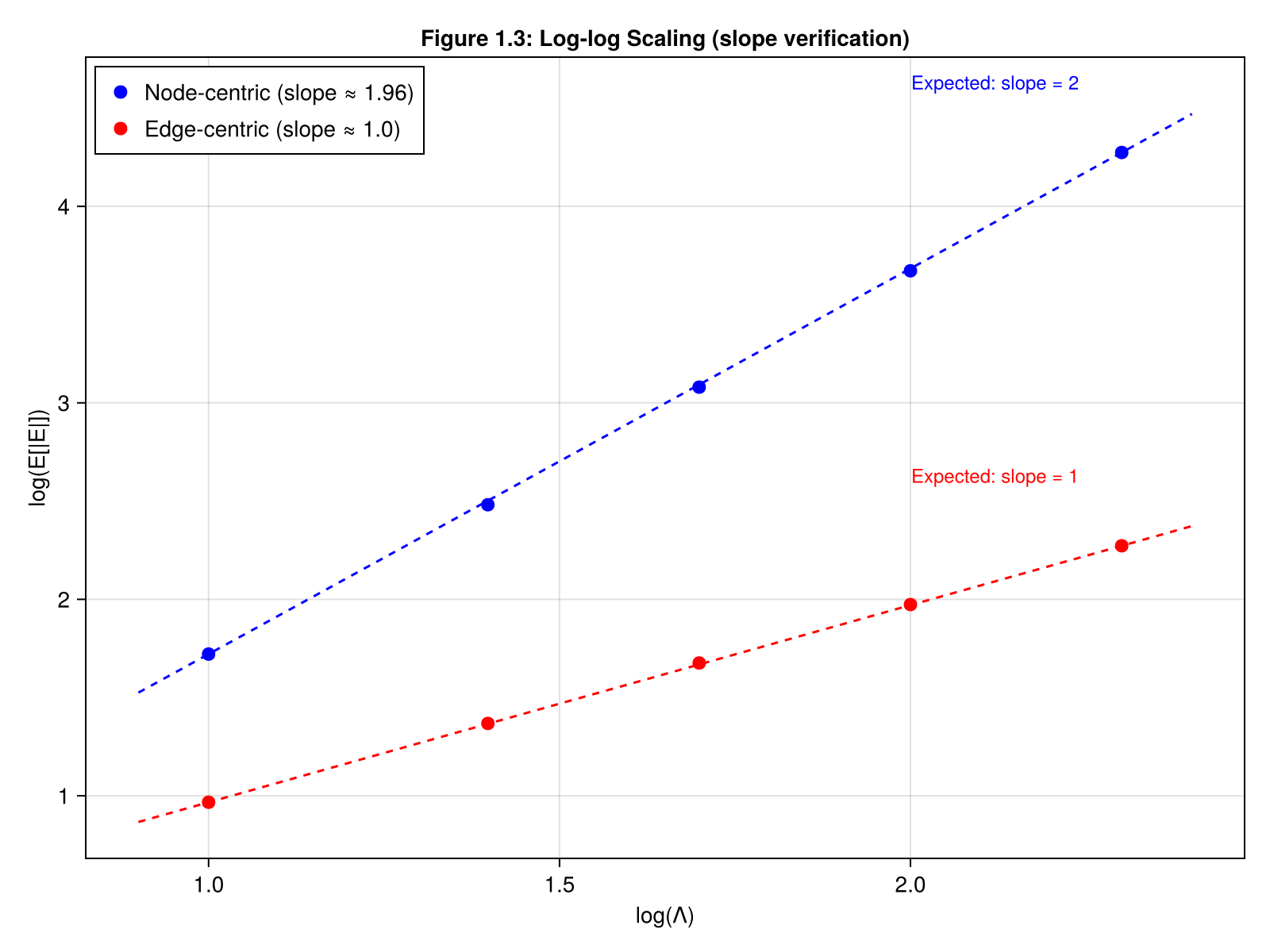}
  \caption{Log-log scaling of expected edges vs total intensity. Perennial (blue) shows slope $\approx 1.97$ (theory: 2), ephemeral (red) shows slope $\approx 1.01$ (theory: 1). Dashed lines show theoretical predictions.}
  \label{fig:sim1}
\end{figure}

The empirical slopes (1.97 for perennial and 1.01 for ephemeral) match theoretical predictions within sampling error.

The structural difference is visually striking (Figure~\ref{fig:sim1_viz}): at $\mblue{\Lambda} = 50$, a typical perennial realization has $N \approx 45$ nodes and $|E| \approx 1000$ edges (dense), while ephemeral yields a sparser graph with nodes organized into disjoint pairs.

\begin{figure}[htbp]
  \centering
  \includegraphics[width=\textwidth]{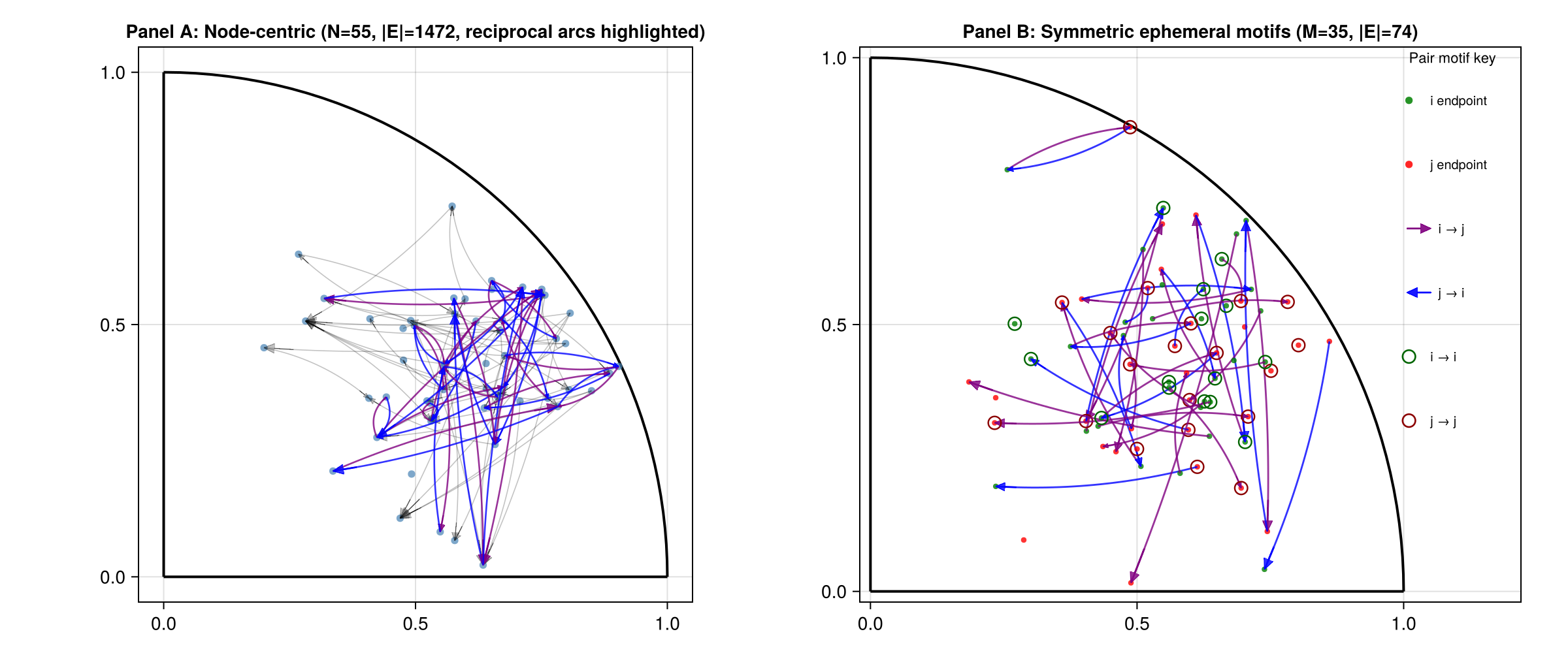}
  \caption{Sample realizations at $\mblue{\Lambda} = 50$. Panel A: Perennial produces a dense graph ($N = 45$, $|E| \approx 1000$); directed edges are curved, and reciprocated pairs are highlighted as opposite arcs (purple/blue). Panel B: Symmetric ephemeral produces disjoint 2-node components with explicit motif coding. In each pair, endpoints are colored green/red (local labels $i$/$j$), cross-edges are directional (purple: $i \to j$, blue: $j \to i$), and self-loops are shown separately (dark green: $i \to i$, dark red: $j \to j$). Both panels use identical intensity parameters; the difference arises purely from the realization rule.}
  \label{fig:sim1_viz}
\end{figure}

\subsection{Intermediate regime interpolation}

Figure~\ref{fig:sim2} verifies the overlap probability formula. With $\mblue{\Lambda} = 50$ and observation window $W = 1$, we varied the mean lifetime $\eta$ across four orders of magnitude.

\begin{figure}[htbp]
  \centering
  \includegraphics[width=0.85\textwidth]{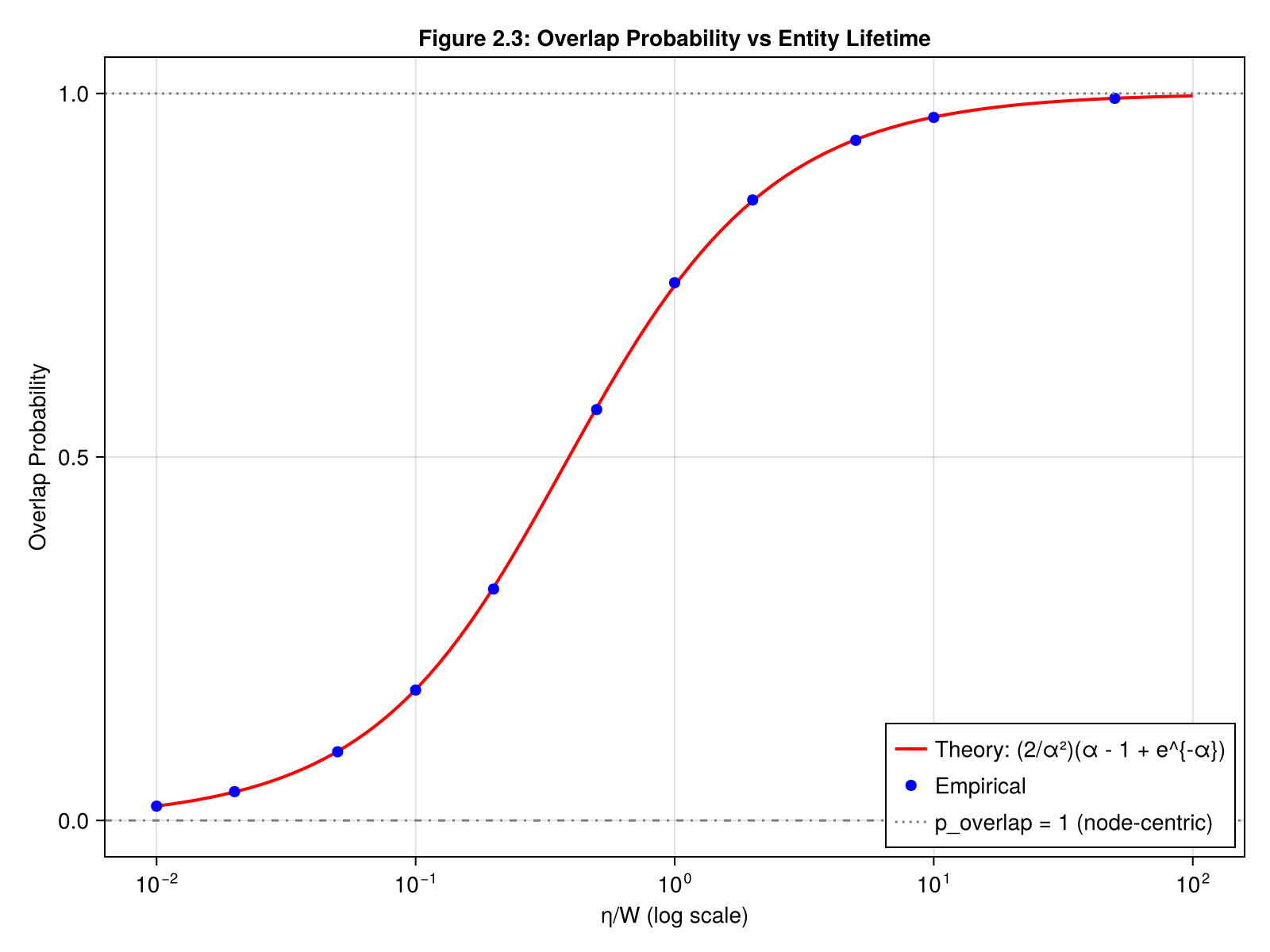}
  \caption{Overlap probability vs normalized lifetime $\eta/W$. Empirical estimates (points) match the theoretical curve $p_{\text{overlap}} = (2/u^2)(u - 1 + e^{-u})$ where $u = W/\eta$. The transition from ephemeral ($p_{\text{overlap}} \approx 0$) to perennial ($p_{\text{overlap}} \approx 1$) spans roughly two orders of magnitude in $\eta/W$.}
  \label{fig:sim2}
\end{figure}

The empirical overlap probabilities match the theoretical formula with relative errors below 1\% across all tested values. The transition occurs smoothly: at $\eta/W = 0.1$, about 18\% of individual pairs overlap; at $\eta/W = 1$, about 74\% overlap; at $\eta/W = 10$, overlap exceeds 97\%.

\begin{figure}[htbp]
  \centering
  \includegraphics[width=\textwidth]{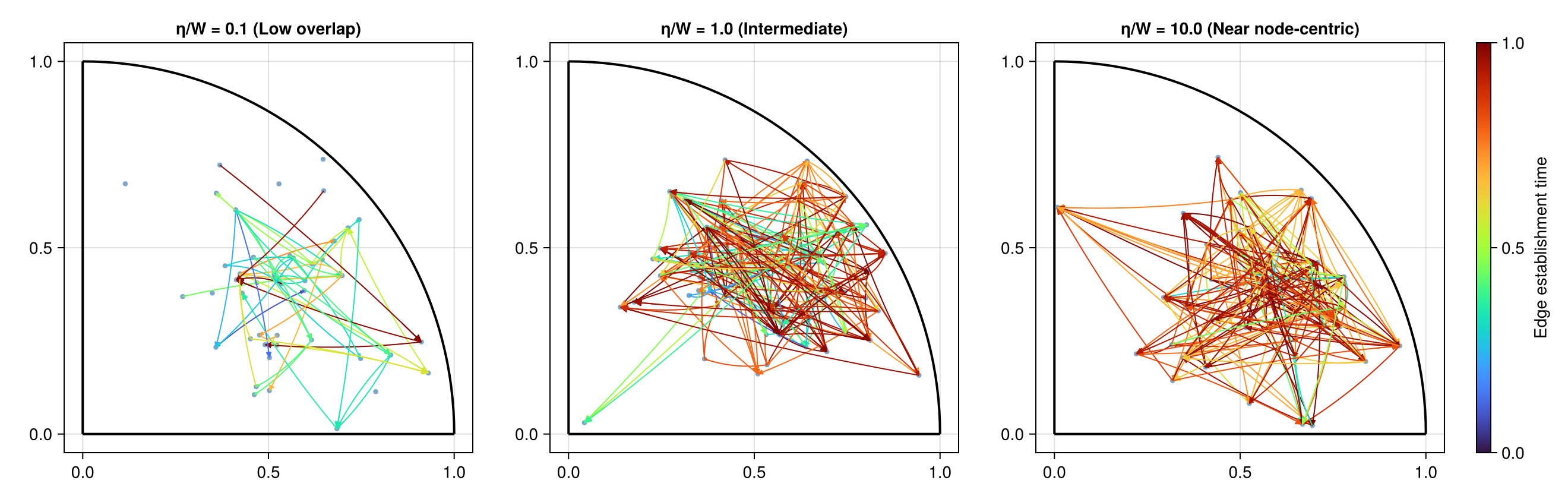}
  \caption{Sample graphs at three lifetime regimes, all with $\mblue{\Lambda} = 50$. Left: $\eta/W = 0.1$ (near ephemeral) shows sparse edges. Center: $\eta/W = 1$ (intermediate) shows moderate connectivity. Right: $\eta/W = 10$ (near perennial) shows dense connectivity approaching the $\mathbf{R}_\infty$ limit. Directed edges are plotted as curved arrows, so reciprocal interactions are visible as paired arcs. For visualization clarity, we display a fixed fraction of realized edges and color each by sampled establishment time within its overlap interval (shared colorbar).}
  \label{fig:sim2_viz}
\end{figure}

\subsection{Ratio invariance under PDE evolution}

Figure~\ref{fig:sim6} tests whether the ratio formula
\[
  \frac{\mathbb{E}[E]_{\text{perennial}}}{\mathbb{E}[E]_{\text{ephemeral}}} = \frac{\mblue{\Lambda}}{2}
\]
holds under the distinct-pair perennial convention when the intensity evolves under PDEs (symmetric ephemeral rule with four edge trials per sampled pair). We simulated a 5-species food web ($d = 4$) under four dynamics: static, diffusion, advection (with absorbing boundary), and pursuit-evasion.

\begin{figure}[htbp]
  \centering
  \includegraphics[width=0.85\textwidth]{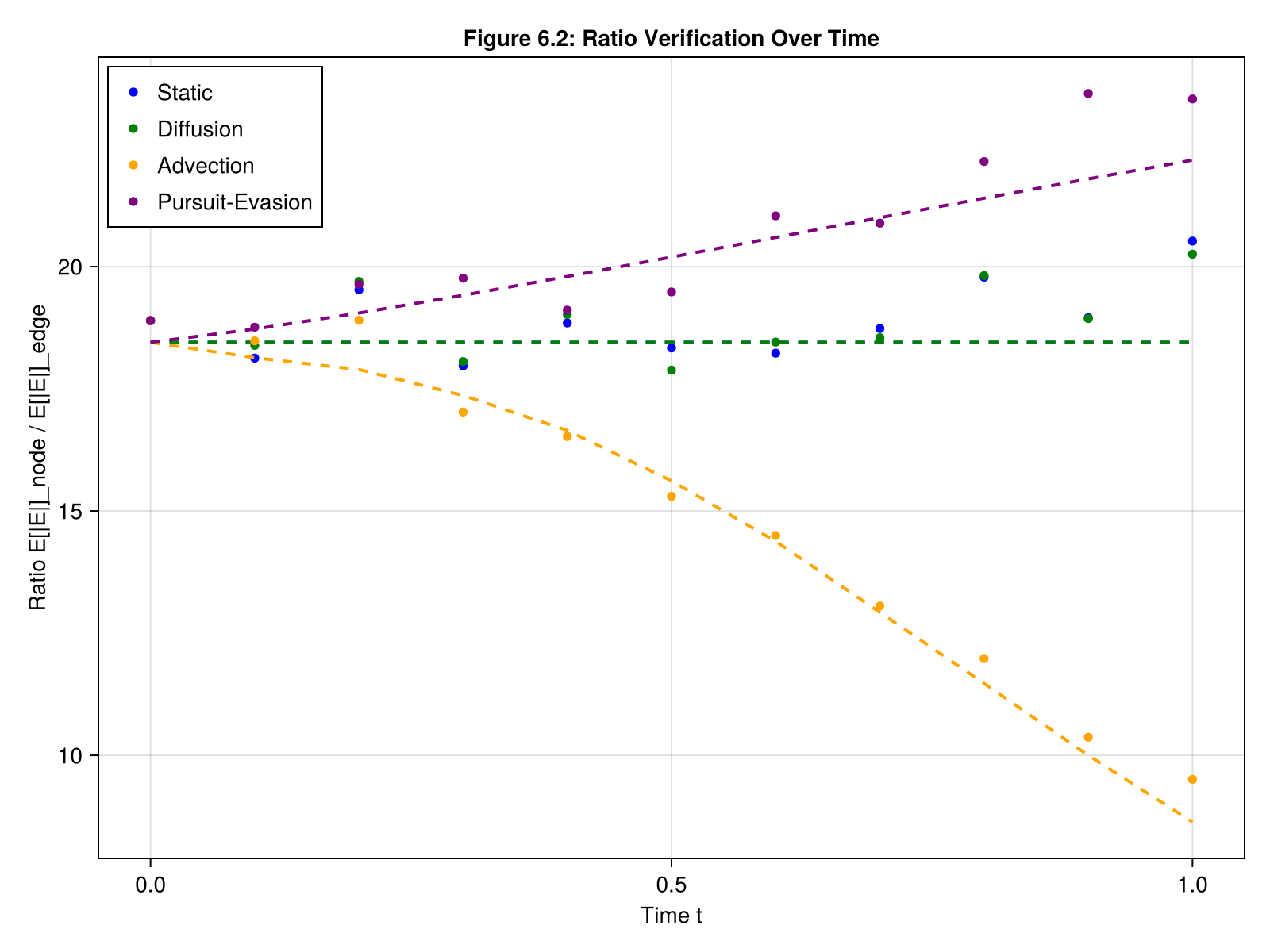}
  \caption{Ratio $\mathbb{E}[E]_{\text{perennial}} / \mathbb{E}[E]_{\text{ephemeral}}$ over time under different PDE regimes. Dashed lines show theoretical $\mblue{\Lambda}(t)/2$ (distinct-pair perennial convention). Static and diffusion maintain constant $\mblue{\Lambda}$; advection decreases $\mblue{\Lambda}$ due to absorbing boundaries; pursuit-evasion increases $\mblue{\Lambda}$. In all cases, the empirical ratio tracks $\mblue{\Lambda}(t)/2$.}
  \label{fig:sim6}
\end{figure}

Under the distinct-pair perennial convention, the ratio correctly tracks $\mblue{\Lambda}(t)/2$ in all regimes (mean absolute error $< 3$ throughout). This confirms that the fundamental relationship between realization rules persists even as the underlying intensity evolves. The ratio depends only on the instantaneous total intensity, not on the history of the dynamics.

\begin{figure}[htbp]
  \centering
  \includegraphics[width=\textwidth]{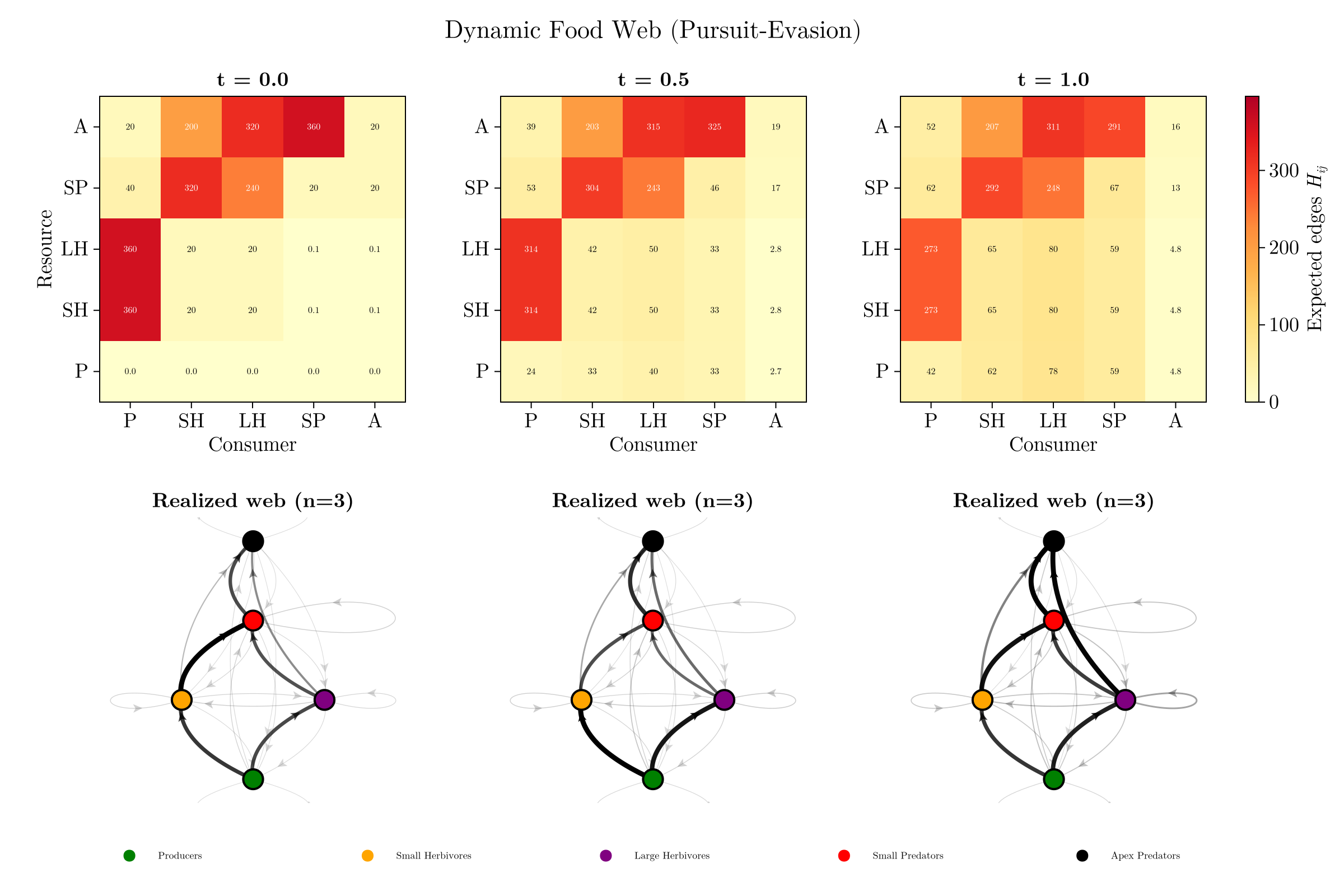}
  \caption{\textbf{Dynamic food web evolution under pursuit-evasion.} Top row: Expected edge matrices $\mathcal{H}_{ij}(t)$ at $t \in \{0, 0.5, 1.0\}$, showing how guild interactions shift as predators chase prey through the latent space. Bottom row: Realized food webs (averaged over 3 samples). Initially, herbivores (SH, LH) feed heavily on producers (P). As pursuit-evasion dynamics unfold, the interaction structure becomes more diffuse, with increased cross-guild connections and modified trophic flow. Edge weights reflect changing expected edge counts.}
  \label{fig:foodweb_dynamic}
\end{figure}

\section{Inference of an IDPG model}\label{sec:inference}

In its most generality, the inference problem for IDPG is: given one (or many) observed graph $G = (V, E)$, and eventually ancillary information such as a partial or complete position of the individuals in the latent spaces, can we recover the intensity $\mbblue{\rho}$? or at least the marginal intensities $\mbgreen{\rho_G}$ and $\mbred{\rho_R}$ under a product assumption? The problem is completely open, and here we only sketch some reflections.

An immediate complication is that, under certain observation regimes, the observed vertex set $V$ might correspond to $N_{\text{obs}}$, not $N$: we see only nodes with at least one edge. We miss the isolated nodes, those sampled from PPP($\Lambda$) but forming no connections. Since isolation probability depends on position (nodes with small $\|\mgreen{\vec{g}}\|$ or $\|\mred{\vec{r}}\|$ are more likely to be isolated), the observed positions are a biased sample from $\mbblue{\rho}$. Any inference procedure must either correct for this selection bias or acknowledge that it estimates the intensity conditional on observability.

For a perennial IDPG, a natural approach proceeds in two stages:

\begin{enumerate}
  \item \textbf{Embed the observed nodes.} Apply the standard RDPG inference procedure: compute the singular value decomposition of the adjacency matrix $\mathbf{A}$, select an appropriate dimension, and obtain estimated positions for each observed node.

  \item \textbf{Estimate the intensity.} Treat the embedded positions as a point cloud and apply density estimation techniques \cite{silverman2018density} to recover $\mbblue{\rho}$, or its marginals $\mbgreen{\rho_G}$ and $\mbred{\rho_R}$ under a product assumption.
\end{enumerate}

The feasibility and accuracy of this procedure may depend on additional assumptions about the structure of $\mbblue{\rho}$. A natural constraint is to model $\mbblue{\rho}$ as a mixture of multivariate Gaussian distributions, which offers a flexible yet tractable family for density estimation.

For ephemeral observations, the inference problem requires specifying the observation model. If interaction pairs are observed directly with their latent positions, estimating $\mbblue{\rho}$ proceeds via density estimation on the position point cloud. More commonly, one observes a discretized or aggregated version, e.g., interaction counts between clusters or categories, requiring additional modeling to relate the summary to the underlying continuous intensity.

Finally, inferring the PDE dynamics of a time dynamic IDPG from a sequence of observed graphs $G_{t_1}, G_{t_2}, \dots$ combines the IDPG inference problem at each snapshot with dynamical estimation across time.

\section{Discussion and future directions}\label{sec:discussion}

We have introduced Intensity Dot Product Graphs as a measure-theoretic generalization of Random Dot Product Graphs, where discrete nodes give way to continuous intensities and the probability matrix gives way to the heat map. This framework accommodates both perennial and ephemeral generative mechanisms, with the intermediate regime interpolating between them through individual lifetimes. The heat map (comprising raw heat and bound heat in the product case) provides a unified language for interaction structure that recovers RDPG in the Dirac limit while extending naturally to dynamic settings where the intensity evolves under PDEs.

Several directions remain open for future investigation.

\subsection{Spectral theory of the heat operator}

The spectral structure of the bound heat operator $\overline{T}$, its singular values, singular functions, and their relationship to network properties, merit deeper analysis. Key questions include:

\begin{itemize}
  \item \textbf{Explicit spectrum for Gaussian intensities.} When $\mbgreen{\rho_G}$ and $\mbred{\rho_R}$ are truncated Gaussians on $B^d_+$, the Gram matrices $A$ and $B$ are computable in terms of Gaussian moments. What is the resulting singular value distribution? How does it depend on the concentration (variance) and centering of the Gaussians?

  \item \textbf{Spectral gap of the Laplacian.} The continuous Laplacian $\mathcal{L} = D - T$ has a spectral gap controlling network connectivity. How does this gap relate to geometric properties of $\mbblue{\rho}$? Is there an analog of Cheeger's inequality relating spectral gap to an isoperimetric constant on $\mblue{\Omega}$?

  \item \textbf{Spectral clustering.} In graphon theory and spectral clustering \cite{vonluxburg2007tutorial}, eigenvectors of the Laplacian identify community structure. Do the singular functions of the bound heat operator similarly reveal latent structure in the intensity landscape? This could provide a principled approach to identifying species or functional groups in ecological applications.
\end{itemize}

\subsection{Heat kernel interpretation}

Our ``heat map,'' ``raw heat'', and ``bound heat'' terminology invites connection to the classical heat kernel $p(t, x, y)$ satisfying the heat equation. This connection is deeper than nomenclature, suggesting a program to develop genuine heat-theoretic foundations for IDPG:

\begin{itemize}
  \item \textbf{Heat semigroup generation.} A fundamental question is whether the continuous Laplacian $\mathcal{L} = D - T$ generates a strongly continuous semigroup $e^{-t \mathcal{L}}$ on $L^2(\mblue{\Omega})$ \cite{engel2000one}. If so, this semigroup would describe the evolution of ``influence'' across $\mblue{\Omega}$, with the bound heat operator $\overline{T}$ playing the role of a transition kernel. The theory of heat kernels on manifolds and graphs \cite{grigoryan2009heat} provides the analytical framework; extending these results to our infinite-dimensional setting requires verifying that $\mathcal{L}$ satisfies appropriate sectoriality or dissipativity conditions.

  \item \textbf{Heat kernel on graphs.} The graph heat kernel $H_t = e^{-t L}$ describes diffusion on a network \cite{chung1997spectral}. Its entries $(H_t)_{ij} = \sum_k e^{-\lambda_k t} \phi_k(i) \phi_k(j)$ decay exponentially in the Laplacian eigenvalues. The analogous continuous construction would yield a kernel $p(t, \mgreen{\vec{g}}, \mgreen{\vec{g}}') = \sum_n e^{-\lambda_n t} \phi_n(\mgreen{\vec{g}}) \phi_n(\mgreen{\vec{g}}')$ describing how interaction propensity diffuses through the intensity landscape.

  \item \textbf{Spectral representation.} The classical heat kernel admits the spectral representation $p(t,x,y) = \sum_n e^{-\lambda_n t} \phi_n(x) \phi_n(y)$. If a similar representation holds in our setting, equilibration time scales would be governed by the spectrum of the evolution generator (e.g., real parts of eigenvalues in the directed case), not directly by singular values.

  \item \textbf{Diffusion maps.} The diffusion maps framework \cite{coifman2006diffusion} uses heat kernels to construct embeddings that respect intrinsic geometry. Our heat map could provide a similar embedding of $\mblue{\Omega}$, where distances reflect interaction propensity rather than Euclidean distance.
\end{itemize}

\subsection{Dynamics and spectral evolution}

When the intensity $\mbblue{\rho}(t)$ evolves under a PDE (diffusion, advection, pursuit-evasion), the heat map $\mathcal{H}(t)$ and its spectrum co-evolve. This opens questions at the intersection of spectral theory and dynamical systems:

\begin{itemize}
  \item \textbf{Spectral tracking.} How do singular values $\sigma_k(t)$ evolve as $\mbblue{\rho}(t)$ changes? Perturbation theory (Weyl's inequality) guarantees Lipschitz continuity in the Hilbert-Schmidt norm, but finer structure components such as rate of change, crossing of singular values, bifurcations are unexplored.

  \item \textbf{Spectral gap dynamics.} Does the Laplacian's spectral gap increase or decrease under diffusion? Under pursuit-evasion? A shrinking gap would indicate network fragmentation; a growing gap, consolidation.

\item \textbf{Invariant spectral features.} Are some spectral quantities preserved under certain classes of PDE evolution? For example, total mass $\mblue{\Lambda}(t)$ is conserved under reflecting-boundary diffusion. There may be analogous spectral invariants that characterize the interaction structure.
\end{itemize}

\subsection{Inference and estimation}

Practical application requires inferring the intensity $\mbblue{\rho}$ from observed graphs. Key challenges include:

\begin{itemize}
  \item \textbf{Identifiability.} At the population level (exact raw heat), identifiability of $\mbblue{\rho}$ can hold under regularity assumptions (see Section~\ref{sec:heatmap}). The open challenge is statistical/coordinate identifiability from finite sampled graphs: disentangling latent-coordinate ambiguities (RDPG/SVD indeterminacy) and finite-sample noise when estimating $\mbblue{\rho}$.

  \item \textbf{Estimation from samples.} Given a single graph (or sequence of graphs) from an IDPG, how can we estimate the underlying intensity? Maximum likelihood, method of moments, and spectral methods are all candidates. The RDPG inference literature \cite{athreya2018statistical} provides a starting point, though the continuous intensity setting introduces new challenges.

  \item \textbf{Model selection.} How can we test whether an observed graph is better described by perennial or ephemeral sampling? Under the distinct-pair perennial convention, the ratio $\mblue{\Lambda}/2$ between expected edge counts provides a starting point (or $\frac{\mblue{\Lambda}+1}{2}$ if perennial self-loops are included), but distributional tests are needed.
\end{itemize}

\subsection{Extensions of the kernel}

The dot product kernel $K(s,t) = \mgreen{\vec{g}_s} \cdot \mred{\vec{r}_t}$ is natural for bilinear interactions but not universal. Extensions include:

\begin{itemize}
  \item \textbf{General bilinear forms.} Replace the dot product with $\mgreen{\vec{g}_s}^\top M \mred{\vec{r}_t}$ for a matrix $M$, allowing asymmetric weighting of coordinates.

  \item \textbf{Non-bilinear kernels.} Gaussian RBF kernels $K(s,t) = \exp(-\|s - t\|^2 / (2 \sigma^2))$ encode similarity rather than compatibility. Such kernels have infinite rank, changing the spectral picture dramatically.

  \item \textbf{Multiplex and higher-order interactions.} Multiple edge types (multiplex) or hyperedges (higher-order) require tensor-valued heat maps. The mathematical framework generalizes, but computational tractability may suffer.
\end{itemize}

\bibliographystyle{plainnat}
\bibliography{bibliography}

\appendix
\clearpage

\section{Derivations of expected edge counts}\label{appendix:derivations}

We derive expected edge counts for both realization rules under the product assumption $\mbblue{\rho}(\mgreen{\vec{g}}, \mred{\vec{r}}) = \mbgreen{\rho_G}(\mgreen{\vec{g}}) \cdot \mbred{\rho_R}(\mred{\vec{r}})$ on $\mblue{\Omega} = \mgreen{B^d_+} \times \mred{B^d_+}$.

\subsection{Key tools from point process theory}

\textbf{Campbell's formula} \cite[Prop.~2.7]{last2017lectures} \cite[Sec.~3.2]{kingman1993poisson}. For a Poisson point process with intensity measure $\lambda$, and any measurable function $f$ with $\int |f| \, d\lambda < \infty$:
\[
  \mathbb{E}\!\left[\sum_{x \in \text{PPP}(\lambda)} f(x)\right] = \int f(x)\, \lambda(dx)
\]

\textbf{Second factorial moment measure}. For a point process, the second factorial moment measure $M_{[2]}$ is defined on product sets by $M_{[2]}(A \times B) = \mathbb{E}[N(A) \cdot N(B)]$ for disjoint $A, B$, representing the expected number of ordered pairs of distinct points \cite[Sec.~5.4]{daley2003introduction}. For a Poisson process, counts in disjoint sets are independent \cite[Ex.~6.1(a)]{daley2003introduction}, so:
\[
  M_{[2]}(A \times B) = \mathbb{E}[N(A)] \cdot \mathbb{E}[N(B)] = \lambda(A) \cdot \lambda(B)
\]
Thus $M_{[2]} = \lambda \otimes \lambda$, and for any measurable $f$:
\[
  \mathbb{E}\!\left[\sum_{x \neq y} f(x,y)\right] = \iint f(x,y)\, \lambda(dx)\, \lambda(dy)
\]

\emph{Important:} The second factorial moment formula computes expectations but does not imply that the process of pairs is a PPP. In the perennial rule, edges are conditionally independent given positions but marginally dependent through shared nodes.

\textbf{Product structure and Fubini's theorem}. When $\lambda$ is a product measure, Fubini's theorem \cite{billingsley1995probability} permits iterated integration:
\[
  \int_{\mblue{\Omega}} f(\mgreen{\vec{g}}, \mred{\vec{r}})\, \lambda(d\mgreen{\vec{g}}, d\mred{\vec{r}}) = \int_{\mgreen{G}} \int_{\mred{R}} f(\mgreen{\vec{g}}, \mred{\vec{r}})\, \mbgreen{\rho_G}(d\mgreen{\vec{g}})\, \mbred{\rho_R}(d\mred{\vec{r}})
\]

\subsection{Notation (product case)}

The derivations below assume product intensity $\mbblue{\rho} = \mbgreen{\rho_G} \otimes \mbred{\rho_R}$. We use:
\begin{itemize}
  \item Total intensity: $\mblue{\Lambda} = \int_{\mblue{\Omega}} \mbblue{\rho} = \mgreen{c_G} \cdot \mred{c_R}$, also written $\mathbb{E}[N]$
  \item Marginal total intensities: $\mgreen{c_G} = \int_{\mgreen{G}} \mbgreen{\rho_G}$ and $\mred{c_R} = \int_{\mred{R}} \mbred{\rho_R}$
  \item Intensity-weighted mean positions: $\mbgreen{\mu_G} = \int_{\mgreen{G}} \mgreen{\vec{g}}\, \mbgreen{\rho_G}(\mgreen{\vec{g}})\, d\mgreen{\vec{g}}$ and $\mbred{\mu_R} = \int_{\mred{R}} \mred{\vec{r}}\, \mbred{\rho_R}(\mred{\vec{r}})\, d\mred{\vec{r}}$
  \item Normalized means: $\mbgreen{\tilde{\mu}_G} = \mbgreen{\mu_G} / \mgreen{c_G}$ and $\mbred{\tilde{\mu}_R} = \mbred{\mu_R} / \mred{c_R}$
\end{itemize}

\subsection[Perennial derivation]{Perennial derivation ($\mathbf{R}_\infty$)}\label{sec:appendix-perennial}

The perennial rule samples $N$ nodes from PPP($\mbblue{\rho}$) on $\mblue{\Omega}$. The expected number of edges is:
\[
  \mathbb{E}[E]_{\text{perennial}} = \mathbb{E}\!\left[\sum_{i \neq j} \mathds{1}[\text{edge } i \to j]\right] = \mathbb{E}\!\left[\sum_{i \neq j} (\mgreen{\vec{g}_i} \cdot \mred{\vec{r}_j})\right]
\]

Applying the second factorial moment formula with $f(s,t) = \mgreen{\vec{g}_s} \cdot \mred{\vec{r}_t}$:
\[
  \mathbb{E}[E]_{\text{perennial}} = \iint_{\mblue{\Omega} \times \mblue{\Omega}} (\mgreen{\vec{g}_s} \cdot \mred{\vec{r}_t})\, \mbblue{\rho}(s)\, \mbblue{\rho}(t)\, ds\, dt
\]

Under product intensity $\mbblue{\rho} = \mbgreen{\rho_G} \otimes \mbred{\rho_R}$, expanding $\mbblue{\rho}(s) = \mbgreen{\rho_G}(\mgreen{\vec{g}_s})\, \mbred{\rho_R}(\mred{\vec{r}_s})$ and applying Fubini to separate the four variables:
\[
  \mathbb{E}[E]_{\text{perennial}} = \iiiint (\mgreen{\vec{g}_s} \cdot \mred{\vec{r}_t})\, \mbgreen{\rho_G}(\mgreen{\vec{g}_s})\, \mbred{\rho_R}(\mred{\vec{r}_s})\, \mbgreen{\rho_G}(\mgreen{\vec{g}_t})\, \mbred{\rho_R}(\mred{\vec{r}_t})\, d\mgreen{\vec{g}_s}\, d\mred{\vec{r}_s}\, d\mgreen{\vec{g}_t}\, d\mred{\vec{r}_t}
\]

The dot product $\mgreen{\vec{g}_s} \cdot \mred{\vec{r}_t} = \sum_k (\mgreen{g}_s)_k (\mred{r}_t)_k$ depends only on $\mgreen{\vec{g}_s}$ and $\mred{\vec{r}_t}$. By linearity:
\begin{align}
  \mathbb{E}[E]_{\text{perennial}}
  &= \sum_k
  \left[\int (\mgreen{g}_s)_k\, \mbgreen{\rho_G}(\mgreen{\vec{g}_s})\, d\mgreen{\vec{g}_s}\right]
  \left[\int \mbred{\rho_R}(\mred{\vec{r}_s})\, d\mred{\vec{r}_s}\right] \nonumber \\
  &\quad \cdot
  \left[\int \mbgreen{\rho_G}(\mgreen{\vec{g}_t})\, d\mgreen{\vec{g}_t}\right]
  \left[\int (\mred{r}_t)_k\, \mbred{\rho_R}(\mred{\vec{r}_t})\, d\mred{\vec{r}_t}\right] \nonumber \\
  &= \sum_k (\mbgreen{\mu_G})_k \cdot \mred{c_R} \cdot \mgreen{c_G} \cdot (\mbred{\mu_R})_k \nonumber \\
  &= \mgreen{c_G} \cdot \mred{c_R} \cdot (\mbgreen{\mu_G} \cdot \mbred{\mu_R})
\end{align}

Rewriting:
\[
  \mathbb{E}[E]_{\text{perennial}} = \mblue{\Lambda}^2 \cdot (\mbgreen{\tilde{\mu}_G} \cdot \mbred{\tilde{\mu}_R}) = (\mathbb{E}[N])^2 \cdot (\mbgreen{\tilde{\mu}_G} \cdot \mbred{\tilde{\mu}_R})
\]

\subsection[Asymmetric ephemeral derivation]{Asymmetric ephemeral derivation ($\mathbf{R}_0$, historical)}\label{sec:appendix-asymmetric-ephemeral}

\footnote{This derivation corresponds to an alternative ``asymmetric ephemeral'' model where source and target are sampled as a directed pair. The symmetric ephemeral model defined in the main text samples unordered pairs and evaluates all four potential edges.}

For the product-form edge intensity, this asymmetric rule samples directed interactions from PPP($\mbblue{\rho_{\mathcal{E}}}$) on $\mathcal{E}$ with:
\[
  \mbblue{\rho_{\mathcal{E}}}(s,t) = \frac{\mbblue{\rho}(s) \cdot \mbblue{\rho}(t)}{2 \mblue{\Lambda}}
\]

\textbf{Verification of total mass:}
\[
  \int_{\mblue{\mathcal{E}}} \mbblue{\rho_{\mathcal{E}}} = \frac{1}{2 \mblue{\Lambda}} \left(\int_{\mblue{\Omega}} \mbblue{\rho}\right)^2 = \frac{\mblue{\Lambda}^2}{2 \mblue{\Lambda}} = \frac{\mblue{\Lambda}}{2} \;\checkmark
\]

\textbf{Expected edge count.} By Campbell's theorem:
\begin{align}
  \mathbb{E}[E]_{\text{asym}} &= \int_{\mblue{\mathcal{E}}} (\mgreen{\vec{g}_s} \cdot \mred{\vec{r}_t})\, \mbblue{\rho_{\mathcal{E}}}(s, t)\, ds\, dt \nonumber \\
  &= \frac{1}{2 \mblue{\Lambda}} \int_{\mblue{\mathcal{E}}} (\mgreen{\vec{g}_s} \cdot \mred{\vec{r}_t})\, \mbblue{\rho}(s)\, \mbblue{\rho}(t)\, ds\, dt
\end{align}

The integral is identical to the perennial case, yielding $\mblue{\Lambda}^2 (\mbgreen{\tilde{\mu}_G} \cdot \mbred{\tilde{\mu}_R})$. Therefore:
\[
  \mathbb{E}[E]_{\text{asym}} = \frac{\mblue{\Lambda}^2 (\mbgreen{\tilde{\mu}_G} \cdot \mbred{\tilde{\mu}_R})}{2 \mblue{\Lambda}} = \frac{\mblue{\Lambda}}{2} (\mbgreen{\tilde{\mu}_G} \cdot \mbred{\tilde{\mu}_R}) = \frac{\mathbb{E}[N]}{2} (\mbgreen{\tilde{\mu}_G} \cdot \mbred{\tilde{\mu}_R})
\]

\subsection[Symmetric ephemeral derivation]{Symmetric ephemeral derivation ($\mathbf{R}_0$)}

In the symmetric ephemeral model, we sample $M \sim \text{Poisson}(\mblue{\Lambda}/2)$ unordered pairs. Each pair $\{i, j\}$ has positions drawn i.i.d.\ from $\mbblue{\rho}/\mblue{\Lambda}$ and contributes four potential edges with probabilities:
\begin{itemize}
  \item $i \to j$: $\mgreen{\vec{g}_i} \cdot \mred{\vec{r}_j}$
  \item $j \to i$: $\mgreen{\vec{g}_j} \cdot \mred{\vec{r}_i}$
  \item $i \to i$: $\mgreen{\vec{g}_i} \cdot \mred{\vec{r}_i}$
  \item $j \to j$: $\mgreen{\vec{g}_j} \cdot \mred{\vec{r}_j}$
\end{itemize}

Under product intensity, the expected edges per pair is:
\begin{align}
  \mathbb{E}[\text{edges per pair}] &= \mathbb{E}[\mgreen{\vec{g}_i} \cdot \mred{\vec{r}_j}] + \mathbb{E}[\mgreen{\vec{g}_j} \cdot \mred{\vec{r}_i}] + \mathbb{E}[\mgreen{\vec{g}_i} \cdot \mred{\vec{r}_i}] + \mathbb{E}[\mgreen{\vec{g}_j} \cdot \mred{\vec{r}_j}] \nonumber \\
  &= 2(\mbgreen{\tilde{\mu}_G} \cdot \mbred{\tilde{\mu}_R}) + 2(\mbgreen{\tilde{\mu}_G} \cdot \mbred{\tilde{\mu}_R}) = 4(\mbgreen{\tilde{\mu}_G} \cdot \mbred{\tilde{\mu}_R})
\end{align}

Therefore:
\[
  \mathbb{E}[E]_{\text{ephemeral}} = \mathbb{E}[M] \cdot 4(\mbgreen{\tilde{\mu}_G} \cdot \mbred{\tilde{\mu}_R}) = \frac{\mblue{\Lambda}}{2} \cdot 4(\mbgreen{\tilde{\mu}_G} \cdot \mbred{\tilde{\mu}_R}) = 2 \mblue{\Lambda} (\mbgreen{\tilde{\mu}_G} \cdot \mbred{\tilde{\mu}_R})
\]

\subsection{Ratio of expected edges}

\[
  \frac{\mathbb{E}[E]_{\text{perennial}}}{\mathbb{E}[E]_{\text{ephemeral}}} = \frac{\mblue{\Lambda}^2 (\mbgreen{\tilde{\mu}_G} \cdot \mbred{\tilde{\mu}_R})}{2 \mblue{\Lambda} (\mbgreen{\tilde{\mu}_G} \cdot \mbred{\tilde{\mu}_R})} = \frac{\mblue{\Lambda}}{2} = \frac{\mathbb{E}[N]}{2}
\]

This ratio reflects the fundamentally different generative mechanisms:
\begin{itemize}
  \item Perennial: $N^2$ interaction opportunities from all pairs of persistent nodes
  \item Ephemeral: $M = N/2$ interaction pairs, each contributing up to 4 edges
\end{itemize}

The scaling difference persists: perennial produces $O(N^2)$ edges (dense), ephemeral produces $O(N)$ edges (sparse).

\subsection{Derivation of overlap probability}\label{appendix:overlap}

For the intermediate regime $\mathbf{R}_\eta$, we derive the probability that two independently sampled individuals have overlapping lifetimes.

\textbf{Setup.} Two individuals with:
\begin{itemize}
  \item Birth times $T_1, T_2 \stackrel{\text{i.i.d.}}{\sim} \text{Uniform}(0, W)$
  \item Lifetimes $\tau_1, \tau_2 \stackrel{\text{i.i.d.}}{\sim} \text{Exp}(\eta)$ (exponential with mean $\eta$)
\end{itemize}

Entity $i$ is alive during $[T_i, T_i + \tau_i]$. The intervals overlap iff $\max(T_1, T_2) < \min(T_1 + \tau_1, T_2 + \tau_2)$.

\textbf{Derivation.} By symmetry, we condition on $T_1 \leq T_2$:
\[
  P(\text{overlap}) = 2 \cdot P(T_2 < T_1 + \tau_1 \mid T_1 \leq T_2) = 2 \cdot P(\Delta \in [0, \tau_1))
\]
where $\Delta = T_2 - T_1$.

The gap $\Delta$ has triangular density on $[-W, W]$:
\[
  f_\Delta(\delta) = \frac{W - |\delta|}{W^2}
\]

For $\Delta \geq 0$, we need $\Delta < \tau_1$. Conditioning on $\tau_1 = t$:
\[
  P(\Delta \in [0, t)) = \begin{cases}
    (W t - t^2 / 2) / W^2 & \text{if } t \leq W, \\
    1/2 & \text{if } t > W
  \end{cases}
\]

Integrating over $\tau_1 \sim \text{Exp}(\eta)$:
\[
  P(\text{overlap}) = 2 \left[\int_0^W \frac{W t - t^2 / 2}{W^2} \cdot \frac{e^{-t/\eta}}{\eta}\, dt + \int_W^\infty \frac{1}{2} \cdot \frac{e^{-t/\eta}}{\eta}\, dt\right]
\]

The second integral evaluates to $\frac{1}{2} e^{-W/\eta}$.

For the first integral, using standard formulas for $\int t^n e^{-t/\eta}\, dt$ and letting $u = W / \eta$:
\[
  \int_0^W t\, e^{-t/\eta}\, dt = \eta^2 [1 - (1 + u) e^{-u}]
\]
\[
  \int_0^W t^2\, e^{-t/\eta}\, dt = 2 \eta^3 [1 - (1 + u + u^2 / 2) e^{-u}]
\]

After algebra, combining terms:
\[
  p_{\text{overlap}}(\eta, W) = \frac{2}{u^2} (u - 1 + e^{-u})
\]

\textbf{Verification of limits.}

\emph{Long-lived} ($\eta \gg W$, so $u \to 0$): Using Taylor expansion $e^{-u} \approx 1 - u + u^2 / 2$:
\[
  u - 1 + e^{-u} \approx u^2 / 2
\]
\[
  p_{\text{overlap}} \approx \frac{2}{u^2} \cdot \frac{u^2}{2} = 1
\]

\emph{Ephemeral} ($\eta \ll W$, so $u \to \infty$):
\[
  u - 1 + e^{-u} \approx u
\]
\[
  p_{\text{overlap}} \approx \frac{2u}{u^2} = \frac{2}{u} = \frac{2\eta}{W} \to 0
\]

\subsection{A note on self-loops}\label{appendix:selfloops}

Throughout this work, we have been a bit sloppy about whether interactions, connections, and edges happen only between distinct individuals or not. Usually this translates to either having $N(N-1)$ or $N^2$ links, and the difference is often negligible for large enough graphs.

\subsubsection{Ephemeral rule}

In the symmetric ephemeral rule defined in the main text, each interaction pair $\{i, j\}$ naturally generates self-loop opportunities: $i \to i$ and $j \to j$ are evaluated alongside the cross-edges $i \to j$ and $j \to i$. Self-loops are thus included by construction in the ephemeral model.

\footnote{In the asymmetric ephemeral variant (see Section~\ref{sec:appendix-asymmetric-ephemeral}), where directed interactions $(s, t)$ are sampled from a continuous PPP on $\mblue{\mathcal{E}}$, self-loops are automatically excluded because the diagonal $\{s = t\}$ has measure zero under any absolutely continuous intensity.}

\subsubsection{Perennial rule}

The perennial rule samples $N$ nodes, then considers all $N^2$ ordered pairs of nodes as edge opportunities. Self-loops correspond to pairs $(i, i)$.

The second factorial moment formula \cite[Sec.~5.4, Ex.~6.1(a)]{daley2003introduction} we use for perennial derivations:
\[
  \mathbb{E}\!\left[\sum_{i \neq j} f(x_i, x_j)\right] = \iint f(x,y)\, \lambda(dx)\, \lambda(dy)
\]
naturally counts only distinct ordered pairs. This identity is stated in terms of the factorial moment measure and is valid irrespective of whether $\lambda$ has atoms.

If one includes self-loops in the perennial model (consistent with the generative interpretation ``all ordered pairs, including $i=i$''), an additional Campbell term is required:
\[
  \mathbb{E}[\text{self-loops}] = \int (\mgreen{\vec{g}} \cdot \mred{\vec{r}})\, \mbblue{\rho}(d\mgreen{\vec{g}}, d\mred{\vec{r}})
\]

Under product intensity $\mbblue{\rho} = \mbgreen{\rho_G} \otimes \mbred{\rho_R}$, this simplifies to:
\[
  \mathbb{E}[\text{self-loops}] = \mathbb{E}[N] \cdot (\mbgreen{\tilde{\mu}_G} \cdot \mbred{\tilde{\mu}_R})
\]

This is $O(N)$ compared to $O(N^2)$ for distinct-pair edges, so for moderate-to-large $\mathbb{E}[N]$ the self-loop contribution is negligible.

Therefore, with product intensity and including self-loops explicitly:
\[
  \mathbb{E}[E]_{\text{perennial+loops}} = \mathbb{E}[E]_{\text{perennial}} + \mathbb{E}[\text{self-loops}] = (\mblue{\Lambda}^2 + \mblue{\Lambda}) \cdot (\mbgreen{\tilde{\mu}_G} \cdot \mbred{\tilde{\mu}_R})
\]
and the exact ratio against symmetric ephemeral is:
\[
  \frac{\mathbb{E}[E]_{\text{perennial+loops}}}{\mathbb{E}[E]_{\text{ephemeral}}} = \frac{\mblue{\Lambda} + 1}{2}
\]
which reduces to $\mblue{\Lambda}/2$ asymptotically.

\subsection{Per-dimension concentration for boundary positioning}\label{appendix:perdim}

When using Gaussian kernels centered near the boundary of $B^d_+$, truncation biases the effective mean toward the interior. For species that should be precisely positioned at boundary regions (e.g., producers with resource coordinate near the edge of niche space), this can be problematic.

\textbf{Per-dimension concentration.} Instead of a scalar concentration parameter $\kappa$, use a vector $\bm{\kappa} = (\kappa_1, \dots, \kappa_d)$:
\[
  \rho_m(\mgreen{\vec{g}}) \propto \prod_{i=1}^{d} \exp\!\left(-\kappa_{g,i} (\mgreen{g}_i - \mu_{g,i})^2 / 2\right) \cdot \mathds{1}(\mgreen{\vec{g}} \in B^d_+)
\]

The interpretation is that $\sigma_i = 1 / \sqrt{\kappa_i}$ gives the standard deviation in dimension $i$:

\begin{table}[htbp]
\centering
\begin{tabular}{|c|c|c|}
\hline
$\kappa_i$ & \textbf{Interpretation} & \textbf{Std.\ dev.\ $\sigma_i$} \\
\hline
30 & Normal variation & $\approx 0.18$ \\
100 & Tight & $\approx 0.10$ \\
500 & Very tight & $\approx 0.045$ \\
1000 & Nearly fixed & $\approx 0.032$ \\
\hline
\end{tabular}
\end{table}

\textbf{Ecological example (4D).} Producers with strong resource presence in dimension 1 and consumer role in ``null'' dimension 4:
\[
  \bm{\mu}_g = [0.90, 0.10, 0.02, 0.00], \quad \bm{\kappa}_g = [500, 30, 30, 30]
\]
\[
  \bm{\mu}_r = [0.00, 0.00, 0.00, 0.95], \quad \bm{\kappa}_r = [30, 30, 30, 500]
\]

Dimensions 1 of $\mgreen{\vec{g}}$ and 4 of $\mred{\vec{r}}$ are \emph{structural} (high $\kappa$, defining trophic level), others allow within-species variation.

\textbf{PDE compatibility.} Using high $\kappa$ rather than fixed values maintains smooth distributions compatible with PDE evolution. Under isotropic diffusion:
\[
  \kappa_i(t) = \kappa_i(0) / (1 + 2 \nu \kappa_i(0)\, t)
\]

High-$\kappa$ dimensions decay slower, preserving structural traits while allowing variable traits to spread faster.

\textbf{Reduced boundary bias.} Narrow Gaussians (high $\kappa$) near boundaries lose minimal mass to truncation, so effective mean $\approx$ specified mean. Wide Gaussians suffer more bias as significant mass extends beyond the boundary.

\subsection{Spectral decomposition of the bound heat operator}\label{appendix:spectral}

We develop the singular value decomposition of the bound heat operator in detail. The mathematical foundations follow the theory of Hilbert--Schmidt integral operators \cite{schmidt1907theorie} \cite[Ch.~VI]{reed1980methods} \cite[Ch.~28]{lax2002functional}.

\subsubsection{Setup}

Under product intensity $\mbblue{\rho} = \mbgreen{\rho_G} \otimes \mbred{\rho_R}$, the bound heat density is:
\[
  \overline{h}(\mgreen{\vec{g}}, \mred{\vec{r}}) = (\mgreen{\vec{g}} \cdot \mred{\vec{r}}) \cdot \mbgreen{\rho_G}(\mgreen{\vec{g}}) \cdot \mbred{\rho_R}(\mred{\vec{r}})
\]

Define component functions $\alpha_k: B^d_+ \to \mathbb{R}$ and $\beta_k: B^d_+ \to \mathbb{R}$ by:
\[
  \alpha_k(\mgreen{\vec{g}}) = \mgreen{g}_k \cdot \mbgreen{\rho_G}(\mgreen{\vec{g}}), \quad \beta_k(\mred{\vec{r}}) = \mred{r}_k \cdot \mbred{\rho_R}(\mred{\vec{r}})
\]

Then:
\[
  \overline{h}(\mgreen{\vec{g}}, \mred{\vec{r}}) = \sum_{k=1}^d \alpha_k(\mgreen{\vec{g}}) \beta_k(\mred{\vec{r}})
\]

This represents the bound heat as a sum of $d$ separable (rank-1) kernels.

\subsubsection{Gram matrices}

The spectral structure depends on the Gram matrices of $\{\alpha_k\}$ and $\{\beta_k\}$:
\[
  A_{jk} = \langle \alpha_j, \alpha_k \rangle_{L^2(\mgreen{B^d_+})} = \int_{\mgreen{B^d_+}} \mgreen{g}_j \mgreen{g}_k [\mbgreen{\rho_G}(\mgreen{\vec{g}})]^2 \, d\mgreen{\vec{g}}
\]
\[
  B_{jk} = \langle \beta_j, \beta_k \rangle_{L^2(\mred{B^d_+})} = \int_{\mred{B^d_+}} \mred{r}_j \mred{r}_k [\mbred{\rho_R}(\mred{\vec{r}})]^2 \, d\mred{\vec{r}}
\]

Both $A$ and $B$ are $d \times d$ symmetric positive semi-definite matrices.

\subsubsection{Singular value decomposition}

\begin{theorem}[Singular value decomposition of bound heat operator]\label{thm:svd}
The bound heat operator $\overline{T}$ has at most $d$ non-zero singular values. Let $A = U_A \Sigma_A U_A^\top$ and $B = U_B \Sigma_B U_B^\top$ be eigendecompositions. Define:
\[
  C = \Sigma_A^{1/2} U_A^\top U_B \Sigma_B^{1/2}
\]

The singular values of $\overline{T}$ are the singular values of the $d \times d$ matrix $C$.
\end{theorem}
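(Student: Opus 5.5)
The plan is to reduce $\overline{T}$ to a finite-dimensional factorization through $\mathbb{R}^d$. Using the separable decomposition $\overline{h}=\sum_k \alpha_k\otimes\beta_k$, I will define the synthesis maps $S_\alpha:\mathbb{R}^d\to L^2(\mgreen{B^d_+})$, $c\mapsto\sum_k c_k\alpha_k$, and $S_\beta:\mathbb{R}^d\to L^2(\mred{B^d_+})$, $c\mapsto\sum_k c_k\beta_k$. Then
\[
(\overline{T}f)(\mgreen{\vec{g}})=\sum_k \alpha_k(\mgreen{\vec{g}})\langle \beta_k,f\rangle_{L^2},
\]
which says exactly that $\overline{T}=S_\alpha S_\beta^*$. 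Since $\overline{T}$ factors through $\mathbb{R}^d$, its rank is at most $d$, so at most $d$ singular values are nonzero; this proves the first claim. The Gram matrices appear as $S_\alpha^*S_\alpha=A$ and $S_\beta^*S_\beta=B$.

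Next I will identify the nonzero singular values. They are the square roots of the nonzero eigenvalues of $\overline{T}^*\overline{T}=S_\beta S_\alpha^*S_\alpha S_\beta^*=S_\beta A S_\beta^*$. The nonzero spectrum of $XY$ equals that of $YX$, with multiplicities, in finite rank. Applying this with $X=S_\beta$ and $Y=AS_\beta^*$, the nonzero eigenvalues of $\overline{T}^*\overline{T}$ coincide with those of the $d\times d$ matrix $AS_\beta^*S_\beta=AB$. I then insert the eigendecompositions $A=U_A\Sigma_AU_A^\top$ and $B=U_B\Sigma_BU_B^\top$, so that $AB=U_A\Sigma_A^{1/2}\,\bigl(\Sigma_A^{1/2}U_A^\top U_B\Sigma_B^{1/2}\bigr)\,\Sigma_B^{1/2}U_B^\top$. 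Applying the $XY\leftrightarrow YX$ swap once more moves the outer factors around. This shows that the nonzero spectrum of $AB$ equals the nonzero spectrum of $\Sigma_A^{1/2}U_A^\top U_B\Sigma_B^{1/2}\,\Sigma_B^{1/2}U_B^\top U_A\Sigma_A^{1/2}=CC^\top$. The eigenvalues of $CC^\top$ are the squared singular values of $C$, which gives the statement, with zeros padding both lists to length $d$.

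An equivalent route, which I may present instead because it is cleaner, uses polar-type isometries. Write $S_\alpha=V_\alpha\,\Sigma_A^{1/2}U_A^\top$, where $V_\alpha$ is a partial isometry from $\mathbb{R}^d$ onto $\operatorname{ran}S_\alpha$; this is possible because $S_\alpha^*S_\alpha=U_A\Sigma_AU_A^\top$. Doing the same for $S_\beta$ gives $\overline{T}=V_\alpha\,C\,V_\beta^*$. Partial isometries that are isometric on the relevant ranges do not change the nonzero singular values.

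The main point requiring care is this isometry step when $A$ or $B$ is singular, for instance when the $\alpha_k$ are linearly dependent. Zero eigenvalues make $V_\alpha$ ill-defined on the kernel. However, those directions are also killed by $\Sigma_A^{1/2}$, so they contribute only zero singular values. The eigenvalue-swap argument above sidesteps this issue entirely, which is why I treat it as the primary proof. Finally, I will note that $\alpha_k,\beta_k\in L^2$ follows from boundedness of the intensities on the bounded domain, so $S_\alpha$ and $S_\beta$ are bounded and all the manipulations are legitimate.
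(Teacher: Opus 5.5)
Your proof is correct and follows the paper's route: the paper's sketch also factors $\overline{T}=\mathcal{A}\mathcal{B}^*$ through $\mathbb{R}^d$ with $\mathcal{A}^*\mathcal{A}=A$ and $\mathcal{B}^*\mathcal{B}=B$, and then only asserts that the SVD follows by combining the SVDs of the two factors, which is your polar-isometry variant. Your $XY\leftrightarrow YX$ argument, which passes from the nonzero spectrum of $\overline{T}^*\overline{T}$ to that of $AB$ and then to $CC^\top$, supplies the detail the sketch omits and covers singular $A$ or $B$ without any partial-isometry bookkeeping.
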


\emph{Proof sketch.} The operator $\overline{T}$ maps $f \in L^2(\mred{B^d_+})$ to:
\[
  (\overline{T} f)(\mgreen{\vec{g}}) = \sum_{k=1}^d \alpha_k(\mgreen{\vec{g}}) \langle \beta_k, f \rangle
\]

This factors as $\overline{T} = \mathcal{A} \mathcal{B}^*$ where $\mathcal{A}: \mathbb{R}^d \to L^2(\mgreen{B^d_+})$ is $\mathcal{A} \mathbf{c} = \sum_k c_k \alpha_k$ and $\mathcal{B}: \mathbb{R}^d \to L^2(\mred{B^d_+})$ is $\mathcal{B} \mathbf{c} = \sum_k c_k \beta_k$.

The operators $\mathcal{A}^* \mathcal{A}$ and $\mathcal{B}^* \mathcal{B}$ are represented by the Gram matrices $A$ and $B$ respectively. The SVD of $\overline{T}$ follows from the SVD of $\mathcal{A}$ and $\mathcal{B}$ combined. \hfill $\square$

\subsubsection{Gaussian intensity example}

When $\mbgreen{\rho_G}$ and $\mbred{\rho_R}$ are truncated Gaussians with means $\boldsymbol{\mu}_G, \boldsymbol{\mu}_R$ and covariance matrices $\Sigma_G, \Sigma_R$, the Gram matrices involve Gaussian moment integrals.

For a scalar Gaussian $\rho(x) \propto \exp(-(x - \mu)^2 / (2\sigma^2))$ on $\mathbb{R}$ (ignoring truncation for simplicity):
\[
  \int x^2 [\rho(x)]^2 \, dx \propto \sigma \cdot (\mu^2 + \sigma^2 / 2)
\]

The Gram matrix entries are weighted second moments of the intensity, capturing both the centering ($\mu$) and spread ($\sigma$) of the population in each coordinate.

For isotropic Gaussians centered at the origin with $\sigma^2 I$ covariance, the Gram matrices are proportional to identity: $A \propto \sigma^2 I$, $B \propto \sigma^2 I$. The singular values are then all equal, reflecting the rotational symmetry.

For anisotropic or off-center Gaussians, the Gram matrices develop structure, and singular values separate. The dominant singular value corresponds to the direction of maximal intensity-weighted coordinate product.

\subsubsection{Hilbert--Schmidt norm}

The Hilbert--Schmidt norm of $\overline{T}$ satisfies:
\[
  \|\overline{T}\|_{HS}^2 = \sum_{n=1}^d \sigma_n^2 = \operatorname{tr}(AB)
\]

This provides a scalar measure of total interaction intensity, computable directly from the Gram matrices without explicit Singular Value decomposition.

\subsubsection{Connection to total bound heat}

The total bound heat is:
\[
  \overline{\mathcal{H}}(B^d_+, B^d_+) = \int \int \overline{h}(\mgreen{\vec{g}}, \mred{\vec{r}}) \, d\mgreen{\vec{g}} \, d\mred{\vec{r}} = \sum_k \langle \alpha_k, \mathds{1} \rangle \langle \beta_k, \mathds{1} \rangle = \mbgreen{\mu_G} \cdot \mbred{\mu_R}
\]

where $\mbgreen{\mu_G} = \int \mgreen{\vec{g}} \mbgreen{\rho_G} \, d\mgreen{\vec{g}}$ and $\mbred{\mu_R} = \int \mred{\vec{r}} \mbred{\rho_R} \, d\mred{\vec{r}}$ are the (unnormalized) mean positions. This is the sum over coordinates of products of intensity-weighted means, the ``bulk'' interaction propensity.

\subsection{The desire operator}

\subsubsection{Spectral decomposition of the desire operator}\label{appendix:spectral-desire}

\begin{proposition}[Spectral structure of Desire]\label{prop:spectral-consistency}
  Let $\tilde{D}: L^2(\mred{B^d_+}, \tilde{\rho}_R) \to L^2(\mgreen{B^d_+}, \tilde{\rho}_G)$ be the desire operator.
  The squared singular values $\sigma_k(\tilde{D})^2$ are exactly the eigenvalues of the matrix product $\Sigma_G \Sigma_R$, where $\Sigma_G$ and $\Sigma_R$ are the second moment matrices of the normalized intensities.
\end{proposition}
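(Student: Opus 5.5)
We factor the desire operator through $\mathbb{R}^d$, exactly as in the proof sketch of Theorem~\ref{thm:svd}, but now in the weighted spaces. Define $\mathcal{G}: \mathbb{R}^d \to L^2(\mgreen{B^d_+}, \tilde{\rho}_G)$ by $(\mathcal{G}\mathbf{c})(\mgreen{\vec{g}}) = \mgreen{\vec{g}} \cdot \mathbf{c}$. Define $\mathcal{R}: \mathbb{R}^d \to L^2(\mred{B^d_+}, \tilde{\rho}_R)$ by $(\mathcal{R}\mathbf{c})(\mred{\vec{r}}) = \mred{\vec{r}} \cdot \mathbf{c}$. Both are bounded, because the coordinates are bounded by $1$ on $B^d_+$ and $\tilde{\rho}_G,\tilde{\rho}_R$ are probability densities.

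The adjoint of $\mathcal{R}$ is $\mathcal{R}^* f = \int \mred{\vec{r}}\, f(\mred{\vec{r}})\, \tilde{\rho}_R(\mred{\vec{r}})\, d\mred{\vec{r}} \in \mathbb{R}^d$. The definition of $\tilde{D}$ then reads
\[
(\tilde{D} f)(\mgreen{\vec{g}}) = \mgreen{\vec{g}} \cdot \mathcal{R}^* f, \qquad\text{so}\qquad \tilde{D} = \mathcal{G}\mathcal{R}^*.
\]
A direct computation of the Gramians gives $\mathcal{G}^*\mathcal{G} = \Sigma_G$ and $\mathcal{R}^*\mathcal{R} = \Sigma_R$. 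The entries are $\langle \mgreen{g}_j, \mgreen{g}_k\rangle_{\tilde{\rho}_G}$ and $\langle \mred{r}_j, \mred{r}_k\rangle_{\tilde{\rho}_R}$, which are the definitions of the desire Gram matrices. This factorization immediately gives rank at most $d$.

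Next we compute the squared singular values. They are the nonzero eigenvalues of
\[
\tilde{D}^*\tilde{D} = \mathcal{R}\,\mathcal{G}^*\mathcal{G}\,\mathcal{R}^* = \mathcal{R}\,\Sigma_G\,\mathcal{R}^*,
\]
a finite-rank self-adjoint positive operator on $L^2(\tilde{\rho}_R)$. We use the standard fact that $XY$ and $YX$ share nonzero eigenvalues with multiplicities. Taking $X = \mathcal{R}$ and $Y = \Sigma_G\mathcal{R}^*$, the nonzero spectrum of $\mathcal{R}\Sigma_G\mathcal{R}^*$ equals that of the $d\times d$ matrix $\Sigma_G\mathcal{R}^*\mathcal{R} = \Sigma_G\Sigma_R$. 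This fact holds for operators between a Hilbert space and $\mathbb{R}^d$; one proves it via the map $v \mapsto Yv$ on eigenvectors.

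Real nonnegativity follows from similarity: $\Sigma_G\Sigma_R$ is similar to the symmetric PSD matrix $\Sigma_R^{1/2}\Sigma_G\Sigma_R^{1/2}$, with nonzero eigenvalues shared again by the $XY/YX$ argument. This justifies the ordering $\sigma_k(\tilde{D}) = \sqrt{\lambda_k(\Sigma_G\Sigma_R)}$.

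The main point needing care is the matching of zero eigenvalues and multiplicities. If $\Sigma_G$ or $\Sigma_R$ is singular, then $\Sigma_G\Sigma_R$ has zero eigenvalues. These correspond to $\sigma_k = 0$ for $k \le d$, which is consistent if we pad the list of singular values with zeros up to $d$. We would state the claim as equality of the multisets of the first $d$ singular values with the square roots of the $d$ eigenvalues, counted with algebraic multiplicity. Because $\Sigma_G\Sigma_R$ is diagonalizable, being similar to a symmetric matrix when one factor is invertible, algebraic and geometric multiplicities agree for the nonzero part, which is all the $XY/YX$ correspondence needs. Otherwise the argument is routine linear algebra.
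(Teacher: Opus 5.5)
Your proof is correct. It rests on the same reduction as the paper's proof: the range of $\tilde D$ lies in the span of the coordinate functions, so everything collapses onto $\Sigma_G$ and $\Sigma_R$. The packaging and the key lemma differ.

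The paper works with $\tilde D\tilde D^*$ and plugs in the ansatz $u(\vec g)=\vec g\cdot x$. It computes $\tilde D\tilde D^*u=\vec g\cdot(\Sigma_R\Sigma_G x)$ and reads off $\Sigma_R\Sigma_G x=\sigma^2 x$ ``since this must hold for all $\vec g$''. It then gets reality from the similarity with $\Sigma_G^{1/2}\Sigma_R\Sigma_G^{1/2}$, after assuming $\Sigma_G$ invertible. You instead factor $\tilde D=\mathcal G\mathcal R^*$ and apply the $XY$/$YX$ lemma to $\tilde D^*\tilde D=\mathcal R\Sigma_G\mathcal R^*$.

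The paper's route is more hands-on and exhibits the singular functions directly as linear functions of the coordinates. Yours gives both inclusions of the spectra and the multiplicities in one stroke, which the paper does not track. It also never needs the coordinate functions to be linearly independent in $L^2(\tilde\rho_G)$. For genuine densities that independence is automatic: $w^\top\Sigma_G w=\int(\vec g\cdot w)^2\tilde\rho_G\,d\vec g=0$ would force $\tilde\rho_G$ onto a hyperplane. So the paper's coefficient comparison and its ``without loss of generality'' are harmless in the density setting. Your version additionally covers singular intensities such as the Dirac (RDPG) limit. There the paper's step only yields $\Sigma_G(\Sigma_R\Sigma_G x-\sigma^2 x)=0$, which still gives an eigenvector $\Sigma_G x$ of $\Sigma_G\Sigma_R$, but not by the stated reasoning.

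One small repair. You justify diagonalizability of $\Sigma_G\Sigma_R$ on its nonzero part only when one factor is invertible, yet you invoke it in general. You do not actually need it. $\tilde D^*\tilde D$ is self-adjoint, so its algebraic and geometric multiplicities agree. The $XY$/$YX$ correspondence preserves \emph{algebraic} multiplicities of nonzero eigenvalues. To see this, note that generalized eigenvectors of $\mathcal R\Sigma_G\mathcal R^*$ for nonzero eigenvalues lie in the finite-dimensional subspace $\operatorname{ran}\mathcal R$. On that subspace Sylvester's identity $\lambda^{d}\det(\lambda-XY)=\lambda^{m}\det(\lambda-YX)$ applies. Hence the two padded lists of length $d$ match with no diagonalizability claim at all.
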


\begin{proof}
  The singular values of $\tilde{D}$ are the square roots of the eigenvalues of the self-adjoint composition $\tilde{D} \tilde{D}^*$. We derive the action of this composition explicitly.

  \textbf{Step 1: The Adjoint.}
  The adjoint operator $\tilde{D}^*: L^2(\mgreen{B^d_+}, \tilde{\rho}_G) \to L^2(\mred{B^d_+}, \tilde{\rho}_R)$ is defined by the duality condition $\langle \tilde{D} f, u \rangle_{\tilde{\rho}_G} = \langle f, \tilde{D}^* u \rangle_{\tilde{\rho}_R}$. Expanding the weighted inner products reveals that the adjoint has the symmetric form:
  \[
    (\tilde{D}^* u)(\mred{\vec{r}}) = \int_{\mgreen{B^d_+}} (\mred{\vec{r}} \cdot \mgreen{\vec{g}}) u(\mgreen{\vec{g}}) \tilde{\rho}_G(\mgreen{\vec{g}}) \, d\mgreen{\vec{g}}
  \]

  \textbf{Step 2: Finite Rank Subspace.}
  Notice that for any input function $f$, the output $(\tilde{D} f)(\mgreen{\vec{g}})$ is a linear projection onto the coordinate functions of $\mgreen{\vec{g}}$. Specifically:
  \[
    (\tilde{D} f)(\mgreen{\vec{g}}) = \mgreen{\vec{g}} \cdot w_f, \quad \text{where } w_f = \int_{\mred{B^d_+}} \mred{\vec{r}} f(\mred{\vec{r}}) \tilde{\rho}_R(\mred{\vec{r}}) \, d\mred{\vec{r}}
  \]
  Thus, the image of $\tilde{D}$ lies in the finite-dimensional subspace spanned by the coordinate maps $\{g_1, \ldots, g_d\}$. Any eigenfunction $u$ of $\tilde{D} \tilde{D}^*$ corresponding to a non-zero eigenvalue must lie in this subspace. We can therefore write the eigenfunction as $u(\mgreen{\vec{g}}) = \mgreen{\vec{g}} \cdot x$ for some vector $x \in \mathbb{R}^d$.

  \textbf{Step 3: Action of the Composition.}
  First, apply the adjoint $\tilde{D}^*$ to the ansatz $u(\mgreen{\vec{g}}) = \mgreen{\vec{g}} \cdot x$:
  \begin{align}
    (\tilde{D}^* u)(\mred{\vec{r}}) &= \int (\mred{\vec{r}} \cdot \mgreen{\vec{g}}) (\mgreen{\vec{g}} \cdot x) \tilde{\rho}_G(\mgreen{\vec{g}}) \, d\mgreen{\vec{g}} \nonumber \\
    &= \mred{\vec{r}} \cdot \left(\int \mgreen{\vec{g}} \mgreen{\vec{g}}^\top \tilde{\rho}_G(\mgreen{\vec{g}}) \, d\mgreen{\vec{g}}\right) x \nonumber \\
    &= \mred{\vec{r}} \cdot (\Sigma_G x) \nonumber
  \end{align}
  Next, apply the forward operator $\tilde{D}$ to this intermediate result $v(\mred{\vec{r}}) = \mred{\vec{r}} \cdot (\Sigma_G x)$:
  \begin{align}
    (\tilde{D} v)(\mgreen{\vec{g}}) &= \int (\mgreen{\vec{g}} \cdot \mred{\vec{r}}) (\mred{\vec{r}} \cdot (\Sigma_G x)) \tilde{\rho}_R(\mred{\vec{r}}) \, d\mred{\vec{r}} \nonumber \\
    &= \mgreen{\vec{g}} \cdot \left(\int \mred{\vec{r}} \mred{\vec{r}}^\top \tilde{\rho}_R(\mred{\vec{r}}) \, d\mred{\vec{r}}\right) (\Sigma_G x) \nonumber \\
    &= \mgreen{\vec{g}} \cdot (\Sigma_R \Sigma_G x) \nonumber
  \end{align}

  \textbf{Step 4: Eigenvalue Equation.}
  The operator eigenvalue equation $\tilde{D} \tilde{D}^* u = \sigma^2 u$ thus becomes the vector equation:
  \[
    \mgreen{\vec{g}} \cdot (\Sigma_R \Sigma_G x) = \sigma^2 (\mgreen{\vec{g}} \cdot x)
  \]
  Since this must hold for all $\mgreen{\vec{g}}$, it implies $\Sigma_R \Sigma_G x = \sigma^2 x$.

  \textbf{Conclusion:} The squared singular values $\sigma^2$ are the eigenvalues of $\Sigma_R \Sigma_G$ (which are identical to those of $\Sigma_G \Sigma_R$). Although this matrix product is generally non-symmetric, it is similar to the symmetric positive semi-definite matrix $\Sigma_G^{1/2} \Sigma_R \Sigma_G^{1/2}$, ensuring that all eigenvalues are real and non-negative.
\end{proof}

\textbf{Reality of singular values.}
The matrix product $M = \Sigma_R \Sigma_G$ appearing in the eigenvalue equation is generally not symmetric. To see why its eigenvalues are nevertheless real and non-negative, assume without loss of generality that $\Sigma_G$ is positive definite (i.e., the intensity $\tilde{\rho}_G$ has full-rank support).

Consider the similarity transformation using the square root matrix $\Sigma_G^{1/2}$:
\begin{align}
  S &= \Sigma_G^{1/2} M \Sigma_G^{-1/2} \nonumber \\
  &= \Sigma_G^{1/2} (\Sigma_R \Sigma_G) \Sigma_G^{-1/2} \nonumber \\
  &= \Sigma_G^{1/2} \Sigma_R \Sigma_G^{1/2} \nonumber
\end{align}

The resulting matrix $S$ is symmetric (as a product of symmetric matrices in a sandwich form). Furthermore, it is positive semi-definite: for any vector $v$, letting $u = \Sigma_G^{1/2} v$, we have:
\[
  v^\top S v = v^\top \Sigma_G^{1/2} \Sigma_R \Sigma_G^{1/2} v = u^\top \Sigma_R u \geq 0
\]
Since similar matrices share the same spectrum, the eigenvalues of $\Sigma_R \Sigma_G$ are identical to those of $S$, ensuring they are real and non-negative.

\subsubsection{Spectral consistency of the desire operator}

\begin{proposition}[Spectral Consistency for Multiple Independent IDPGs]\label{prop:multi-graph-consistency}
  Assume bounded latent support and regularity conditions ensuring local Lipschitz dependence of singular values on the empirical second-moment matrices. Let $G_1, \ldots, G_m$ be $m$ independent \emph{non-empty} directed graphs generated by the following truncated-size perennial protocol with intensity $\mblue{\rho}$ and total intensity $\mblue{\Lambda}$: for each graph $l$,
  \begin{enumerate}
    \item The number of nodes is Poisson conditioned to be positive: $N_l \sim \text{Poisson}(\mblue{\Lambda})$ given $N_l \geq 1$.
    \item Latent positions $(\mgreen{\vec{g}_i}, \mred{\vec{r}_i})$ are i.i.d.\ draws from $\tilde{\rho}$.
    \item Directed edges form independently: $A^{(l)}_{ij} \sim \text{Bernoulli}(\mgreen{\vec{g}_i} \cdot \mred{\vec{r}_j})$.
  \end{enumerate}

  Let $\overline{\sigma}_k = \frac{1}{m} \sum_{l=1}^m \sigma_k(A^{(l)}) / N_l$ be the averaged spectral estimator. Then:
  \[
    |\overline{\sigma}_k - \sigma_k(\tilde{D})| = \mathcal{O}\!\left(\frac{1}{\sqrt{\mblue{\Lambda}}}\right) + \mathcal{O}_p\!\left(\frac{1}{\sqrt{m \mblue{\Lambda}}}\right)
  \]
\end{proposition}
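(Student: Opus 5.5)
The approach is to decompose each per-graph error exactly as in the proof of Theorem~\ref{thm:spectral-consistency-adjacency}, but to track constants uniformly in $N_l$ so that averaging over $l$ can be controlled. For each graph $l$ write
\[
\frac{\sigma_k(A^{(l)})}{N_l} - \sigma_k(\tilde{D}) = \underbrace{\frac{\sigma_k(A^{(l)}) - \sigma_k(P^{(l)})}{N_l}}_{X_l} + \underbrace{\frac{\sigma_k(P^{(l)})}{N_l} - \sigma_k(\tilde{D})}_{Y_l},
\]
where $P^{(l)}_{ij} = \mgreen{\vec{g}_i}\cdot\mred{\vec{r}_j}$. Then $\overline{\sigma}_k - \sigma_k(\tilde{D}) = \frac1m\sum_l (X_l+Y_l)$. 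We split this average into its mean (the bias) and its centered fluctuation.

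\textbf{The discretization term $Y_l$.} We have $P^{(l)} = G_l R_l^\top$ with $G_l, R_l \in \mathbb{R}^{N_l\times d}$. The nonzero squared singular values of $P^{(l)}/N_l$ are the eigenvalues of $\hat\Sigma_G^{(l)}\hat\Sigma_R^{(l)}$, where $\hat\Sigma_G^{(l)} = G_l^\top G_l/N_l$ and $\hat\Sigma_R^{(l)} = R_l^\top R_l/N_l$ are the empirical second-moment matrices. This is the same computation as in Proposition~\ref{prop:spectral-consistency}, with the empirical measure in place of $\tilde\rho$. The entries are bounded because the support is bounded ($B^d_+$), so conditional on $N_l$ a matrix Hoeffding or plain Chebyshev bound gives $\|\hat\Sigma^{(l)} - \Sigma\| = \mathcal{O}_p(N_l^{-1/2})$, with mean-square error at most $C/N_l$.

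The assumed local Lipschitz dependence of $\sqrt{\lambda_k(\Sigma_G\Sigma_R)}$ on $(\Sigma_G,\Sigma_R)$ transfers this to $|Y_l|$. Two facts then finish the term:
\begin{itemize}
\item $\E|Y_l| \le C\,\E[N_l^{-1/2}\mid N_l\ge1] = \mathcal{O}(\Lambda^{-1/2})$. The truncated Poisson satisfies $\E[N_l^{-1/2}\mid N_l\geq1] = \mathcal{O}(\Lambda^{-1/2})$, which follows from Poisson concentration $P(N_l<\Lambda/2)\le e^{-c\Lambda}$.
\item $\operatorname{Var}(Y_l) = \mathcal{O}(1/\Lambda)$.
\end{itemize}

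\textbf{The Bernoulli term $X_l$.} Weyl's inequality gives $|X_l| \le \|A^{(l)}-P^{(l)}\|_{\mathrm{op}}/N_l$. Conditional on positions, $A^{(l)}-P^{(l)}$ has independent centered entries bounded by $1$, so a standard bound (e.g. Latała's, or the matrix Bernstein bound cited from \cite{athreya2018statistical}) gives $\E\|A^{(l)}-P^{(l)}\|_{\mathrm{op}} \le C\sqrt{N_l}$. Hence $\E|X_l| = \mathcal{O}(\Lambda^{-1/2})$, and $\operatorname{Var}(X_l)\le \E X_l^2 = \mathcal{O}(1/\Lambda)$ by the same Poisson-truncation estimate. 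Boundedness of all quantities ($\sigma_k(A)/N\le1$) handles the small-$N_l$ event, which has probability $e^{-c\Lambda}$.

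\textbf{Combining.} Put $Z_l = X_l+Y_l$. The $Z_l$ are i.i.d. across graphs with $|\E Z_l| = \mathcal{O}(\Lambda^{-1/2})$ and $\operatorname{Var}(Z_l) = \mathcal{O}(1/\Lambda)$. Chebyshev applied to $\frac1m\sum_l(Z_l-\E Z_l)$ then gives the $\mathcal{O}_p((m\Lambda)^{-1/2})$ fluctuation, and $\E Z_l$ is the $\mathcal{O}(\Lambda^{-1/2})$ bias. Summing the two yields the claim.

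\textbf{Main obstacle.} The hardest step is the Lipschitz transfer in the $Y_l$ step. Singular values themselves are always Lipschitz in operator norm; the difficulty is that we need it through the map $(\Sigma_G,\Sigma_R)\mapsto\sqrt{\lambda_k(\Sigma_G\Sigma_R)}$, and the square root is non-Lipschitz near zero eigenvalues. I would sidestep this by working with the symmetric form $\Sigma_G^{1/2}\Sigma_R\Sigma_G^{1/2}$. Alternatively, one can bound $|\sigma_k(P/N) - \sigma_k(\tilde D)|$ directly via Weyl for operators, comparing the operator $\frac1N P$ embedded on empirical $L^2$ to $\tilde D$ through their $d\times d$ representations $\hat\Sigma_G^{1/2}\hat\Sigma_R^{1/2}$. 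This is where the stated regularity hypothesis is invoked.
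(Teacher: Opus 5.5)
Your proposal is correct and takes essentially the same route as the paper's proof. The paper also splits the error into a bias term and a fluctuation term. The bias is controlled by the $N_l^{-1/2}$ convergence of the empirical second-moment matrices (via the assumed Lipschitz dependence) together with Weyl's inequality for the Bernoulli noise, then averaged over the truncated Poisson law; the fluctuation comes from an $\mathcal{O}(1/\Lambda)$ per-graph variance reduced by averaging $m$ independent copies. Your version differs only in making explicit steps the paper leaves implicit: the bound $\E[N_l^{-1/2}\mid N_l\ge 1]=\mathcal{O}(\Lambda^{-1/2})$, the variance estimates for each term, and the Chebyshev step for the average.
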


\begin{proof}
  Let $\hat{\sigma}_k^{(l)} = \sigma_k(A^{(l)}) / N_l$. We decompose the error into bias and fluctuation:
  \[
    |\overline{\sigma}_k - \sigma_k(\tilde{D})| \leq \underbrace{|\mathbb{E}[\hat{\sigma}_k^{(l)}] - \sigma_k(\tilde{D})|}_{\text{Bias}} + \underbrace{\left|\frac{1}{m} \sum_{l=1}^m \hat{\sigma}_k^{(l)} - \mathbb{E}[\hat{\sigma}_k^{(l)}]\right|}_{\text{Fluctuation}}
  \]

  \textbf{Step 1: The Bias.}
  The bias measures how far the expected finite-graph estimator is from the continuous operator limit.
  As established in Proposition~\ref{prop:spectral-consistency}, the true singular values are determined by the population second moments $\Sigma_G$ and $\Sigma_R$.
  Similarly, the singular values of the probability matrix $P^{(l)} / N_l$ are determined by the \emph{sample} second moments (e.g., $\hat{\Sigma}_G = \frac{1}{N_l} \sum_{i=1}^{N_l} \mgreen{\vec{g}_i} \mgreen{\vec{g}_i}^\top$).

  Since $\hat{\Sigma}_G$ is an average of $N_l$ i.i.d.\ bounded rank-1 matrices, the Central Limit Theorem guarantees it converges to $\Sigma_G$ with error scaling as $1/\sqrt{N_l}$. Combining this with the Bernoulli noise (which also scales as $1/\sqrt{N_l}$), the conditional bias is:
  \[
    |\mathbb{E}[\hat{\sigma}_k^{(l)} \mid N_l] - \sigma_k(\tilde{D})| \leq \frac{C}{\sqrt{N_l}}
  \]

  Averaging this over the truncated Poisson law (where $N_l$ is Poisson conditioned on $N_l \geq 1$, still centered at scale $\mblue{\Lambda}$):
  \[
    \text{Bias} = \mathcal{O}\!\left(\frac{1}{\sqrt{\mblue{\Lambda}}}\right)
  \]

  \textbf{Step 2: The Fluctuation (Averaging Graphs).}
  The estimators $\hat{\sigma}_k^{(1)}, \ldots, \hat{\sigma}_k^{(m)}$ are $m$ independent random variables.
  The variance of a single estimator scales as $\operatorname{Var}(\hat{\sigma}_k^{(l)}) = \mathcal{O}(1 / \mblue{\Lambda})$.
  Averaging $m$ such copies reduces the standard deviation by $1/\sqrt{m}$:
  \[
    \text{Fluctuation} = \mathcal{O}_p\!\left(\sqrt{\frac{\operatorname{Var}(\hat{\sigma}_k^{(1)})}{m}}\right) = \mathcal{O}_p\!\left(\frac{1}{\sqrt{m \mblue{\Lambda}}}\right)
  \]

  \textbf{Conclusion:}
  The error is dominated by the bias (finite $\mblue{\Lambda}$) unless the system is dense. Increasing $m$ reduces the fluctuation but cannot fix the resolution limit imposed by $\mblue{\Lambda}$.
\end{proof}

The theoretical predictions of this proposition, together with Theorem~\ref{thm:spectral-consistency-adjacency}, are verified numerically in Figure~\ref{fig:desire-spectral}.

\clearpage

\subsection{A review of the notation adopted}

Throughout the paper we tried to adopt a consistent notation, although we sometime abused it or used a symbol with a specific meaning in a particular section, where it was clear by the context what we meant. You can find the notation collected in the following table.

\begin{longtable}{p{0.22\textwidth} p{0.65\textwidth}}
\hline
\textbf{Notation} & \textbf{Meaning} \\
\hline
\endfirsthead
\hline
\textbf{Notation} & \textbf{Meaning} \\
\hline
\endhead
\hline
\endfoot
$G$ & A graph \\
$V$ & The set of nodes of a graph \\
$E$ & The set of edges of a graph: (ordered) pairs of nodes \\
$i$, $j$ & Two nodes in a graph, or individuals \\
$s$, $t$ & Generally, the source and target of an interaction, connection, edge, \ldots \\
$i \to j$ & The edge given by the ordered pair $(i,j)$ \\
$B^d_+$ & The slice of the $d$-dimensional ball with norm 1, centred in the origin, and having all positive coordinates \\
$\mgreen{G}$, $\mred{R}$ & Respectively the position spaces that define an individual propensity to give and receive connections \\
$\mgreen{B^d_+}$, $\mred{B^d_+}$ & A canonical choice for an absolute coordinate version of the giving and receiving spaces \\
$\mgreen{\vec{g}_i}$, $\mred{\vec{r}_i}$ & The position of an individual $i$, respectively in the \emph{giving} and \emph{receiving} spaces \\
$\mgreen{c_G}$, $\mred{c_R}$ & Marginal total intensities (total mass in green/red spaces) \\
$\mbgreen{\mu_G}$, $\mbred{\mu_R}$ & Intensity-weighted mean positions \\
$\mblue{\Omega}$ & The full position space, that is $\mgreen{G} \times \mred{R}$, canonically $\mblue{\Omega} = \mgreen{B^d_+} \times \mred{B^d_+}$ \\
$\mblue{\mathcal{E}}$ & The space of edges, that is $\mblue{\mathcal{E}} = \mblue{\Omega} \times \mblue{\Omega}$ \\
$\mblue{\rho}$ & The intensity on $\mblue{\Omega}$ \\
$\mblue{\rho_{\mblue{\mathcal{E}}}}$ & The intensity on $\mblue{\mathcal{E}}$ \\
$\mblue{\Lambda}$ & Total intensity over $\mblue{\Omega}$ \\
$\mbblue{\tilde{\rho}}$ & The normalized probability measure on the latent space ($\mbblue{\tilde{\rho}} = \mbblue{\rho} / \mblue{\Lambda}$) \\
Interaction & The precursor of a connection \\
Connection & An established interaction \\
$K(s,t)$ & The affinity kernel, giving the probability of connection between interacting individuals $s$ and $t$ \\
$\mathbf{R}$ & Realization rule determining how intensity becomes interactions: $\mathbf{R}_\infty$ (perennial), $\mathbf{R}_0$ (ephemeral), $\mathbf{R}_\eta$ (intermediate) \\
$\eta$; $W$; $u$ & Mean lifetime of an individual (governs the transition between regimes); Observation window duration; The ratio $W / \eta$ \\
$p_{\text{overlap}}$ & Probability that two independent lifetimes overlap (function of $\eta$ and $W$) \\
$W(u, v)$ & Digraphon kernel function $W: [0,1]^2 \to [0,1]$ \\
$\phi$ & Measure-preserving map $\phi: [0,1] \to \mblue{\Omega}$ from digraphons' label space to position space \\
$h(s, t)$ & Raw heat density: $h = K(s,t) \cdot \mbblue{\rho}(s) \cdot \mbblue{\rho}(t)$ \\
$\mathcal{H}(A, B)$ & Raw heat map: expected edges from $A$ to $B$ under perennial sampling \\
$\overline{\mathcal{H}}$, $\overline{h}$ & Bound heat map and density (projection onto active coordinates $\mgreen{G} \times \mred{R}$) \\
$\overline{T}$ & Bound Heat Operator mapping $L^2(\mred{R})$ to $L^2(\mgreen{G})$ \\
$\tilde{D}$, $\tilde{D}^*$ & Desire Operator: integral operator with kernel $K$ weighted by population densities; and its adjoint \\
$\Sigma_G$, $\Sigma_R$; $\hat{\Sigma}_G$ & Population second moment matrices (Gramians of the desire operator); Sample second moment matrix derived from observed nodes \\
$\mathcal{L}$ & Continuous Laplacian operator defined by the heat map \\
$\sigma_k$; $\overline{\sigma}_k$ & Singular values; Averaged singular value estimator across $m$ independent graphs \\
\hline
\end{longtable}

\end{document}